\def\eqref#1{equation~\ref{#1}}
\def\1{\bm{1}}
\def\ra{{\textnormal{a}}}
\def\rx{{\textnormal{x}}}
\def\rva{{\mathbf{a}}}
\def\erva{{\textnormal{a}}}
\def\ervx{{\textnormal{x}}}
\def\rmA{{\mathbf{A}}}
\def\vmu{{\bm{\mu}}}
\def\vtheta{{\bm{\theta}}}
\def\va{{\bm{a}}}
\def\ve{{\bm{e}}}
\def\vx{{\bm{x}}}
\def\eva{{a}}
\def\mA{{\bm{A}}}
\def\mH{{\bm{H}}}
\def\mI{{\bm{I}}}
\def\mJ{{\bm{J}}}
\def\mX{{\bm{X}}}
\def\mSigma{{\bm{\Sigma}}}
\DeclareMathAlphabet{\mathsfit}{\encodingdefault}{\sfdefault}{m}{sl}
\SetMathAlphabet{\mathsfit}{bold}{\encodingdefault}{\sfdefault}{bx}{n}
\newcommand{\tens}[1]{\bm{\mathsfit{#1}}}
\def\tA{{\tens{A}}}
\def\tX{{\tens{X}}}
\def\gG{{\mathcal{G}}}
\def\sA{{\mathbb{A}}}
\def\sB{{\mathbb{B}}}
\def\sS{{\mathbb{S}}}
\def\emA{{A}}
\newcommand{\etens}[1]{\mathsfit{#1}}
\def\etA{{\etens{A}}}
\newcommand{\E}{\mathbb{E}}
\newcommand{\R}{\mathbb{R}}
\newcommand{\KL}{D_{\mathrm{KL}}}
\newcommand{\Var}{\mathrm{Var}}
\newcommand{\Cov}{\mathrm{Cov}}
\newcommand{\normltwo}{L^2}
\newcommand{\normlp}{L^p}
\newcommand{\parents}{Pa} 
\definecolor{mydarkblue}{rgb}{0,0.08,0.45} 
\newlength{\figurewidth}
\newlength{\figureheight}
\def\y{\mathbf{y}}
\def\Q{\mathbf{Q}}
\def\R{\mathbf{R}}
\def\X{\mathbf{X}}
\def\Y{\mathbf{Y}}
\def\E{\mathbf{E}}
\def\P{\mathbf{P}}
\def\p{\mathbf{p}}
\def\N{\mathcal{N}}
\def\U{\mathbf{U}}
\def\enc{\boldsymbol{\varphi}}
\definecolor{response}{rgb}{0.0, 0.5, 0.5}
\definecolor{applegreen}{rgb}{0.55, 0.71, 0.0}
\definecolor{cadmiumgreen}{rgb}{0.0, 0.42, 0.24}
\definecolor{lightgreen}{rgb}{0.7,0.95,0.7} 
\definecolor{ao(english)}{rgb}{0.0, 0.5, 0.0}
\renewcommand{\mid}{\,|\,}
\newtheorem{theorem}{Theorem}
\newtheorem{lemma}{Lemma}
\crefname{section}{Sec.}{Secs.}
\crefname{proposition}{Prop.}{Props.}
\crefname{lemma}{Lem.}{Lems.}
\crefname{model}{Mod.}{Mods.}
\crefname{appendix}{App.}{Apps.}
\crefname{algorithm}{Alg.}{Algs.}
\crefname{prop}{Prop.}{Props.}
\algnewcommand{\LineComment}[1]{\State {\color{gray}\(\triangleright\) #1}}
\newlength{\trianglerightwidth}
\algnewcommand{\LineCommentCont}[1]{\Statex \hskip\ALG@thistlm%
  \parbox[t]{\dimexpr\linewidth-\ALG@thistlm}{\hangindent=\trianglerightwidth \hangafter=1 \strut {\color{gray}$\triangleright$ #1}\strut}}
\renewcommand{\paragraph}[1]{\noindent\textbf{#1}~~}
\newcommand{\Aligning}{\textcolor{red}{A}\textcolor{orange}{l}\textcolor{yellow!50!black}{i}\textcolor{blue}{g}\textcolor{blue!50!black}{n}\textcolor{violet}{i}\textcolor{red}{n}\textcolor{orange}{g}}
\title{Equivariant Denoisers Cannot Copy Graphs: \\ Align Your Graph Diffusion Models}
\author{Najwa Laabid$^{1}$\thanks{Equal contribution}~~, Severi Rissanen$^{1*}$, Markus Heinonen$^{1}$, Arno Solin$^{1}$, Vikas Garg$^{1,2}$ \\
$^{1}$Department of Computer Science, Aalto University \\
$^{2}$YaiYai Ltd\\
\texttt{\{najwa.laabid, severi.rissanen, markus.o.heinonen\}@aalto.fi},\\
\texttt{arno.solin@aalto.fi}, \texttt{vgarg@csail.mit.edu}}
\begin{document}

\maketitle

\begin{abstract}

Graph diffusion models, dominant in graph generative modeling, remain underexplored for graph-to-graph translation tasks like chemical reaction prediction. We demonstrate that standard permutation equivariant denoisers face fundamental limitations in these tasks due to their inability to break symmetries in noisy inputs. To address this, we propose \emph{aligning} input and target graphs to break input symmetries while preserving permutation equivariance in non-matching graph portions. Using retrosynthesis (i.e., the task of predicting precursors for synthesis of a given target molecule) as our application domain, we show how alignment dramatically improves discrete diffusion model performance from $5$\% to a SOTA-matching $54.7$\% top-1 accuracy. \looseness-1 Code is available at \url{https://github.com/Aalto-QuML/DiffAlign}. 
\end{abstract}

\def\ref{\textbf{\color{blue} Use CREF instead of REF.}}

\setlength{\columnsep}{8pt}
\setlength{\intextsep}{6pt}
\begin{wrapfigure}{r}{.45\textwidth}
  \vspace*{-3em}
  \centering\scriptsize
  \resizebox{.45\textwidth}{!}{
  \begin{tikzpicture}

    \tikzstyle{label}=[font=\scriptsize]  
    \tikzstyle{arr}=[->,-latex,line width=1pt]  
    
    \node[circle,fill=black!2,minimum width=1.4cm] at (4,0) {};
    \node[circle,fill=black!2,minimum width=1.4cm] at (0,0) {};
    \node[circle,fill=black!2,minimum width=1.4cm] at (2,2) {};
    \node[circle,fill=black!5,minimum width=1.2cm] at (4,0) {};
    \node[circle,fill=black!5,minimum width=1.2cm] at (0,0) {};
    \node[circle,fill=black!5,minimum width=1.2cm] at (2,2) {};   
    
    \foreach \r/\c [count=\i] in {3pt/red,6pt/orange,9pt/yellow,12pt/green,15pt/blue,18pt/blue!50!black,21pt/violet} {
      \draw[line width=3pt,\c,opacity=.1] (3.6cm+\r,0) arc (0:180:1.6cm+\r) ;
    }

    \node[circle,fill=black!10,minimum width=1cm] at (4,0) {};
    \node[circle,fill=black!10,minimum width=1cm] at (0,0) {};
    \node[circle,fill=black!10,minimum width=1cm] at (2,2) {};

    \node[] (im0) at (0,0)
      {\includegraphics[scale=.5,trim=0 0 640 0,clip]{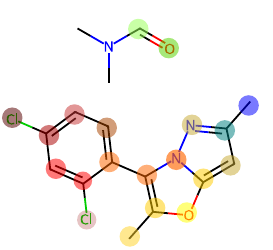}};
    \node[] (im0) at (4cm,0)
      {\includegraphics[scale=.5,trim=640 0 0 0,clip]{figures/forward-nolabels}};
    \node[] (im0) at (2cm,2cm)
      {\includegraphics[scale=.5,trim=500 0 150 0,clip]{figures/forward-nolabels}};
    
    \def\centerarc[#1](#2)(#3:#4:#5)
    { \draw[#1] ($(#2)+({#5*cos(#3)},{#5*sin(#3)})$) arc (#3:#4:#5); }
    
    \centerarc[<->,black!80](2,0)(25:55:2.6)
    \centerarc[<->,black!80](2,0)(115:145:2.6)
        
    \node[label] at (0,-1.3) {\bf target};
    \node[label] at (4,-1.3) {\bf input};   
    \node[label,align=center,text width=2cm] at (-.2,2.75) { {\bf\Aligning} the condition and sample using matching node identifiers};  
    
  \end{tikzpicture}}
  \footnotesize
  \textbf{Aligned permutation equivariance}\\[-4pt]
  \caption{Nodes in the input and output graphs are given the same identifiers to enforce alignment, while the model is free to remain permutation equivariant w.r.t to the unmatched nodes.}
  \vspace*{-9pt}
  \label{fig:alignment}
\end{wrapfigure}
\section{Introduction}
Graphs appear ubiquitously across domains from knowledge representation to drug discovery \citep{Ingraham2019, knowledgeGraphs,EDMG3, verma2023abode}. Graph-to-graph translation serves numerous applications: molecular editing \citep{jinlearning}, where the goal is generating molecular graphs with specific structures or properties; chemical reaction prediction \citep{shi2020graph}, which requires predicting reactant graphs from product graphs and vice-versa; and any task predicting future graph states from initial configurations \citep{rossi2020temporal}. Generative models offer a natural, probabilistic approach to graph-to-graph translation. Graph diffusion models present a particularly promising framework given their excellent performance on graph data \citep{edp-gnn, digress, mercatali2024diffusion} and the widespread success of diffusion models in generative modeling in general. \looseness-1

Permutation equivariance forms a fundamental inductive bias in graph-based machine learning models, including graph diffusion models. This property ensures consistent model outputs when input nodes are reordered or relabeled, eliminating the need for data augmentation with different node permutations. While powerful, permutation equivariance faces fundamental limitations when mapping highly symmetric inputs to less symmetric outputs. We identify the root cause of this issue: when trying to satisfy equivariance with a symmetric input, an optimally-trained neural network can only predict identical distributions for all elements—essentially the marginal distribution of labels in the training data (see \cref{theorem:1}). Our findings align with previous and concurrent research highlighting the limitations imposed by equivariance and symmetries in neural network \citep{lawrence2024improving, xie2024equivariantsymmetrybreakingsets}.


The key question becomes: how can we enable symmetry breaking while preserving the benefits of equivariance? We propose \emph{aligned equivariance} as our solution, visualized in \cref{fig:alignment}. This approach uses node identifiers in both input and target graphs to align corresponding nodes through various methods (see \cref{sec:alignment_types}), relaxing equivariance constraints when identifiers match. This targeted symmetry breaking creates structural anchors that guide the generation process while maintaining permutation equivariance in non-matching subgraphs. Our experimental results shows that a combination of our alignment methods achieves SOTA-matching results on retrosynthesis, the task of predicting precursor molecules for a given target. We summarize our contributions in \cref{fig:contributions}. 
\begin{figure}
\begin{mdframed}[linewidth=0.5pt,roundcorner=10pt]
\centering
\begin{tabular}{ll}
\textbf{\Cref{sec:permutation-eq-identity} The Theoretical Limitations of Equivariant Denoisers} & \\
\quad Copying graphs as a case-study on the limitations of equivariance.  & \cref{fig:theorem_1_visualisation} \\
\quad The optimal permutation equivariant denoiser. & \cref{theorem:1}\\[1em]
\textbf{\Cref{sec:aligned-permutation-eq} Solution: Aligned Equivariant Denoisers} & \\
\quad Relaxing permutation equivariance through alignment. &  \cref{fig:denoiser_pos_enc_vs_no_pos_enc} \\
\quad Aligned denoisers induce aligned permutation invariant distributions. & \cref{theorem:2} \\[1em]
\textbf{\cref{sec:alignment_types} Suitable Alignment Methods} & \\
\quad Skip-connection, positional encodings, input-alignment. & \cref{fig:ways_to_add_atom_mapping} \\[1em]
\textbf{\cref{sec:experiments} Experiments} & \\
\quad Proof of concept through simple graph copying tasks. & \cref{fig:results_on_the_grid}\\
\quad Application: Retrosynthesis. & \cref{tab:50k} \\
\quad Guided-generation for retrosynthesis. & \cref{fig:atom_guidance_small} \\
\quad Ibuprofen synthesis through inpainting. &  \cref{fig:ibuprofen_synthesis} \\
\end{tabular}
\vspace{0.5em}
\end{mdframed}
\caption{Overview of our contributions.}
\label{fig:contributions}
\end{figure}

\section{Preliminaries}

\subsection{Graph-to-Graph Translation}\label{sec:background}
Consider a database of $N_\mathrm{obs}$ graphs $\mathcal{D} = \{ (\X_n, \Y_n, \P^{\Y\to\X}_n) \}_{n=1}^{N_\mathrm{obs}}$, where $\X_n$ represents the target graph, $\Y_n$ the input graph, and $\P^{\Y\to\X}_n$ are matrices defining \emph{node mappings} between the two graphs. The \emph{graph translation} task is: given that the data is sampled from an unknown distribution $p(\X,\Y,\P^{\Y\to\X})$, predict valid targets $\X \sim p(\X\mid\Y)$ for a given input $\Y$. 

We start by formally defining the graph objects $\X$ and $\Y$. To do so, we consider the space of one-hot vectors of dimension $K$ as $\text{vert}(\Delta^{K-1})$, where vert(.) denotes the vertices of the probability simplex $\Delta^{K-1}$ with $K-1$ degrees of freedom.  We then define the target $\X$ as a tuple $(\X^\mathcal{N}, \X^E)$ of a node feature matrix $\X^\mathcal{N}\in (\text{vert}(\Delta^{K_a-1}))^{N_X}$ and an edge feature matrix $\X^E \in (\text{vert}(\Delta^{K_b-1}))^{N_X\times N_X}$, where \ $K_a$ and $K_b$ are node and edge feature dimensions respectively. The input graph $\Y$ is similarly defined as $(\Y^\mathcal{N}, \Y^E)$ with the same node and edge features, but potentially a different number of nodes $N_Y$. The features consist of node labels (with $K_a$ possible values) and edge labels (with $K_b$ possible values), each including an empty value $\perp$ to represent missing nodes or edges.

Node mapping matrices establish correspondence between nodes in the input and target graphs. Formally, we define such matrices as: $\P^{\Y\to\X} \in \{0,1\}^{N_X\times N_Y}$, with $\P^{\Y\to\X}_{i,j} = 1$ if the $i$\textsuperscript{th} node of the target is the atom-mapped counterpart of the $j$\textsuperscript{th} node of the input, and zero otherwise. This node mapping plays a crucial role in breaking the inherent symmetries present in graph diffusion models. Without this correspondence information, a model cannot distinguish which specific nodes in the input should map to which specific nodes in the output, leading to averaged, symmetrical predictions that fail to capture the structural relationships between input and target graphs, as we discuss later.

The product $\P^{\Y\to\X}\Y^\mathcal{N}$ equals $\X^\mathcal{N}$ with the non-mapped nodes zeroed out. Similarly, $\P^{\Y\to\X}\Y^E(\P^{\Y\to\X})^\top$ equals $\X^E$ with edges to non-mapped nodes zeroed out. We expect an inductive bias such that if an edge exists between two mapped nodes in $\X$, there is a high probability of an edge between their corresponding nodes in $\Y$ as well. This reflects the fundamental intuition that graphs are structurally similar for their node-mapped portions.

\looseness-1

\subsection{Discrete diffusion models for conditional graph generation}
\begin{figure*}[tb!]
  \centering\small

  \begin{tikzpicture}

    \tikzstyle{label}=[font=\tiny]  
    \tikzstyle{arr}=[->,-latex,line width=1pt]  
    
    \node[anchor=south west,inner sep=0] (image) at (0,0)
      {\includegraphics[width=\textwidth]{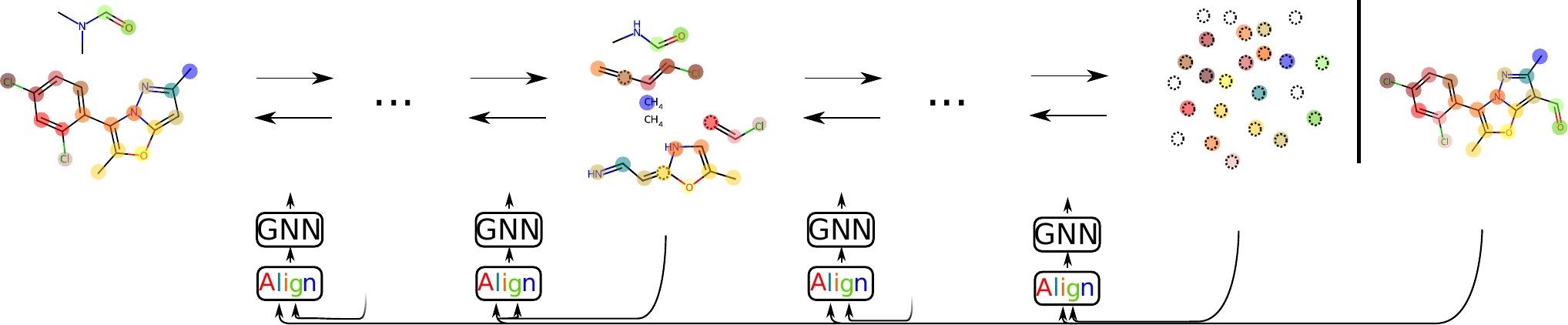}};
    
    \begin{scope}[x={(image.south east)},y={(image.north west)}]
    
      
      \node[label] at (.075,1.02) {Target (reactants)};
      \node[label] at (.945,.88) {Input (product)};    
      \node[label] at (.075,0.335) {$\X_0$};
      \node[label] at (.427,0.335) {$\X_t$};
      \node[label] at (.795,0.335) {$\X_T$};
      \node[label] at (.945,0.34) {$\Y$};
      
      \node (c0) at (0.185,.5) {};
      \node[label] at ($(c0) + (0,.33)$) {\scalebox{.9}{$q(\X_1\!\mid\!\X_0)$}};
      \node[label] at ($(c0) + (0,.035)$) {\scalebox{.9}{$p_\theta(\X_0\!\mid\!\X_1,\!\Y)$}};


      \node (c0) at (0.32,.5) {};
      \node[label] at ($(c0) + (0,.33)$) {\scalebox{.9}{$q(\X_t\!\mid\!\X_{t-1})$}};
      \node[label] at ($(c0) + (0,.035)$) {\scalebox{.9}{$p_\theta(\X_{t-1}\!\mid\!\X_t,\!\Y)$}};

      \node (c0) at (0.54,.5) {};
      \node[label] at ($(c0) + (0,.33)$) {\scalebox{.9}{$q(\X_{t+1}\!\mid\!\X_{t})$}};
      \node[label] at ($(c0) + (0,.035)$) {\scalebox{.9}{$p_\theta(\X_{t}\!\mid\!\X_{t+1},\!\Y)$}};

      \node (c0) at (0.68,.5) {};
      \node[label] at ($(c0) + (0,.33)$) {\scalebox{.9}{$q(\X_{T}\!\mid\!\X_{T-1})$}};
      \node[label] at ($(c0) + (0,.035)$) {\scalebox{.9}{$p_\theta(\X_{T-1}\!\mid\!\X_{T},\!\Y)$}};
      \node[label] at ($(c0) + (0,-.06)$) {\scalebox{.7}{$=\sum_{\X_0}q(\X_0\!\mid\!\X_T,\!\X_0)p_\theta(\X_{0}\!\mid\!\X_{T},\!\Y,\!\P^{\Y\to\X})$}};   
    \end{scope}
  \end{tikzpicture}
  \caption{An overview of a graph translation task using discrete diffusion models, illustrated through chemical reactions. We adopt \emph{absorbing state diffusion} \citep{d3pm}, such that samples from the stationary distribution are made of nodes and edges with the 'none' label. The condition and sample graphs are aligned using node mapping information (highlighted with the matching colours) which identifies pairs of matched nodes across the input and target graphs.}
  \label{fig:overview-diffusion}
  \vspace{-0.1in}
\end{figure*}

We follow the framework of \citet{digress}, which adapts discrete diffusion models \citep{d3pm} to graphs. We present the main constituents of the framework.
We assume a Markov \emph{forward} process
\begin{equation}\textstyle
\label{eq:forward-graph-diffusion}
q(\X_{t+1} \mid \X_t) = \prod^{N_X}_{i=1} q(\X^{\N,i}_{t+1} \mid \X^{\N,i}_t) \prod_{i,j=1}^{N_X} q(\X^{E,ij}_{t+1} \mid \X^{E,ij}_t),
\end{equation}
to diffuse the reactant to noise, and a reverse process
\begin{equation}\textstyle
\label{eq:diff-backward}
    p_{\theta}(\X_{t-1} \mid \X_t, \Y) = \prod_{i=1}^{N_X} p_{\theta}(\X^{\N,i}_{t-1} \mid \X_t, \Y) \prod_{i,j}^{N_X} p_{\theta}(\X^{E,ij}_{t-1} \mid \X_t, \Y),
\end{equation}
defining our generative model. Here, $\theta$ represents neural network parameters. Note that we always have time conditioning implicitly in $p_{\theta}(\X_{t-1} \mid \X_t,\Y,t)$, but we drop $t$ for notational convenience. We will also condition on the node mapping $\P^{\Y\to\X}$ in \cref{sec:alignment_types}, but we will not include it in the notation until then. The full generative distribution is $p_{\theta}(\X_{0:T}\mid\Y)= p(\X_T) \prod_{t=1}^T p_\theta(\X_{t-1}\mid\X_t,\Y)$, where $p(\X_T)$ is a predefined prior such that $p(\X_T)=q(\X_T\mid\X_0)$. Following \citet{hoogeboom2021argmax} and \citet{d3pm}, we use the neural network specifically to predict ground truth labels from noised samples, meaning that the neural network outputs a distribution $\tilde p_{\theta}\big(\X_0  \mid \X_t, \Y\big)$.

The reverse process is then parameterized by \looseness-1
\begin{equation}\textstyle
\label{eq:conditional-node-backward}
    p_{\theta}(\X_{t-1} \mid \X_t, \Y) 
    = \sum_{\X_0} q\big(\X_{t-1} \mid \X_t, \X_0 \big) \tilde p_{\theta}\big(\X_0  \mid \X_t, \Y\big).
\end{equation}
Both $q\big(\X_{t-1} \mid \X_t, \X_0 \big)$ and $\tilde p_{\theta}\big(\X_0  \mid \X_t, \Y\big)$ factorize over dimensions:
\begin{align}
    q\big(\X_{t-1} \mid \X_t, \X_0 \big) &= \prod_i^{N_\X} q\big(\X_{t-1}^{\N,i} \mid \X_t^{\N,i}, \X_0^{\N,i} \big) \prod_{i,j}^{N_\X} q\big(\X_{t-1}^{E,i,j} \mid \X_t^{E,i,j}, \X_0^{E,i,j} \big) \\
    \tilde p_{\theta}\big(\X_0  \mid \X_t, \Y\big) &= \prod_i^{N_\X} \tilde p_{\theta}\big(\X_0^{\N,i}  \mid \X_t, \Y\big) \prod_{i,j}^{N_\X} \tilde p_{\theta}\big(\X_0^{E,i,j}  \mid \X_t, \Y\big)
\end{align}
We write the connection to \cref{eq:diff-backward} explicitly in \cref{app:dgd}. Throughout the paper, we denote the direct output of the neural network as $D_\theta(\X_t, \Y) = (D_\theta(\X_t, \Y)^\N, D_\theta(\X_t, \Y)^E)$, where $D_\theta(\X_t, \Y)^\N\in(\Delta^{K_a - 1})^{N_X}$ and $D_\theta(\X_t, \Y)^E\in(\Delta^{K_b - 1})^{(N_X\times N_X)}$, i.e., we have a probability vector for each node and edge. This implies $\tilde p_{\theta}\big(\X_0  \mid \X_t, \Y\big)$ is a distribution factorized over nodes and edges.

The single-step transition for nodes (and similarly for edges) is defined with a transition matrix $\Q_t^{\N}$: 
\begin{equation}
    q(\X_{t}^{\N,i}\mid\X_{t-1}^{\N,i}) = \textrm{Cat}(\X_{t}^{\N,i};\p=\X_{t-1}^{\N,i}\Q_t^\N).
\end{equation}

In our experiments, we use the absorbing-state formulation from \citet{d3pm}, where nodes and edges gradually transfer to the \emph{absorbing state}, defined as the empty state $\perp$. Formally, $\Q_t = (1 - \beta_t)I + \beta_t \mathbbm{1} e_{\perp}^\top$, where $\beta_t$ defines the \emph{diffusion schedule} and $e_{\perp}$ is one-hot on the absorbing state. Then, the marginal $q(\X_{t}\mid\X_{0})$ and conditional posterior $q(\X_{t-1} \mid \X_t,\X_0)$ also have a closed form for the absorbing state transitions.
The prior $p(\X_T)$ is chosen correspondingly as a delta distribution at a graph with no edges and nodes set to the $\perp$ state. The noise schedule $\beta_t$ is defined using the mutual information criterion proposed in \citet{d3pm}. While other transitions, like uniform and marginal \citep{digress} are equally possible, the absorbing state model is simpler and empirically outperforms others \citep{d3pm, loudiscrete}, motivating our choice.

We use the cross-entropy loss, as discussed in \citet{d3pm} and \citet{digress}:
\begin{equation}
    -\mathbb{E}_{q(\X_0,\Y)q(t)q(\X_t\mid\X_0)}[\log \tilde p_\theta(\X_0 \mid \X_t, \Y)],\label{eq:loss}
\end{equation}
where $q(t)$ is a uniform distribution over $t\in\{1\dots T\}$.

\cref{fig:overview-diffusion} provides an overview of the conditional graph diffusion framework we use. For completeness, a comprehensive definition of the model's components for nodes and edges is given in \cref{app:dgd}. The training and sampling procedures with graph diffusion models are presented in \cref{algo:train} and \cref{algo:inference}, along with optional conditioning on $\P^{\Y\to\X}$, as described in \cref{sec:alignment_types}.

\makeatletter
\renewcommand{\algorithmiccomment}[1]{\hfill$\triangleright$ \textit{\fontsize{10}{10}\selectfont\textcolor{gray}{#1}}}
\makeatother
\begin{figure}[t!]
\centering
\begin{minipage}[t]{0.48\textwidth}
            \begin{algorithm}[H]\small
                \caption{Loss calculation 
                }
                \label{algo:train}
                \begin{algorithmic}
                   \State {\bfseries Input:} condition $\Y$, target $\X_0$, and optional permutation matrix $\P^{\X \rightarrow \Y}$ for alignment
                   \State $t \sim \mathrm{Uniform}(\{0,\dots,T\})$ 
                   \State $\X_t \sim q(\X_t \mid \X_0)$ 
                   \State $\tilde \X_0 = D_\theta(\X_t, \Y, \P^{\Y\to\X})$ 
                   \State \bfseries Return Cross-Entropy($\X_0, \tilde \X_0$) 
                \end{algorithmic} 
            \end{algorithm}
\end{minipage}
\hfill
\begin{minipage}[t]{0.48\textwidth}
            \begin{algorithm}[H]\small
                \caption{Sampling}
                \label{algo:inference}
                \begin{algorithmic}
                   \State {\bfseries Input:} condition $\Y$
                   \State {\bfseries Choose (for alignment):} $\P^{\Y\to\X}\in\mathbb{R}^{N_\X\times N_\Y}$ 
                   \State $\X_T \propto p(\X_T)$  
                   \For{$t = T$ {\bfseries to} $1$}
                        \State $\tilde \X_0 = D_\theta(\X_t, \Y, \P^{\Y\to\X})$
                        \State $\X_{t-1}^{i} \sim \sum_{k} q(\X_{t-1}^i\mid\X_t^i,\X_0^i) \tilde \X_0^i$ 
                   \EndFor
                    \State \bfseries Return $\X_0$ 
                \end{algorithmic} 
            \end{algorithm}
\end{minipage}
\vspace*{-12pt} 
\end{figure}

\paragraph{Permutation Equivariance and Invariance} Permutation equivariance for the denoiser is defined as $D_\theta(\P\X)=\P D_\theta(\X)$ where $\P$ is an arbitrary permutation matrix, and $\P\X = (\P\X^\N, \P\X^E \P^\top)$. This makes single-step reverse transitions equivariant as well, in the sense that  $p_\theta(\X_{t-1}\mid\X_t) = p_\theta(\P\X_{t-1}\mid \P\X_t)$. For our conditional setting, permutation equivariance can be written as $D_\theta(\P\X, \Y)=\P D_\theta(\X, \Y)$. 
The prior in graph diffusion models is also usually permutation invariant s.t. $p(\P\X_T)=p(\X_T)$  \citep{edp-gnn,digress,EDMG3}.
\subsection{Related Work}

\paragraph{Graph-to-graph translation methods} Graph-to-graph models have demonstrated success in diverse tasks including handwritten mathematical expression recognition \citep{Wu2021G2G}, molecular optimization \citep{jin2018learning}, and retrosynthesis \citep{g2gt}. Recently, \citet{retrobridge} introduced a Markov bridge model for product-to-reactant graph mapping, representing the closest approach to diffusion-based graph-to-graph translation. While their paper does not explicitly address equivariance and symmetry-breaking as design elements, the authors implement a technique similar to our "input alignment" described in \cref{sec:alignment_types}. This suggests our theoretical framework may also explain their model's effectiveness.

\paragraph{Equivariance and symmetry-breaking} Equivariant neural networks have attracted interest for their ability to incorporate known symmetries, typically enhancing generalization. However, \citet{smidt2021symmetry} identified a fundamental limitation: these networks struggle with self-symmetric inputs because they cannot break inherent data symmetries. This challenge appears across various applications, including prediction tasks on symmetric domains and generative models reconstructing from highly symmetric latent spaces \citep{lawrence2024improving}. Researchers have explored this issue in different contexts, including graph representation learning \citep{Srinivasan2020On}, set generation \citep{zhang2022multiset}, and physical system modeling \citep{kaba2023equivariance}.

\paragraph{Permutation equivariance in graph diffusion} Equivariance features prominently in graph diffusion models to parameterize the reverse process \citep{edp-gnn, digress, EDMG3, graphgdp}. One key motivation is that permutation equivariant neural networks induce permutation invariant distributions in diffusion models \citep{edp-gnn}, ensuring different permutations of the same graph have identical probabilities under the model. In recent work, \citet{yan2023swingnn} demonstrated that relaxing permutation equivariance in graph diffusion through absolute positional encodings empirically improves performance.

\section{Why equivariance fails and how to solve it}
\subsection{Permutation equivariant denoisers cannot learn the identity function}\label{sec:permutation-eq-identity}
In this section, we consider a data set $\mathcal{D} = \{(\X_n, \Y_n, \P_{n}^{\Y\to\X})\}_{n=1}^{N_\mathrm{obs}}$, dubbed the `identity data', where for all data points $\X_n = \P_{n}^{\Y\to\X} \Y_n$.
In other words, both the input and the target are equivalent, up to some permutation, as defined by the node-mapping matrix $\P_{n}^{\Y\to\X}$. This seemingly simple scenario reveals a fundamental limitation of standard graph diffusion models. Graph translation can be viewed as similar to copying graphs (learning the identity function) because of the structural similarity between input and target graphs. Yet, we demonstrate that standard graph diffusion models struggle with this basic task due to their inability to break symmetries.

A denoiser implementing the identity function should output $D_\theta(\X_t, \Y) = \P^{\Y\to\X}\Y$, placing all probability mass on the correct node/edge labels (up to a permutation of $\Y$). We pass the conditioning graph $\Y$ by concatenating it to the input $\X$ along the node dimension, creating a graph with $N_\X+N_\Y$ nodes and $(N_\X+N_\Y)\times (N_\X+N_\Y)$ edges, with no edges between the $\X$ and $\Y$ subgraphs. However, permutation equivariant denoisers $D_\theta$ are constrained to output identical "mean" solutions for all nodes and edges in the early stages of generation.

To understand this intuitively, consider a highly noisy input at $t=T$ and a permutation equivariant denoiser. Permutation equivariance requires that $D_\theta(\R\X_T, \Y) = \R D_\theta(\X_T, \Y)$ for any permutation matrix $\R \in \{0,1\}^{N_Y \rightarrow N_X}$. Since $\X_T$ contains no information about $\Y$ (due to high noise), the model learns to ignore the $\X$ input, leading to $D_\theta(\R\X_T, \Y)= D_\theta(\X_T, \Y)=\P^{\Y\to\X}\Y$. This creates a contradiction: $\R \P^{\Y\to\X}\Y= \P^{\Y\to\X}\Y$, which cannot generally be true. The only way to satisfy both equivariance ($D_\theta(\R\X_T, \Y)= \R D_\theta(\X_T, \Y)$) and input-independence ($D_\theta(\R\X_T, \Y)= D_\theta(\X_T, \Y)$) is if $D_\theta(\X_T, \Y)$ outputs identical probability vectors for each node and edge. This symmetry problem prevents the model from distinguishing between nodes that should have different labels. The following theorem formalizes this limitation:
\begin{theorem}{\rm \bf (The optimal permutation equivariant denoiser)}
    \label{theorem:1}
        Let $D_\theta(\X_T, \Y)$ be permutation equivariant s.t.\ $D_\theta(\P\X_T, \Y) = \P D_\theta(\X_T, \Y)$, and let $q(\X_T)$ be permutation invariant. The optimal solution with respect to the cross-entropy loss with the identity data is, for all nodes $i$ and $j$
        \begin{equation}
            \begin{cases}
                 D_\theta(\X_T, \Y)^\N_{i,:} = \hat\y^\N, \hat \y^\N_k = \sum_{i} \Y_{i,k} / \sum_{i,k} \Y_{i,k},\\
                 D_\theta(\X_T, \Y)^E_{i,j,:} = \hat\y^E, \hat \y^E_k = \sum_{i,j} \Y_{i,j,k} / \sum_{i,j,k} \Y_{i,j,k}, 
            \end{cases}
        \end{equation}
        where $\hat\y^\N_k$ and $\hat\y^E_k$ are the marginal distributions of node and edge values in $\Y$.
\end{theorem}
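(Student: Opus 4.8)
The plan is to isolate the terminal time step $t=T$, where the forward process has fully mixed so that $\X_T$ is independent of the target $\X_0$ (indeed, the prior satisfies $p(\X_T)=q(\X_T\mid\X_0)$, and for the absorbing formulation $\X_T$ is the single fully-masked graph). Under this independence the relevant part of the cross-entropy objective in \cref{eq:loss} becomes
\begin{equation*}
    \mathcal{L} = -\mathbb{E}_{\Y}\,\mathbb{E}_{\X_0\mid\Y}\,\mathbb{E}_{q(\X_T)}\big[\log \tilde p_\theta(\X_0\mid\X_T,\Y)\big],
\end{equation*}
and since $D_\theta$ conditions on $\Y$, the Bayes-optimal (infinite-capacity) denoiser can be optimized separately for each fixed conditioning graph $\Y$.

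First I would extract the symmetry constraint on the output. Because $q(\X_T)$ is permutation invariant, the $\R$-orbit of any $\X_T$ in its support carries equal weight; combined with equivariance $D_\theta(\R\X_T,\Y)=\R D_\theta(\X_T,\Y)$ this forces the pointwise-optimal predictor to agree across the whole orbit, i.e.\ $\R D_\theta(\X_T,\Y)=D_\theta(\X_T,\Y)$ for every permutation $\R$ of the $N_\X$ target nodes. For the absorbing prior this is immediate, since $\R\X_T=\X_T$ makes equivariance give the fixed-point relation directly; for a general invariant prior the same conclusion follows by symmetrizing the denoiser and invoking convexity of the log-loss through Jensen, which shows the $\R$-averaged solution is feasible and no worse. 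A tensor fixed by every $\R$ must have all node rows equal, $D_\theta(\X_T,\Y)^\N_{i,:}=\vd^\N$, and all off-diagonal edge entries equal, $D_\theta(\X_T,\Y)^E_{i,j,:}=\vd^E$, for some probability vectors $\vd^\N,\vd^E$.

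Next I would minimize $\mathcal{L}$ over these tied vectors. Plugging the tied output into the factorized likelihood and using that each $\X_0^{\N,i}$ is one-hot, the node contribution for a fixed data point collapses to $-\sum_k n^\N_k \log \vd^\N_k$, where $n^\N_k=\sum_i \X_{0,i,k}^\N$ is the number of target nodes with label $k$. The identity-data assumption $\X_0=\P^{\Y\to\X}\Y$ makes these counts permutation-invariant, hence equal to those of $\Y$, so $n^\N_k=\sum_i\Y_{i,k}$. Minimizing $-\sum_k n^\N_k\log\vd^\N_k$ over the probability simplex (Gibbs' inequality, or a single Lagrange multiplier) yields $\vd^\N_k = n^\N_k/\sum_{k'}n^\N_{k'}=\hat\y^\N_k$, exactly the claimed node marginal; the edge term is identical with $n^E_k=\sum_{i,j}\Y_{i,j,k}$, giving $\vd^E=\hat\y^E$.

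The main obstacle is the symmetrization step for a general permutation-invariant prior: making rigorous that the constrained optimum may be taken $\R$-invariant requires combining closure of the equivariant family under orbit-averaging with the strict convexity of cross-entropy, so that averaging over permutations cannot increase the loss. For the absorbing-state model actually used here this difficulty evaporates, since the terminal state is a single permutation-fixed point, so a clean statement could either specialize to that prior or spell out the Jensen argument. A minor caveat is the edge diagonal: $\R D^E\R^\top = D^E$ ties the diagonal and off-diagonal blocks only separately, so the claim that all edge entries coincide presumes self-loops are excluded or handled consistently, matching the convention implicit in the sum $\sum_{i,j}$.
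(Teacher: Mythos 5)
Your proof is correct, and it reaches the paper's conclusion by a genuinely different organization of the same symmetry facts. The paper (\cref{app:perm_equivariance_identity}) never isolates orbit-constancy of the output as a standalone constraint; instead it keeps the per-node outputs free, splits $\mathbb{E}_{q(\X_T)}$ into isomorphism-class representatives times a sum over permutations, and then rearranges that sum so the averaging lands on the \emph{target}: $\sum_\pi \Y^\N_{\pi^{-1}(i),k} \propto \sum_i \Y^\N_{i,k}$, after which the effective per-node target is already the marginal $\hat\y^\N$ and pointwise cross-entropy minimization finishes the argument. You do the dual move: absorb the permutation averaging into the \emph{output} by proving first (trivially for the absorbing prior, and via orbit-averaging plus Jensen for a general invariant prior) that the optimal equivariant denoiser must be permutation-fixed, and only then minimize over the tied probability vectors using Gibbs' inequality. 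The two routes are equivalent in substance—both rest on invariance of $q(\X_T)$, equivariance of $D_\theta$, and convexity of $-\log$—but yours makes explicit where convexity enters and packages the symmetry step as a reusable lemma, at the cost of the extra closure-under-averaging verification; the paper's computation is more direct and avoids needing that verification. Two further points in your favor: your observation that the absorbing-state prior makes the fixed-point relation immediate (since $\R\X_T=\X_T$) is correct and not stated in the paper, and your caveat about edge diagonals is a real gap the paper glosses over—conjugation $\R D^E \R^\top = D^E$ only forces constancy separately on diagonal and off-diagonal entries, so the paper's claim that $\sum_\pi \Y^E_{\pi^{-1}(i),\pi^{-1}(j),k} \propto \sum_{i,j}\Y^E_{i,j,k}$ implicitly assumes self-loop entries are excluded or fixed to the empty label, exactly as you note.
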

We visualize the theorem in \cref{fig:theorem_1_visualisation}, and give the proof in \cref{app:perm_equivariance_identity}, with an extension to other time steps 
in \cref{app:perm_equivariance_identity_t}. The symmetry constraint makes it impossible for the model to solve the task in a single step with $T=1$. With multiple smaller steps, the model can gradually break symmetries as sampling in the discrete generative process introduces slight asymmetries in $\X_t$, enabling correlations between node and edge representations to emerge. Each denoising step can leverage these small asymmetries from previous steps, slowly building node identity information. As $T\to\infty$ with absorbing diffusion, the model eventually becomes equivalent to an any-order autoregressive model \citep{hoogeboom2021autoregressive}, thus exemplifying gradual symmetry-breaking by changing one element at a time. However, this process remains inefficient compared to approaches that directly address the symmetry problem. 

\subsection{Solution: Aligned Permutation Equivariance}\label{sec:aligned-permutation-eq}

Given the failure of permutation equivariant denoisers to effectively copy graphs, we propose to relax the permutation equivariance constraint in a controlled way. We observe that it is sufficient to have permutation equivariance in the sense that \emph{if we permute $\X$ and/or $\Y$, and accordingly permute the node mapping $P^{\Y\to\X}$ such that the matching between $\Y$ and $\X$ remains, the model output should be the same}. We call this \emph{aligned permutation equivariance}. Formally, we use the node-mapping permutation matrix $\P^{\Y\to\X}$ as an input to the denoiser and consider denoisers that satisfy the following constraint: $D_\theta(\R\X, \Q\Y, \R \P^{\Y\to\X} \Q^\top) = \R D_\theta(\X, \Y, \P^{\Y\to\X})$, where $\R$ and $\Q$ are permutation matrices of shapes $(N_\X\times N_\X)$ and $(N_\Y\times N_\Y)$, respectively.

With $\P^{\Y\to\X}$ and $\Y$ as input, an unconstrained denoiser can output the ground-truth permutation $\P^{\Y\to\X}\Y$ for the identity data task. Restricting the function to the aligned equivariant class does not clash with this, as can be seen by writing out the optimal solution and the equivariance condition for a permuted input $\R\X, \Q\Y, \R \P^{\Y\to\X} \Q^\top$:
\begin{align}
    (\text{\small Identity data}) \quad D_\theta(\R\X, \Q\Y, \R \P^{\Y\to\X} \Q^\top)
    &= \R \P^{\Y\to\X} \Q^\top \Q\Y = \R \P^{\Y\to\X} \Y,  \\
    (\text{\small Aligned Equivariance}) \quad D_\theta(\R\X, \Q\Y, \R \P^{\Y\to\X} \Q^\top)
    &= \R D_\theta(\X, \Y, \P^{\Y\to\X}) = \R \P^{\Y\to\X} \Y .
\end{align}
\newtheorem*{theorem:aligned_invariance}{\cref{theorem:aligned_invariance}}

Note that the permutation invariance of $p_\theta(\X_0)$ does not apply as it does for fully permutation equivariant models. However, we show that a generalized form of distribution invariance holds.
\begin{theorem}{\rm \bf(Aligned denoisers induce aligned permutation invariant distributions)}\label{theorem:2}
    If the denoiser function $D_\theta$ has the aligned equivariance property and the prior $p(\X_T)$ is permutation invariant, then the generative distribution $p_\theta(\X_0 \mid \Y, \P^{\Y\to\X})$ has the corresponding property for any permutation matrices $\R$ and $\Q$:
    \begin{equation}
        p_\theta(\R\X_0\mid \Q\Y,\R\P^{\Y\to\X}\Q^\top) = p_\theta(\X_{0}\mid\Y,\P^{\Y\to\X}). 
    \end{equation}
\end{theorem}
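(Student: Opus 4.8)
The plan is to follow the standard argument that equivariant denoisers induce invariant generative distributions (as cited for the unconditional case in the related work), but adapted to the \emph{aligned} setting where the relevant symmetry acts jointly on $\X$, $\Y$, and the node mapping $\P^{\Y\to\X}$. The generative distribution is obtained by marginalizing the trajectory,
\[
p_\theta(\X_0\mid\Y,\P^{\Y\to\X}) = \sum_{\X_1,\dots,\X_T} p(\X_T)\prod_{t=1}^T p_\theta(\X_{t-1}\mid\X_t,\Y,\P^{\Y\to\X}),
\]
so it suffices to establish two facts: (i) a \emph{single-step} aligned equivariance for each reverse kernel, and (ii) the invariance of the prior, which is assumed. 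I would then combine them via a change of variables in the trajectory sum.

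First I would prove the single-step claim
\[
p_\theta(\R\X_{t-1}\mid\R\X_t,\Q\Y,\R\P^{\Y\to\X}\Q^\top) = p_\theta(\X_{t-1}\mid\X_t,\Y,\P^{\Y\to\X}).
\]
Recall that $p_\theta(\X_{t-1}\mid\X_t,\Y,\P^{\Y\to\X}) = \sum_{\X_0} q(\X_{t-1}\mid\X_t,\X_0)\,\tilde p_\theta(\X_0\mid\X_t,\Y,\P^{\Y\to\X})$. The denoiser output distribution $\tilde p_\theta$ factorizes over nodes and edges, its $i$-th node factor being the $i$-th row of $D_\theta$ and its $(i,j)$-th edge factor the corresponding entry of $D_\theta^E$. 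Applying the aligned equivariance hypothesis $D_\theta(\R\X_t,\Q\Y,\R\P^{\Y\to\X}\Q^\top)=\R D_\theta(\X_t,\Y,\P^{\Y\to\X})$ permutes these factors by $\R$, so that evaluating at the jointly permuted value $\R\X_0$ restores the original product, giving $\tilde p_\theta(\R\X_0\mid\R\X_t,\Q\Y,\R\P^{\Y\to\X}\Q^\top)=\tilde p_\theta(\X_0\mid\X_t,\Y,\P^{\Y\to\X})$. Separately, because the posterior $q(\X_{t-1}\mid\X_t,\X_0)$ factorizes over nodes and edges with each factor depending only on the matching entries of $\X_t$ and $\X_0$, it is permutation equivariant: $q(\R\X_{t-1}\mid\R\X_t,\R\X_0)=q(\X_{t-1}\mid\X_t,\X_0)$. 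Substituting the dummy variable $\X_0\mapsto\R\X_0$ in the sum defining the permuted kernel (a bijection of the discrete state space) and applying these two identities collapses it back to the unpermuted kernel.

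Finally I would lift this to the full distribution. Substituting $\R\X_0$, $\Q\Y$, and $\R\P^{\Y\to\X}\Q^\top$ into the trajectory sum and reindexing each latent by $\X_t\mapsto\R\X_t$ for $t=1,\dots,T$ (again a bijection, hence leaving the total sum unchanged), the prior term becomes $p(\R\X_T)=p(\X_T)$ by the assumed invariance, while every factor in the product becomes $p_\theta(\R\X_{t-1}\mid\R\X_t,\Q\Y,\R\P^{\Y\to\X}\Q^\top)$, which by the single-step claim equals $p_\theta(\X_{t-1}\mid\X_t,\Y,\P^{\Y\to\X})$. The resulting expression is exactly $p_\theta(\X_0\mid\Y,\P^{\Y\to\X})$, proving the theorem.

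I expect the main obstacle to be the bookkeeping in the single-step lemma, namely making precise how the matrix permutation $\R$ acts on the factorized output distribution $\tilde p_\theta$ and verifying that evaluating the $\R$-permuted denoiser at the value $\R\X_0$ (rather than $\X_0$) is exactly what cancels the permutation. This is the one place where the interplay between permuting probability vectors and permuting the evaluation point must be handled carefully; everything else, the prior invariance and the equivariance of $q$, is immediate from the factorization, and the trajectory marginalization is a routine change of variables.
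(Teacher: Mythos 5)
Your proposal is correct and follows essentially the same route as the paper's proof: you first establish single-step aligned equivariance of the reverse kernel $p_\theta(\X_{t-1}\mid\X_t,\Y,\P^{\Y\to\X})$ (the paper's Lemma on aligned reverse steps, which it proves via the pointwise-defined transition tensor $F_\theta$ rather than your change of variables in the sum over $\X_0$, a purely mechanical difference), and then propagate through the chain using the prior's permutation invariance. Your single change of variables $\X_t\mapsto\R\X_t$ over the whole trajectory is just the paper's step-by-step induction on the marginals unrolled in one pass, so the two arguments are the same in substance.
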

A proof is given in \cref{app:dist_invariance_from_alignment}, while \Cref{fig:theorem_1_visualisation} showcases the denoising output of an aligned equivariant model with a given $\P^{\Y\to\X}$. Informally, the theorem states that the graph pairing has the same probability for all isomorphisms of the input and target graphs, as long as the node mapping is not reassigned to different nodes. This means that during training, we can use the permutations present in the data and be confident that the model generalizes to other permutations. During sampling, we only have access to $\Y$ without node mapping information. The theorem ensures we can assign node mappings arbitrarily and still obtain effectively the same distribution over the targets. We show this mathematically in \cref{app:dist_invariance_from_alignment}. 
Lastly, we note that our use of node mapping here is not a constraint: it is merely an additional input which allows the denoiser to use the structure of the conditional graph efficiently, and as such we benefit even from a partially correct node mapping.\looseness-1
\begin{wrapfigure}{r}{.4\textwidth}
    \centering
    \resizebox{.4\textwidth}{!}{%
    \begin{tikzpicture}

    \tikzstyle{label}=[font=\tiny]  
    \tikzstyle{arr}=[->,-latex,line width=1pt]  
    
    \node[anchor=south west,inner sep=0] (image) at (0,0)
      {\includegraphics[width=0.4\textwidth]{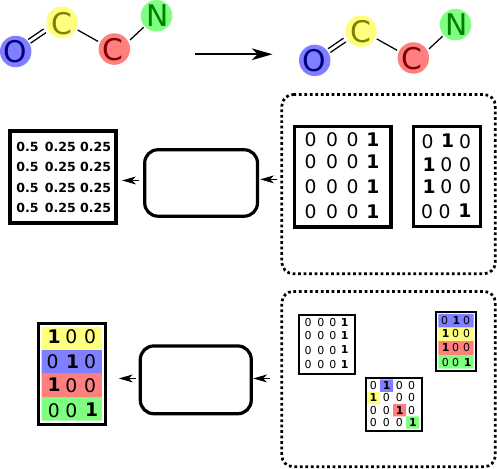}};
    
    \begin{scope}[x={(image.south east)},y={(image.north west)}]

    Annotations at the top of the figure
    \node (reaction) at (0.5,1.04) {};
    \node[label] at ($(reaction) + (-0.3,-0.02)$) {\scalebox{1.}{\scriptsize Reactant}};
    \node[label] at ($(reaction) + (0.3,-0.02)$) {\scalebox{1.}{\scriptsize Product}};

    \node (c0) at (0.155,.47) {};
      \node[label] at ($(c0) + (-0.02,0)$) {\scalebox{1.}{$D_\theta(\X_T,\!\Y)^\mathcal{N}$}};
      \node[label] at ($(c0) + (-0.09,0.28)$) {\scalebox{1.}{C}};
      \node[label] at ($(c0) + (-0.028,0.28)$) {\scalebox{1.}{O}};
      \node[label] at ($(c0) + (0.028,0.28)$) {\scalebox{1.}{N}};

    \node (permGNN) at (0.41,0.62) {\scalebox{0.8}{Perm.Eq.}};
    \node (permGNN) at ($(permGNN) + (0,-0.04)$) {\scalebox{0.8}{GNN}};

    \node (c0) at (0.705,.47) {};
      \node[label] at ($(c0) + (0,0)$) {\scalebox{1.}{$\X_T^\mathcal{N}$}};
      \node[label] at ($(c0) + (-0.08,0.29)$) {\scalebox{1.}{C}};
      \node[label] at ($(c0) + (-0.035,0.29)$) {\scalebox{1.}{O}};
      \node[label] at ($(c0) + (0.004,0.29)$) {\scalebox{1.}{N}};
      \node[label] at ($(c0) + (0.045,0.29)$) {\scalebox{1.}{$\perp$}};

    \node (c0) at (0.90,.47) {};
      \node[label] at ($(c0) + (0,0)$) {\scalebox{1.}{$\Y^\mathcal{N}$}};
      \node[label] at ($(c0) + (-0.08+0.04,0.29)$) {\scalebox{1.}{C}};
      \node[label] at ($(c0) + (-0.04+0.04,0.29)$) {\scalebox{1.}{O}};
      \node[label] at ($(c0) + (0.00+0.04,0.29)$) {\scalebox{1.}{N}};

    \node (c0) at (0.155,.05) {};
      \node[label] at ($(c0) + (0.05,0)$) {\scalebox{1.}{$D_\theta(\X_T,\!\Y,\!\P^{\Y\to\X})^\mathcal{N}$}};
      \node[label] at ($(c0) + (-0.05,0.29)$) {\scalebox{1.}{C}};
      \node[label] at ($(c0) + (-0.01,0.29)$) {\scalebox{1.}{O}};
      \node[label] at ($(c0) + (0.028,0.29)$) {\scalebox{1.}{N}};

    \node (permGNN) at (0.39,0.2) {\scalebox{0.8}{Aligned}};
    \node (permGNN) at ($(permGNN) + (0,-0.04)$) {\scalebox{0.8}{GNN}};

    \node (c0) at (0.67,.17) {};
      \node[label] at ($(c0) + (0,0)$) {\scalebox{0.8}{$\X_T^\mathcal{N}$}};
      \node[label] at ($(c0) + (-0.055,0.18)$) {\scalebox{0.8}{C}};
      \node[label] at ($(c0) + (-0.025,0.18)$) {\scalebox{0.8}{O}};
      \node[label] at ($(c0) + (0.003,0.18)$) {\scalebox{0.8}{N}};
      \node[label] at ($(c0) + (0.025,0.18)$) {\scalebox{0.8}{$\perp$}};

    \node (c0) at (0.8,.05) {};
      \node[label] at ($(c0) + (0,0)$) {\scalebox{0.8}{$\P^{\Y\to\X}$}};

    \node (c1) at (0.92,.18) {};
      \node[label] at ($(c1) + (0,0)$) {\scalebox{0.8}{$\Y^\mathcal{N}$}};
      \node[label] at ($(c1) + (-0.028,0.175)$) {\scalebox{0.8}{C}};
      \node[label] at ($(c1) + (-0.005,0.175)$) {\scalebox{0.8}{O}};
      \node[label] at ($(c1) + (0.02,0.172)$) {\scalebox{0.8}{N}};

    \end{scope}
    \end{tikzpicture}}
    \caption{Comparing the optimal permutation equivariant denoiser to an aligned denoiser. As per \cref{theorem:1}, the permutation equivariant model (\emph{top}) outputs the marginal distribution over node types whereas the aligned model (\emph{bottom}) reconstructs the correct reactant nodes in a specific permutation $\P^{\Y\to\X}$.}
    \label{fig:theorem_1_visualisation}
    \vspace*{-1.5em}
\end{wrapfigure}

\subsection{Methods for Alignment in GNNs}\label{sec:alignment_types}

We have established how constraining permutation equivariance to \emph{aligned} permutation equivariance is a key component for the success of the denoising model. In this section, we discuss multiple methods to induce an alignment between the input and target graphs.
We visualize these methods in \cref{fig:ways_to_add_atom_mapping} and prove that each belongs to the class of aligned equivariant models in \cref{app:denoisers_aligned_equivariance}.

\paragraph{Node-mapped positional encodings} Graph positional encodings are a standard tool to break permutation symmetry in GNNs, making them a natural candidate for inducing aligned permutation equivariance. We consider uniquely identifying pairs of nodes matched via the node-mapping matrix in both $\X_t$ and $\Y$ by adding a positional encoding vector to each unique node pair. In practice, we generate a set of distinct vectors $\enc\in\R^{N_Y\times d_{\enc}}$ such that $\enc = g(\Y)$ for each of the input graph nodes $\Y$, with $g$ a function generating the encodings based on $\Y$. In this work, we use Laplacian positional encodings \citep{dwivedi2023benchmarking}, but in principle any positional encoding scheme is applicable. We then map the vectors $\phi$ to the corresponding inputs for the noisy graph nodes $\X_t$. Formally, we get $D_\theta(\X_t, \Y, \P^{\Y\to\X}) = f_\theta([\X^\N_t \; \P^{\Y\to\X}\enc], \X^E_t,  [\Y^\N \; \enc], \Y^E)$, where $f_\theta$ is the neural network that takes as inputs the augmented node features and regular edge features and $[\X^\N_t \; \P^{\Y\to\X}\enc]\in\R^{N_X\times (K_a + d_{\enc})}$ corresponds to concatenation along the feature dimension. The only change required for the neural network is to increase the initial linear layer size for the node inputs. 

\paragraph{Direct skip connection} Next, motivated by the identity function analysis in \cref{sec:permutation-eq-identity}, we propose an alignment method that solves the identity task in a minimal way. In particular, we modify the network to include a direct connection from the condition to the target output: $D_\theta(\X_t, \Y, \P^{\Y\to\X}) = \text{softmax}(f^{\text{logit}}_\theta(\X_t, \Y) + \lambda\P^{\Y\to\X}\Y)$
where $f^{\text{logit}}_\theta(\X_t, \Y)$ are the logits at the last layer of the neural network for the nodes and edges, $\lambda$ is a learnable parameter and $\P^{\Y\to\X}\Y = (\P^{\Y\to\X}\Y^\N, \P^{\Y\to\X}\Y^E(\P^{\Y\to\X})^\top)$. The sum is possible because $\Y$ is in one-hot format and the dimensionalities of the denoiser output and $\P^{\Y\to\X}\Y$ are the same. It is easy to see that when $\lambda\to\infty$,  $D_\theta(\X_t, \Y, \P^{\Y\to\X}) \to \P^{\Y\to\X}\Y$ because the direct connection from $\Y$ dominates in the softmax.

\paragraph{Aligning $\Y$ in the input} As a natural next step from aligning the graphs at the output via skip connections, we propose to align them in the input. This allows more expressivity from the network, since it can process the aligned graphs in the layers of the network before outputting the denoising solution. In practice, we align $\Y$ by concatenating $\X$ and $\P^{\Y\to\X}\Y$ along the feature dimension before passing it to the neural network. Thus, we can drop out the explicit $\Y$ graph entirely, and only process the $\X$ graph augmented with $\Y$: $D_\theta(\X_t, \Y, \P^{\Y\to\X}) = f_\theta([\X_t^\N \;\; \P^{\Y\to\X} \Y^\N], [\X_t^E \;\; \P^{\Y\to\X}\Y^E(\P^{\Y\to\X})^\top])$,
where $[\cdot]$ means concatenation along the feature dimension.

\subsection{Adding Post-training Conditioning for Discrete Diffusion Models}\label{sec:post_training_conditioning}

To fully realize the potential of diffusion models for graph-to-graph translation, we derive a novel reconstruction guidance-like method for discrete models \citep{ho2022video, chung2022diffusion, song2023pseudoinverse}. This approach differs from the classifier-guidance used by \citet{digress} in their conditional generation. 
We use the denoiser output to evaluate the likelihood $p(y\mid\X_0)$ of the condition $y$ (e.g., probability of synthesizability) given the reactants $\X_0$, and backpropagate to get an adjustment of the backward update step:
\begin{align}
    \vspace*{-1em}\log \P_\theta(\X_{t-1}\mid\X_t, \Y, y) \propto \nabla_{\X_t} \log (\mathbb{E}_{p_\theta(\X_0\mid\X_t, \Y)} p(y\mid\X_0)) + \log\P_\theta(\X_{t-1}\mid\X_t, \Y).
\end{align}
The derivation starts with similar steps as the classifier guidance method in \citet{digress}. We provide more details in \cref{app:post-training-conditioning}. The method can be implemented with a few lines of code and works as long as $p(y\mid\X_0)$ is differentiable. For the retrosynthesis application, we experiment with a toy model of synthesizability based on the count of atoms in the reaction \citep{ertl2009estimation}, a concept also known as atom economy \citep{trost1991atom}.

\section{Experiments}
\label{sec:experiments}
We assess the effect of aligned denoisers on the performance of regular discrete diffusion models. To this effect, we first demonstrate our model on a toy example: copying simple graphs. We then evaluate our method more rigorously on the real-world task of retrosynthesis, the task of defining precursors for a given compound. We also show how diffusion enables a number of downstream tasks within retrosynthesis, including inpainting and property-guided generation.

\subsection{Copying graphs}\label{sec:copying_graphs_exp}
\begin{wrapfigure}{r}{.4\textwidth}
    \centering
    \vspace*{-3.7em}
    \resizebox{.37\textwidth}{!}{%
    \begin{tikzpicture}

    \tikzstyle{label}=[font=\tiny]  
    \tikzstyle{arr}=[->,-latex,line width=1pt]  
    
    \node[anchor=south west,inner sep=0] (image) at (0,0)
      {\includegraphics[width=0.4\textwidth]{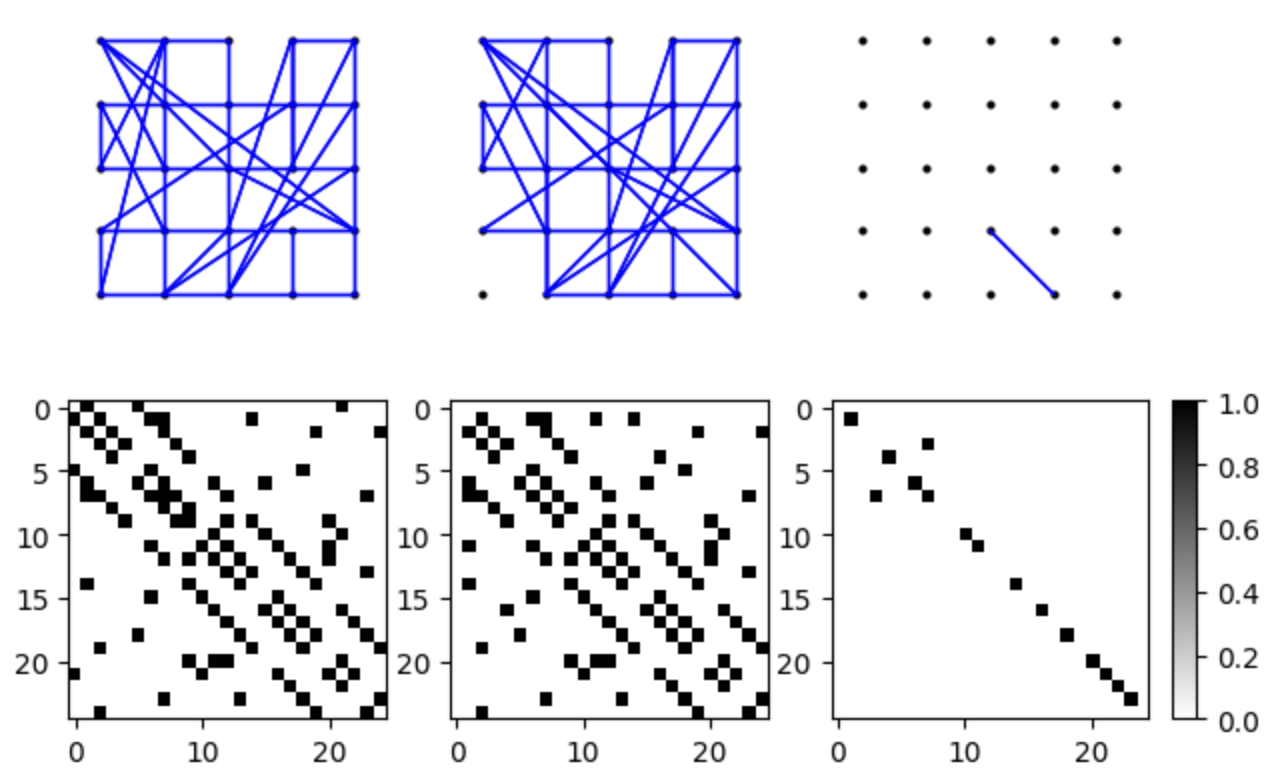}};
    
    \begin{scope}[x={(image.south east)},y={(image.north west)}]

    Annotations at the top of the figure
    \node (reaction) at (0.5,1.04) {};
    \node[label] at ($(reaction) + (-0.35,-0.02)$) {\scalebox{1.}{\scriptsize Original}};
    \node[label] at ($(reaction) + (-0.03,-0.02)$) {\scalebox{1.}{\scriptsize Aligned}};
    \node[label] at ($(reaction) + (0.28,-0.02)$) {\scalebox{1.}{\scriptsize Equivariant}};

    \end{scope}
    \end{tikzpicture}}
    \vspace*{-0.7em}
    \caption{Showcasing performance on graph copying of grids.}
    \label{fig:results_on_the_grid}
    \vspace*{-2em}
\end{wrapfigure}
We evaluate the model's ability to copy a graph as it is. To do so, we define a simple dataset made of $5\times5$ grids. We define an aligned and an equivariant denoiser, both with the same architecture: a 2-layer GNN with $16$ nodes in the hidden dimension. We train both models on the same $100$ samples with a batch size of $32$ for $10$ epochs. \Cref{fig:results_on_the_grid} shows that our aligned denoiser recovers the original graph almost perfectly, while a fully equivariant denoiser outputs only a fraction of the same components.

\subsection{Retrosynthesis}
Retrosynthesis is a crucial step in the drug discovery pipeline, as it provides a concrete plan to create an identified compound. Single-step retrosynthesis in particular seeks to define the reactants in a single chemical reaction, which can then be chained to create a more comprehensive synthesis plan. The task lends itself naturally to graph translation models, since it seeks to generate a target graph (reactants) given an input graph (product). In this experiment, we compare the performance of the aligned denoiser to that of an equivariant denoiser using real-world chemical reactions.

\paragraph{Experimental setup} We use the benchmark dataset USPTO-50k for our experiments. The dataset consists of $50000$ chemical reactions, in SMILES format \citep{smiles1988}, curated by \citet{schneider-uspto} from an original $2$ million reactions extracted through text mining by \citet{lowe-uspto}. More information on the benchmark dataset USPTO and its various subsets can be found in \cref{app:uspto-data}. We use the graph transformer architecture introduced by \citet{graph-transformer} and used by \citet{digress}. The network architecture and hyperparameters are detailed in \cref{app:nn_hyperparams_compute}. For each product, we generate $100$ reactant set, we deduplicate them and rank the unique reactants from most-to-least likely, as judged by the model, using the likelihood lower bound and duplicate counts as a proxy. Based on the types of alignment discussed in \cref{sec:alignment_types}, we experiment with {\em(1)}~a permutation equivariant model (\emph{Unaligned}), {\em(2)}~a model augmented with atom-mapped positional encodings, calculated from the graph Laplacian eigendecomposition (\emph{DiffAlign-PE}), {\em(3)}~A model with Laplacian positional encodings as well as the skip connection ({\emph{DiffAlign-PE+skip}), {\em(4)}~A model where $\mathbf{X}$ and $\mathbf{Y}$ are aligned at the input as well as in the output (\emph{DiffAlign-input alignment}). We use $T=100$. We trained the models for $400$--$600$ epochs and chose the best checkpoint based on the Mean Reciprocal Rank (MRR, \citep{liu2009learning}) score with $T=10$ of the validation set. For more details on our experimental setup, see \cref{app:nn_hyperparams_compute}.

\paragraph{Evaluation} We follow previous practices and use the accuracy of obtaining the ground-truth reactants as our main metric. This is measured by top-$k$ accuracy, which counts the number of ground-truth matches in the deduplicated and ranked samples among the top-$k$ generated reactions. We also report Mean Reciprocal Rank (MRR, \citep{liu2009learning}) as used by \citet{syntheseus}. We focus here on a comparison between aligned and equivariant denoisers. In order to place our models in the context of the available literature, we also compare to existing baselines in \cref{app:other-baselines}.

\paragraph{Results} We visualize the aligned denoiser and the permutation equivariant denoiser in \cref{fig:denoiser_pos_enc_vs_no_pos_enc}. As predicted by \cref{theorem:1}, a sample from the permutation equivariant denoising distribution at high levels of noise $p_\theta(\mathbf{X}_0|\mathbf{X}_T,\mathbf{Y})$ has no information about the structure of the product. This is a direct consequence of the symmetry limitation: without alignment, the equivariant denoiser cannot distinguish between different possible node mappings, producing identical outputs for all nodes and effectively averaging across all possible configurations. In contrast, an aligned denoiser is able to copy the product structure, and the initial denoising output is of much higher quality.
\begin{table}[h!]
  \centering\scriptsize
  \setlength{\tabcolsep}{8pt}
  \renewcommand{\arraystretch}{0.8}
  \colorlet{lightgreen}{black!05}
  \colorlet{ours}{blue!09!white}
  \newcolumntype{a}{>{\columncolor{lightgreen}}c}
  \caption{Top-$k$ accuracy and MRR on the USPTO-50k test data set. Aligned models outperform the unaligned one with the combined PE+skip model reaching the highest results. For a comparison to other retrosynthetic baselines see \cref{app:other-baselines}.}
\begin{tabular}{llccccca}
    \toprule
    & \textbf{Method} & $\mathbf{k=1} \uparrow$  & $\mathbf{k=3} \uparrow$ & $\mathbf{k=5} \uparrow$ & $\mathbf{k=10} \uparrow$ & $\widehat{\mathrm{\textbf{MRR}}} \uparrow$  \\
    \midrule
     & Unaligned & 4.1 & 6.5 & 7.8 & 9.8 & 0.056 \\
    & DiffAlign-input& 44.1 & 65.9 & 72.2 & 78.7  & 0.554\\
    &DiffAlign-PE& 49.0 & 70.7 & 76.6 & \textbf{81.8} & 0.601\\
        & DiffAlign-PE+skip & \textbf{54.7} & \textbf{73.3} & \textbf{77.8} & 81.1 & \textbf{0.639} \\
    \bottomrule
\label{tab:results-topk}
\end{tabular}
\vspace{-1em}
\end{table}

We report results for the different aligned models and the unaligned model for top-$k$ and the combined MRR score on the USPTO-50k test set in \cref{tab:results-topk}. The model without alignment performs worst, while combining the input alignment with positional encodings achieves the highest top-$k<10$ and MRR scores across all models, making our diffusion-based model competitive with SOTA in retrosynthesis. Specifically, \Cref{tab:50k-extended} shows our top-$k$ accuracy and MRR score compared to other baselines. We also outperform template-based models in all top-$k$ scores (see note on the evaluation of \citep{retrobridge} to understand why its results are not comparable), and outperform all non-pretrained models on top-$1$. While we use a large value for $T$ during training in our best models, we also highlight that the performance of the aligned model does not degrade significantly when reducing the count of sampling steps to a fraction of $T=100$. See \cref{fig:sampling-steps} for an ablation study with the number of sampling steps for our best model with positional encodings and skip connections. We also compare to a permutation equivariant model and show that the top-$k$ scores go to near zero at $10$ steps, while the aligned model sometimes recovers the ground truth even with a single denoising step. 
We provide further ablations for different transition matrices, a model using only the skip connections, and a model with matched Gaussian noise positional encodings in \cref{app:additional_ablations}. With the latter experiment, we show that the benefits brought by the positional encoding having the graph inductive bias due to the graph Laplacian eigenvector structure are not particularly significant, compared to the inductive bias of alignment brought by matching the positional encodings across sides.
\begin{figure}[t!]
\centering
\begin{subfigure}[b]{.50\textwidth}
    \centering
  \resizebox{1.4\textwidth}{!}{
      \begin{tikzpicture}[trim right=\textwidth]
    
        \tikzstyle{label}=[font=\fontsize{7.5}{10}\selectfont]  
        \tikzstyle{arr}=[->,-latex,line width=1pt]  
        
        \node[anchor=south west,inner sep=0] (image) at (-3,0)
          {\includegraphics[width=\textwidth]{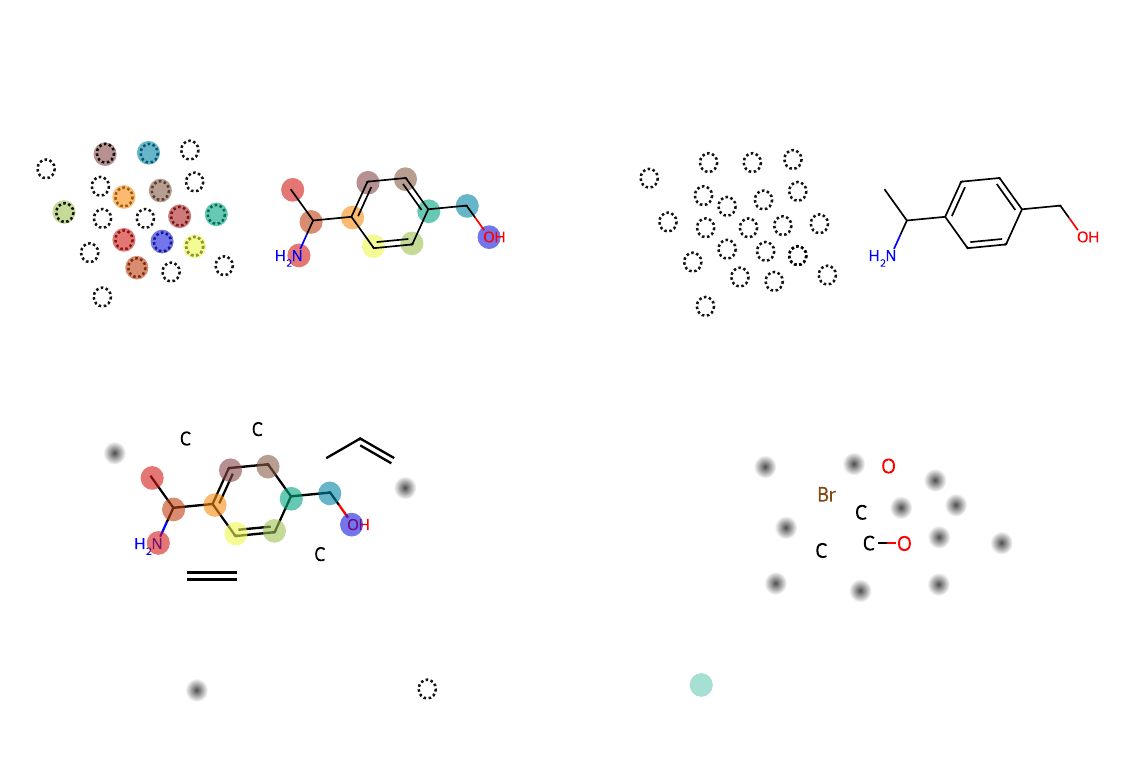}};
        
        \begin{scope}[x={(image.south east)},y={(image.north west)}]
          \node[label] at (.25,.95) {\textbf{Aligned Denoiser}};
          \node[label] at (1.25,.95) {\textbf{Unaligned Denoiser}}; 
          
          \node[label] at (.05,.87) {Sample with AM};
          \node[label] at (.5,.87) {Product};

          \node[label] at (1.,.87) {Sample};
          \node[label] at (1.4,.87) {Product};
          
          \node[label] at (-0.05,.51) {Sample from $\tilde p_\theta(\X_0\mid\X_T,\Y,P^{\Y\to\X})$};
          \node[label] at (1.,.51) {Sample from $\tilde p_\theta(\X_0\mid\X_T,\Y)$};
      
          \node[label,align=left,anchor=west,scale=1.] at (-.35,.112) {= blank \strut};  
          \node[label,align=left,anchor=west,scale=1.] at (0.,.112) {= absorbing \strut};    
          \node[label,align=left,anchor=west,scale=1.] at (.45,.115) {= atom mapping\strut};     
        \end{scope}
      \end{tikzpicture}
  }\\[-4pt]
  \caption{Aligned vs unaligned denoiser}\label{fig:denoiser_pos_enc_vs_no_pos_enc}
\end{subfigure}
\hfill
\begin{subfigure}[b]{0.47\textwidth}
    \raggedleft
    
    \scriptsize
    \setlength{\figurewidth}{.65\textwidth}    
    \setlength{\figureheight}{.7\figurewidth}  
    \pgfplotsset{ylabel style={yshift=-4pt}}      
\begin{tikzpicture}

\definecolor{darkgray176}{RGB}{176,176,176}
\definecolor{darkkhaki200183101}{RGB}{200,183,101}
\definecolor{darkslategray5369108}{RGB}{53,69,108}
\definecolor{dimgray102105112}{RGB}{102,105,112}
\definecolor{gray148142119}{RGB}{148,142,119}

\begin{axis}[
scale only axis,
width=\figurewidth,
height=\figureheight,
legend cell align={left},
legend style={inner xsep=1pt, inner ysep=0.5pt, nodes={inner sep=1pt, text depth=0.1em},draw=black!50,fill=white,font=\scriptsize,at={(0.03,0.97)},anchor=north west},
log basis x={10},
tick align=outside,
tick pos=left,
grid=major,
x grid style={darkgray176,dashed},
xlabel={Sampling step count},
xmin=0.794328234724281, xmax=125.892541179417,
xmode=log,
xtick style={color=black},
xtick={1,2,5,10,20,50,100},
ytick={0,.2,.4,.6,.8,1},
yticklabels={0\%,20\%,40\%,60\%,80\%,100\%},
xticklabels={1,2,5,10,20,50,100},
y grid style={darkgray176,dashed},
ylabel={Top-$k$},
ymin=-0.04326171875, ymax=1.04326171875,
ytick style={color=black}
]
\addplot [black, solid, very thick] coordinates {(1,0) (1,0)}; 
\addlegendentry{Aligned model};
\addplot [black, semithick] coordinates {(1,0) (1,0)}; 
\addlegendentry{Permutation equivariant model};
\addplot [very thick, darkslategray5369108, forget plot]
table {%
1 0.03125
2 0.3203125
5 0.466796875
10 0.486328125
50 0.486328125
100 0.501953125
};
\addplot [very thick, dimgray102105112, forget plot]
table {%
1 0.0546875
2 0.431640625
5 0.615234375
10 0.673828125
50 0.701171875
100 0.71484375
};
\addplot [very thick, gray148142119, forget plot]
table {%
1 0.08984375
2 0.486328125
5 0.642578125
10 0.724609375
50 0.755859375
100 0.77734375
};
\addplot [very thick, darkkhaki200183101, forget plot]
table {%
1 0.25390625
2 0.599609375
5 0.767578125
10 0.81640625
50 0.86328125
100 0.865234375
};
\addplot [semithick, darkslategray5369108, forget plot]
table {%
1 0
2 0
5 0
10 0
20 0.00390625
50 0.037109375
100 0.05078
};
\addplot [semithick, dimgray102105112, forget plot]
table {%
1 0
2 0
5 0
10 0.00390625
20 0.017578125
50 0.05859375
100 0.07226
};
\addplot [semithick, gray148142119, forget plot]
table {%
1 0
2 0
5 0
10 0.009765625
20 0.025390625
50 0.068359375
100 0.07812
};
\addplot [semithick, darkkhaki200183101, forget plot]
table {%
1 0
2 0
5 0.001953125
10 0.01953125
20 0.064453125
50 0.142578125
100 0.1562
};
\draw (axis cs:10,0.48) node[
  fill=white,
  rounded corners=1pt,
  scale=0.55,
  anchor=base west,
  draw=darkslategray5369108,
  text=darkslategray5369108,
  rotate=0.0
]{top-1};
\draw (axis cs:10,0.65) node[
  fill=white,
  rounded corners=1pt,
  scale=0.55,
  anchor=base west,
  draw=dimgray102105112,
  text=dimgray102105112,
  rotate=0.0
]{top-3};
\draw (axis cs:10,0.72) node[
  fill=white,
  rounded corners=1pt,
  scale=0.55,
  anchor=base west,
  draw=gray148142119,
  text=gray148142119,
  rotate=0.0
]{top-5};
\draw (axis cs:10,0.8) node[
  fill=white,
  rounded corners=1pt,
  scale=0.55,
  anchor=base west,
  draw=darkkhaki200183101,
  text=darkkhaki200183101,
  rotate=0.0
]{top-100};
\end{axis}

\end{tikzpicture}\\[-4pt]
    \caption{Quality of samples vs.\ sampling speed}
\end{subfigure}
\caption{Top-$k$ scores for sampling step counts $T$ for our PE-skip model and a model with a standard permutation equivariant denoiser, using the first 10\% of validation set reactions.}
\label{fig:sampling-steps}
\vspace{-1em}
\end{figure}

\begin{table}[th!]
\caption{An extended comparison with top-$k$ accuracy and MRR on the USPTO-50k test data set. We include models with pretraining on larger data sets, and Retrobridge \citep{retrobridge}, a model whose evaluation is done with a relaxed metric that does not consider charges or stereochemistry.}
\label{tab:50k-extended}
\centering\scriptsize
\setlength{\tabcolsep}{6pt}

\renewcommand{\arraystretch}{1.1}
\colorlet{lightgreen}{black!05}
\colorlet{ours}{blue!09!white}
\newcolumntype{a}{>{\columncolor{lightgreen}}c}

\begin{tabular}{llcccca}
\toprule
& Method & $\mathbf{k=1} \uparrow$ & $\mathbf{k=3} \uparrow$ & $\mathbf{k=5} \uparrow$ & $\mathbf{k=10} \uparrow$ & $\widehat{\mathbf{MRR}}\uparrow$ \\
\midrule
\multirow{2}{*}{\rotatebox{90}{\makecell{Pre- \\ trained}}} & RSMILES \citep{rsmiles} & 56.3 & \textbf{79.2} & \textbf{86.2} & \textbf{91.0} & 0.680\\
& PMSR \citep{pmsr} & \textbf{62.0} & 78.4 & 82.9 & 86.8 & \textbf{0.704}\\
\midrule
\multirow{3}{*}{\rotatebox{90}{Temp.}} & Retrosym \citep{retrosim} & 37.3 & 54.7 & 63.3 & 74.1 & 0.480 \\
& GLN \citep{gln} & 52.5 & 74.7 & 81.2 & 87.9 & 0.641 \\
& LocalRetro \citep{localretro} & \textbf{52.6} & \textbf{76.0} & \textbf{84.4} & \textbf{90.6} & \textbf{0.650} \\
\midrule
\multirow{4}{*}{\rotatebox{90}{Synthon}}& GraphRetro \citep{graphretro} & \textbf{53.7} & 68.3 & 72.2 & 75.5 & 0.611 \\
&RetroDiff \citep{retrodiff} & 52.6 & \textbf{71.2} & \textbf{81.0} & 83.3 & \textbf{0.629}\\
&MEGAN \citep{megan} & 48.0 & 70.9 & 78.1 & \textbf{85.4} & 0.601\\
&G2G \citep{g2g} & 48.9 & 67.6 & 72.5 & 75.5 & 0.582 \\
\midrule
\multirow{8}{*}{\rotatebox{90}{Template-free}} & SCROP \citep{scrop} & 43.7 & 60.0 & 65.2 & 68.7 & 0.521 \\
&Tied Transformer \citep{tied-transformer} & 47.1 & 67.1 & 73.1 & 76.3 & 0.572\\
&Aug. Transformer \citep{aug-transformer} & 48.3 & - & 73.4 & 77.4 & 0.569 \\
&Retrobridge (*) \citep{retrobridge} & 50.3 & \textbf{74.0} & \textbf{80.3} & \textbf{85.1} & 0.622 \\
&GTA\_{aug} \citep{gta} & 51.1 & 67.6 & 74.8 & 81.6 & 0.605 \\
&Graph2SMILES \citep{graph2smiles} & 52.9 & 66.5 & 70.0 & 72.9 & 0.597\\
& Retroformer \citep{retroformer} & 53.2 & 71.1 & 76.6 & 82.1 & 0.626\\
&DualTF\_{aug} \citep{towards-ebm} & 53.6 & 70.7 & 74.6 & 77.0 & 0.619\\
\rowcolor{ours}
     & Unaligned & 4.1 & 6.5 & 7.8 & 9.8 & 0.056 \cellcolor{lightgreen!50!ours}\\
    \rowcolor{ours}
    & DiffAlign-input& 44.1 & 65.9 & 72.2 & 78.7  & \cellcolor{lightgreen!50!ours}0.554\\
    \rowcolor{ours} &DiffAlign-PE& 49.0 & 70.7 & 76.6 & 81.8 & \cellcolor{lightgreen!50!ours}0.601\\
    \rowcolor{ours}
    \rowcolor{ours}
    \multirow{-3}{*}{\rotatebox{90}{\textbf{Ours}}}& DiffAlign-PE+skip & \textbf{54.7} & 73.3 & 77.8 & 81.1 & \cellcolor{lightgreen!50!ours}\textbf{0.639} \\
\bottomrule
\label{tab:50k}
\end{tabular}
\end{table}
\subsection{Benefits of diffusion: guided generation and inpainting} 
We study the use of an external function for guided generation and inpainting, thus demonstrating the advantages of graph-to-graph diffusion in the field of retrosynthesis as a concrete application domain. In retrosynthesis, an interesting use-case for posthoc-conditioning is to increase the probability of the generated reactants being synthesisable, using some pre-trained synthesisability model. To showcase the idea, we use a toy synthesisability model based on the total count of atoms in the reactants \citep{ertl2009estimation}.\Cref{fig:atom_guidance_small} shows an example where we nudge the model towards precursors with lower atom count with details of the procedure given in \cref{algo:sampling_with_atom_guidance}. More detailed comments on this procedure are available in \cref{app:post-training-conditioning}.
To illustrate the benefits of inpainting, we present a hypothetical scenario of finding a known synthesis pathway, in particular, BHC's green synthesis of Ibuprofen \citep{cann2000ibuprofen}. 
\cref{fig:ibuprofen_synthesis} compares the output of our model to the ground truth synthesis. 
We write out the steps in detail in \cref{app:ibuprofen-synthesis}.
 
\section{Conclusion}
\label{sec:conclusion-limitations}

In this work, we study an important aspect of conditional graph diffusion models: the equivariance of the denoiser. We show that a permutation equivariant model converges to a 'mean' distribution for all graph components—a fundamental limitation stemming from symmetry in the diffusion process. 
We propose aligned permutation equivariance as a solution, forcing the model to only consider permutations that maintain alignment between conditioning and generated graphs. Our aligned denoisers achieve state-of-the-art results among template-free methods, reaching a top-$1$ accuracy beyond template-based methods. This approach unlocks the benefits of graph diffusion in retrosynthesis, including flexible post-training conditioning and adjustable sampling steps during inference—valuable properties for interactive applications and multi-step retrosynthesis planners.

A limitation is the requirement of some mapping information between conditional and generated graphs, though not fully mapped graphs. The model handles mapping errors with additional denoising steps. Another limitation is the high computational demand of iterative denoising. Advances in accelerated diffusion sampling methods \citep{hoogeboom2021autoregressive, karras2022elucidating, lu2022dpm, song2023consistency, sauer2023adversarial, shih2024parallel} are likely to improve this aspect.
\begin{figure}
    \centering
    \adjustbox{max width=\textwidth}{
    \begin{tikzpicture}
    \tikzstyle{label}=[font=\normalsize]  
    \tikzstyle{arr}=[->,-latex,line width=1pt]  
    
    \node[anchor=south west,inner sep=0] (image) at (0,0)
      {\includegraphics[width=\textwidth]{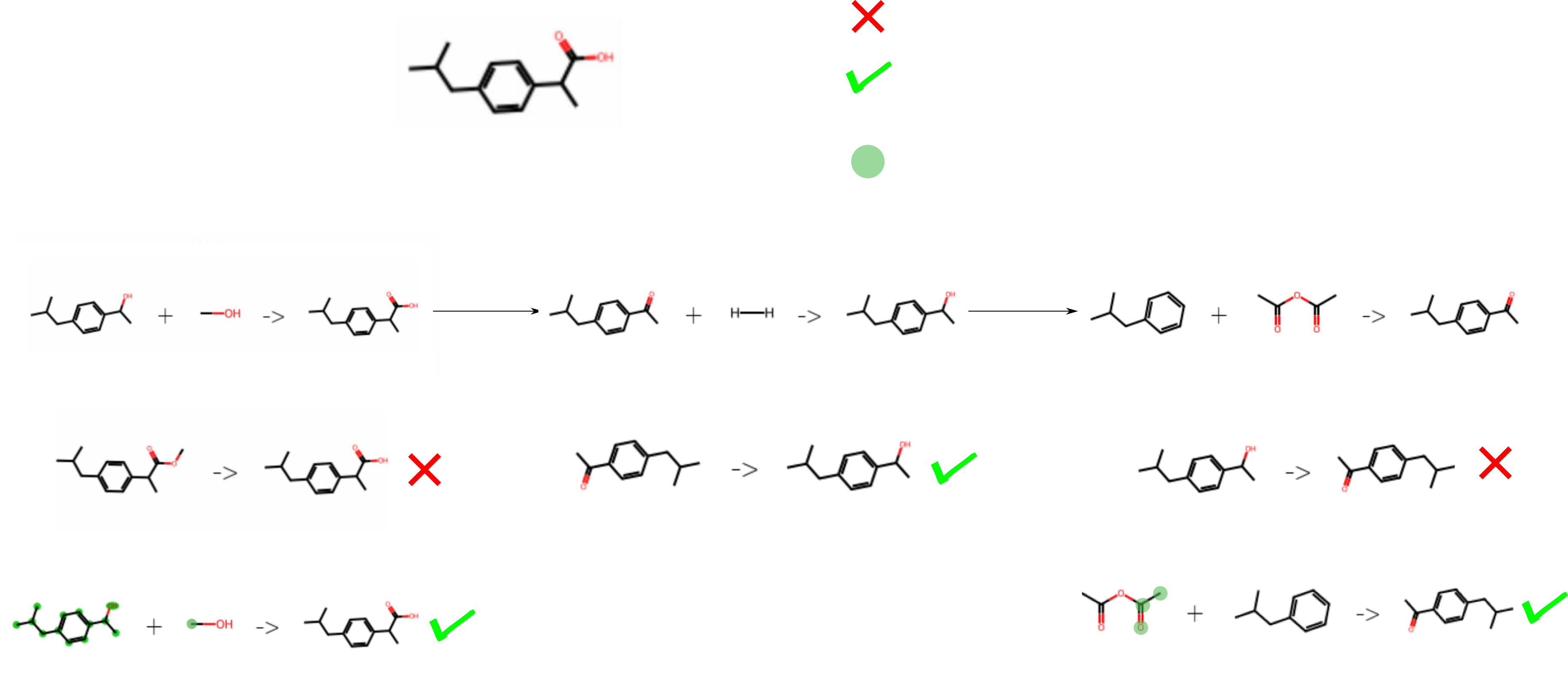}};
    
    \begin{scope}[x={(image.south east)},y={(image.north west)}]
      \node[label] at (.16,.88) {\small Objective: synthesize};
      \node[label] at (.75,.98) {\small = does not correspond to ground truth};
      \node[label] at (.78,.89) {\small = the correct reaction (possibly up to implicit};
      \node[label] at (.71,.83) {\small $H_2$ not present in the data)};
      \node[label] at (.66,.76) {\small = inpainted form};
      \node[label] at (-.12,.54) {\small Real (retro)synthesis path};
      \node[label] at (-.125,.33) {\small Basic generation};
      \node[label] at (-.12,.12) {\small With inpainting};
      
      \node[label] at (.12,.64) {\small 1. Carbonylation};
      \node[label] at (.45,.64) {\small 2. Hydrogenation};
      \node[label] at (.85,.64) {\small 3. Friedel-Crafts Acylation};
      \node[label] at (.16,0.) {\small Knowledge about the desired main reactant};
      \node[label] at (.14,-.06) {\small and a guess about side reactants};
      \node[label] at (.81,0.) {\small Knowledge about the type of};
      \node[label] at (.81,-.06) {\small reaction (acylation)};
    \end{scope}
    \end{tikzpicture}}
    \caption{Replicating BHC's green synthesis of Ibuprofen using our model interactively, and comparing to the known synthesis path.}
    \vspace{-0.0cm}
    \label{fig:ibuprofen_synthesis}
\end{figure}
\begin{figure}[t!]
\centering
\begin{subfigure}{0.47\textwidth}
  \centering
\begin{tikzpicture}
    \tikzstyle{label}=[font=\scriptsize]  
    \tikzstyle{arr}=[->,-latex,line width=1pt]  
    \node[anchor=south west,inner sep=0] (image) at (0,0)
      {\includegraphics[width=\columnwidth]{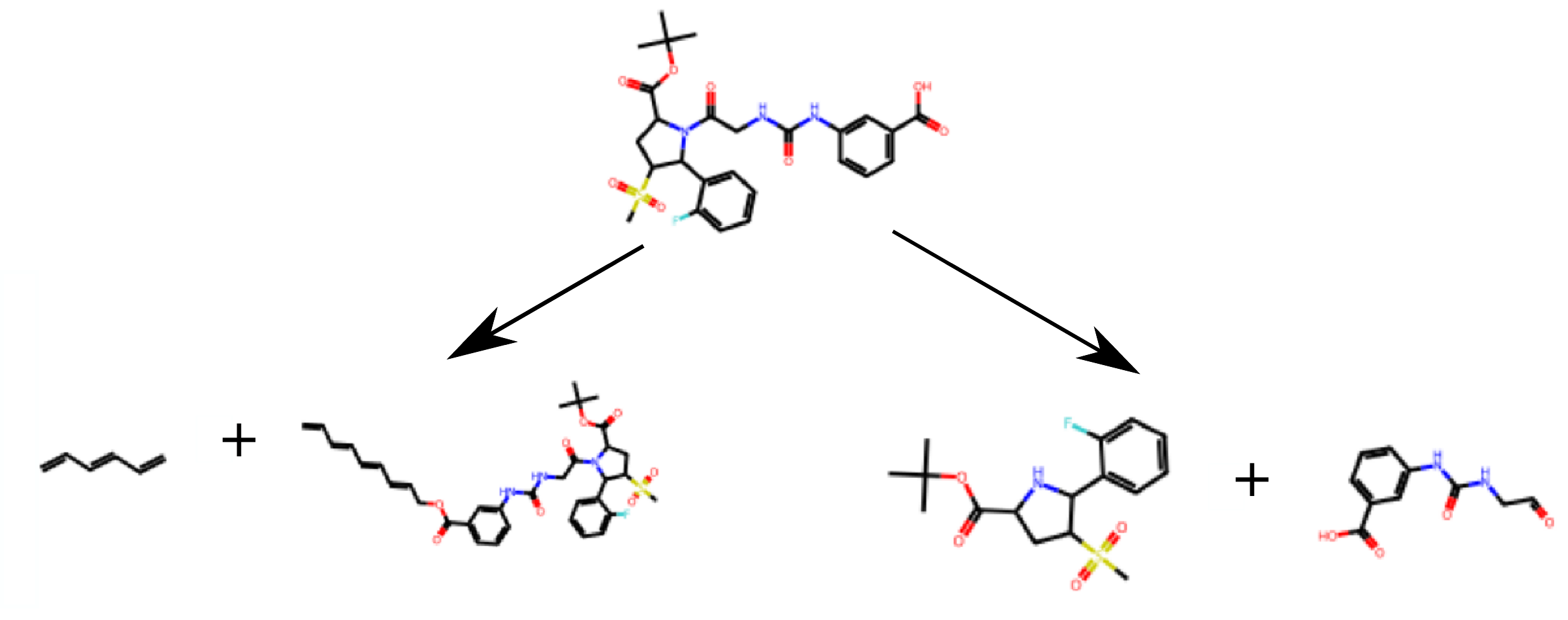}};

    \begin{scope}[x={(image.south east)},y={(image.north west)}]

    \node[label] at (.5,1.02) {\textbf{Product}};
    \node[label] at (.2,0.58) {\textbf{High atom count}};
    \node[label] at (.2,0.5) {\textbf{guidance}};
    \node[label] at (.8,0.58) {\textbf{Low atom count}};
    \node[label] at (.8,0.5) {\textbf{guidance}};

    \end{scope}
    \end{tikzpicture}

  \caption{Atom guidance}
  \label{fig:atom_guidance_small}
\end{subfigure}
\hfill
\begin{subfigure}{0.47\textwidth}
    \resizebox{\textwidth}{!}{
        \begin{tikzpicture}
    \tikzstyle{label}=[font=\scriptsize]  
    \tikzstyle{arr}=[->,-latex,line width=1pt]  
    \node[anchor=south west,inner sep=0] (image) at (0,0)
      {\includegraphics[width=\columnwidth]{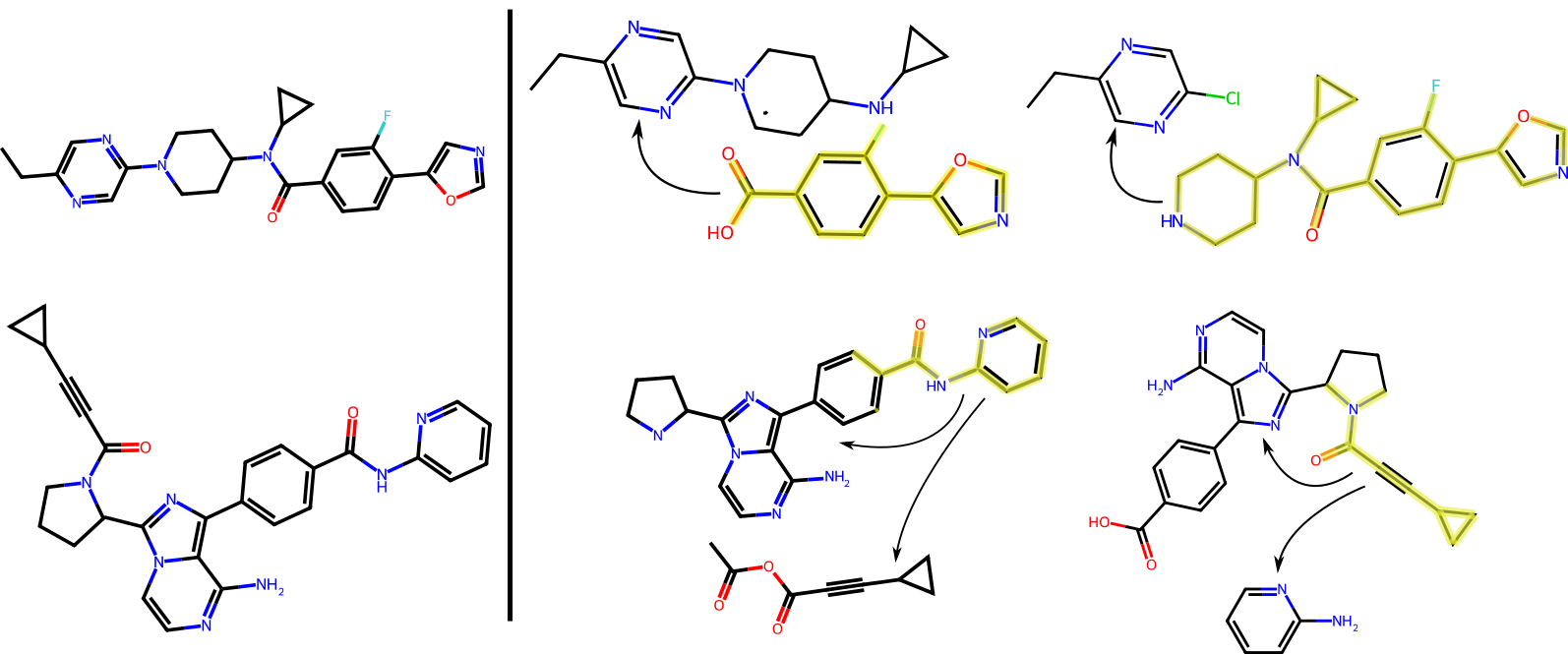}};

    \begin{scope}[x={(image.south east)},y={(image.north west)}]
      \node[label] at (.16,1.02) {\textbf{Product}};

    \node[label] at (.70,1.02) {\textbf{Generate one reactant conditional on another}};

    \node[label] at (.70,0.56) {\textbf{Generate conditional on desired substructure}};

    \end{scope}
      
  \end{tikzpicture}
    }

    \caption{Inpainting}
\label{fig:inpainting}
\end{subfigure}
\caption{(a) Atom count guidance lets us specify if reactants should have many or few atoms, controlling the \emph{atom economy}. (b) Examples of inpainting with our model. Parts highlighted in yellow are fixed by a practitioner to reflect desired characteristics, and the diffusion model completes the reaction.\looseness-1}
\end{figure}

\clearpage

\subsubsection*{Acknowledgments}
We acknowledge funding from the Research Council of Finland (grants 339730, 362408, 334600, 342077) and the Jane and Aatos Erkko Foundation (grant 7001703). We acknowledge CSC -- IT Center for Science, Finland, for awarding this project access to the LUMI supercomputer, owned by the EuroHPC Joint Undertaking, hosted by CSC (Finland) and the LUMI consortium through CSC. We acknowledge the computational resources provided by the Aalto Science-IT project. 

\phantomsection%
\addcontentsline{toc}{section}{References}
\begingroup
\bibliography{bibliography}
\bibliographystyle{iclr2025_conference}
\endgroup

\clearpage
\appendix
\section*{Appendices}
\renewcommand{\thetable}{A\arabic{table}}
\renewcommand{\thefigure}{A\arabic{figure}}

This appendix is organized as follows.

\cref{app:theoretical-aligned-equivariance} presents our theoretical results on aligned permutation equivariance and accompanying proofs. \cref{app:dgd} provides additional details on the setup for conditional graph diffusion, including the transition matrices, noise schedule, and data encoding as graphs. \cref{app:experimental-setup} includes additional details to replicate our experimental setup. \cref{app:other-baselines} provides detailed comparsion between our model and other retrosynthetic baselines. \cref{app:post-training-conditioning} develops a method to apply arbitrary post-training conditioning with discrete diffusion models, and presents case studies showcasing the usefulness of post-training conditional inference in applications relevant to retrosynthesis. This includes generating samples with desired properties and refining the generation interactively through inpainting. 

\medskip

\startcontents[appendices]

\begingroup
\sc
\printcontents[appendices]{l}{1}{}
\endgroup

\clearpage
\section{Aligned Permutation Equivariance}
\label{app:theoretical-aligned-equivariance}
\subsection{Visualizing our Alignment Methods}
We visualize the various alignment methods we use in this work in \cref{fig:ways_to_add_atom_mapping}.
\begin{figure}
    \centering

    \begin{tikzpicture}

    \tikzstyle{label}=[font=\normalsize]  
    \tikzstyle{arr}=[->,-latex,line width=1pt]  
    
    \node[anchor=south west,inner sep=0] (image) at (0,0)
      {\includegraphics[width=\textwidth]{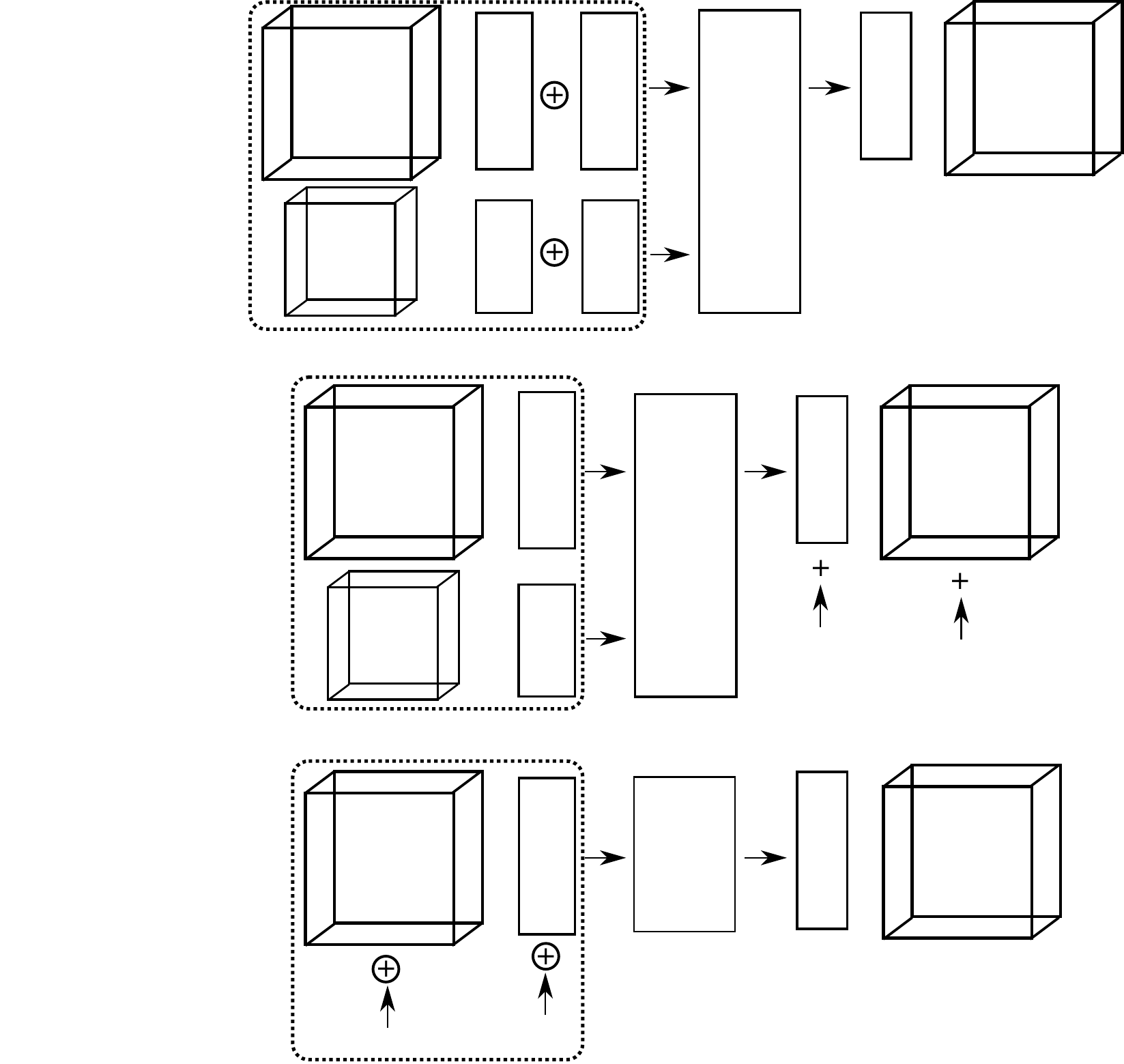}};
    
    \begin{scope}[x={(image.south east)},y={(image.north west)}]
    
      
      \node[label] at (.1,0.85) {\large \makecell{Matched positional\\ encodings} };
      \node[label] at (.1,0.47) {\large \makecell{Neural net predicts\\
the difference to \\
the product graph} };
      \node[label] at (.1,0.2) {\large \makecell{Align the graphs\\ in the input} };

      \node[label] at (.32,0.92) {$\X^E$};
      \node[label] at (.32,0.76) {$\Y^E$};
      \node[label] at (.45,0.92) {$\X^\mathcal{N}$};
      \node[label] at (.45,0.76) {$\Y^\mathcal{N}$};
      \node[label, rotate=90] at (.545,0.92) {\scriptsize$\P^{\Y\to\X}\enc$};
      \node[label] at (.545,0.76) {$\enc$};
      \node[label] at (.665,0.85) {\large GNN};
      \node[label, rotate=90] at (.79,0.91) {\scriptsize$D_\theta(\X,\Y)^\mathcal{N}$};
      \node[label, rotate=0] at (.92,0.91) {\scriptsize$D_\theta(\X,\Y)^E$};

      \node[label] at (.35,0.565) {$\X^E$};
      \node[label] at (.35,0.405) {$\Y^E$};
      \node[label] at (.49,0.565) {$\X^\mathcal{N}$};
      \node[label] at (.49,0.405) {$\Y^\mathcal{N}$};
      \node[label] at (.61,0.485) {\large GNN};
      \node[label, rotate=90] at (.73,0.56){\scriptsize$D_\theta(\X,\Y)^\mathcal{N}$};
      \node[label, rotate=0] at (.86,0.56){\scriptsize$D_\theta(\X,\Y)^E$};
      \node[label, rotate=0] at (.73,0.39){\scriptsize$\P^{\Y\to\X}\Y^\N$};
    \node[label, rotate=0] at (.88,0.39){\scriptsize$P^{\Y\to\X}\Y^E(P^{\Y\to\X})^\top$};

    \node[label] at (.35,0.2) {$\X^E$};
    \node[label] at (.35,0.02) {$\Y^E$};
    \node[label] at (.49,0.2) {$\X^\mathcal{N}$};
    \node[label] at (.49,0.03) {$\Y^\mathcal{N}$};
    \node[label] at (.61,0.2) {\large GNN};
    \node[label, rotate=90] at (.73,0.2){\scriptsize$D_\theta(\X,\Y)^\mathcal{N}$};
    \node[label, rotate=0] at (.86,0.2){\scriptsize$D_\theta(\X,\Y)^E$};
      
    \end{scope}
    \end{tikzpicture}

    \caption{Different ways to align the graphs in the architecture. All of them can be combined. The $\oplus$ sign means concatenation along the feature dimension, and $+$ is the standard addition. All of the methods can be combined together.}
    \label{fig:ways_to_add_atom_mapping}
\end{figure}

\subsection{Proof that Permutation Equivariant Denoisers Do Not Recover the Identity Data}
\label{app:perm_equivariance_identity}

\paragraph{Definitions} Let us consider a data set $\mathcal{D} = \{\X_n, \Y_n, \P_{n}^{\Y\to\X}\}_{n=1}^{N_{obs}}$, where for all data points, $\X_n = \P_{n}^{\Y\to\X} \Y_n$, that is, both sides of the reactions are equivalent, up to some permutation, as defined in the atom mapping matrix $\P_{n}^{\Y\to\X}$. It is always possible to preprocess the data such that the rows of $\Y_n$ are permuted with $\Y_n \leftarrow \P_{n}^{\Y\to\X}\Y_n$ so that the resulting atom mapping between $\Y_n$ and $\X_n$ is always identity. For simplicity, we assume such a preprocessed data set in this section. 

Let us assume that the one-step denoiser probability, $p_\theta(\X_0\mid\X_T, \Y)$, is parameterized by the neural network $D_\theta(\X_T, \Y) \in \mathbb{R}^{N\times K}$ such that the probability factorises for the individual nodes and edges (so there is one output in the network for each node and each edge): $p_\theta(\X_0\mid\X_T, \Y) = \prod_{i} \sum_k \X^N_{0,i,k} D_\theta(\X_T, \Y)^\N_{i,k} \prod_{i,j} \sum_k \X_{0,i,j,k}^E D_\theta(\X_T, \Y)^E_{i,j,k}$. 

\textbf{The ideal denoiser} The correct one-step denoiser $D_\theta(\X_T, \Y) = \Y$. This can be shown with the Bayes' rule by $q(\X_0\mid\X_T, \Y) = \frac{q(\X_T\mid\X_0,\Y) q(\X_0\mid\Y)}{q(\X_T\mid\Y)} = \frac{q(\X_T) q(\X_0\mid\Y)}{q(\X_T)} = q(\X_0\mid\Y)$. Because one $\X$ always matches with exactly one $\Y$ in the data, this is a delta distribution $q(\X_0\mid\Y) = \prod_i \delta_{x_{0,i}, y_i}$, where we define $x_{0,i}$ and $y_i$ as the value of the $i$:th node / edge. It is also easy to see that $D_\theta(\X_T, \Y) = \Y$ is the optimal solution for the cross-entropy loss:
\begin{align}
    &-\sum_{(\X_0, \Y)} q(\X_{0}) \log p_\theta(\X_0\mid\X_T, \Y) \propto -\sum_{\Y} \log p_\theta(\Y\mid\X_T, \Y) \nonumber\\
     &\qquad = -\sum_{\Y} \log \left[ \prod_{i} \sum_k \Y^N_{0,i,k} D_\theta(\X_T, \Y)^\N_{i,k} \prod_{i,j} \sum_k \Y_{0,i,j,k}^E D_\theta(\X_T, \Y)^E_{i,j,k} \right].
\end{align}
All of the sums $\sum_k \Y^N_{0,i,k} D_\theta(\X_T, \Y)^\N_{i,k}$ and $\sum_k \Y_{0,i,j,k}^E D_\theta(\X_T, \Y)^E_{i,j,k}$ are maximized for each $i,j$ and $\Y$ if $D_\theta(\X_T, \Y) = \Y$. In this case, the loss goes to zero. 





The following theorem states that if the neural net is permutation equivariant, it will converge to a `mean' solution, where the output for each node and each edge is the marginal distribution of nodes and edges in the conditioning product molecule, instead of the global optimum $D_\theta(\X_T, \Y) = \Y$.

\begin{theorem}{The optimal permutation equivariant denoiser}
\label{theorem:optimal_pe_denoiser}
    Let $D_\theta(\X_T, \Y)$ be permutation equivariant s.t.\ $D_\theta(\P\X_T, \Y) = \P D_\theta(\X_T, \Y)$, and let $q(\X_T)$ be permutation invariant. The optimal solution with respect to the cross-entropy loss with the identity data is, for all nodes $i$ and $j$
    \begin{equation}
        \begin{cases}
             D_\theta(\X_T, \Y)^\N_{i,:} = \hat\y^\N, \hat \y^\N_k = \sum_{i} \Y_{i,k} / \sum_{i,k} \Y_{i,k},\\
             D_\theta(\X_T, \Y)^E_{i,j,:} = \hat\y^E, \hat \y^E_k = \sum_{i,j} \Y_{i,j,k} / \sum_{i,j,k} \Y_{i,j,k}, 
        \end{cases}
    \end{equation}
    where $\hat\y^\N_k$ and $\hat\y^E_k$ are the marginal distributions of node and edge values in $\Y$.
\end{theorem}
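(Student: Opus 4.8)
The plan is to reduce the constrained minimization to a pointwise cross-entropy minimization via a symmetrization argument that exploits both hypotheses together. First I would use that at $t=T$ the noised sample is drawn from the prior and is independent of $\X_0$, so $q(\X_T\mid\X_0)=q(\X_T)$; since the identity data satisfies $\X_0=\Y$ after the preprocessing that makes the node mapping the identity, the expected cross-entropy decomposes as $\sum_\Y p_{\mathrm{data}}(\Y)\,\mathcal L_\Y$ with
\begin{equation}
\mathcal L_\Y = -\sum_{\X_T} q(\X_T)\Big[\textstyle\sum_i \log D_\theta(\X_T,\Y)^\N_{i,y_i} + \sum_{i,j}\log D_\theta(\X_T,\Y)^E_{i,j,y_{ij}}\Big],
\end{equation}
where $y_i,y_{ij}$ are the (scalar) node and edge labels of $\Y$ and the inner sums collapse because $\Y$ is one-hot. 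Each $\mathcal L_\Y$ can be optimized independently over the equivariant map $D_\theta(\cdot,\Y)$, so I would fix $\Y$ and treat the node part, the edge part being analogous.

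The key step is the symmetrization. For any permutation matrix $\P$ with underlying permutation $\pi$, I would substitute $\X_T\mapsto\P\X_T$ in the sum, invoke $q(\P\X_T)=q(\X_T)$, apply equivariance $D_\theta(\P\X_T,\Y)^\N_{i,:}=D_\theta(\X_T,\Y)^\N_{\pi^{-1}(i),:}$, and relabel the node index to obtain, for every $\pi$,
\begin{equation}
\mathcal L^\N_\Y = -\sum_{\X_T} q(\X_T)\textstyle\sum_i \log D_\theta(\X_T,\Y)^\N_{i,\,y_{\pi(i)}}.
\end{equation}
Averaging this identity over all $\pi\in S_N$ (so the specific index convention is immaterial) and using that $\pi(i)$ visits each index equally often replaces the single hard target at node $i$ by the empirical label multiset of $\Y$:
\begin{equation}
\mathcal L^\N_\Y = -\sum_{\X_T} q(\X_T)\,\tfrac1N\textstyle\sum_i\sum_k n_k\,\log D_\theta(\X_T,\Y)^\N_{i,k},\qquad n_k=\textstyle\sum_i \Y_{i,k}.
\end{equation}

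I would then finish with Gibbs' inequality. For each fixed $\X_T$ and $i$, the inner objective $\sum_k n_k\log D_\theta(\X_T,\Y)^\N_{i,k}$ is the (scaled) negative cross-entropy between the marginal $(n_k/N)_k$ and the predicted simplex vector, hence maximized uniquely at $D_\theta(\X_T,\Y)^\N_{i,k}=n_k/N=\hat\y^\N_k$. Crucially this optimizer is independent of $i$ and of $\X_T$, so it is a constant output and therefore trivially equivariant: it lies in the feasible class and attains the lower bound, with strict concavity of $\log$ giving uniqueness on the support of $q$. The identical computation under the edge action $D_\theta(\P\X_T,\Y)^E=\P D_\theta(\X_T,\Y)^E\P^\top$ yields $\hat\y^E_k$.

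The main obstacle I anticipate is the symmetrization bookkeeping: one must carefully track that $\P$ permutes the \emph{rows} of $\X_T$ while the targets $y_i$ stay fixed, so that after the change of variables the permutation lands on the label index rather than on the prediction, and one must verify that the $\pi$-averaged objective is a genuine lower bound \emph{achieved} inside the equivariant class — which holds precisely because the minimizer is constant across nodes. A minor extra care is needed for edges, where $\P(\cdot)\P^\top$ sends a pair $(i,j)$ to $(\pi(i),\pi(j))$ and the diagonal entries must be treated consistently with the convention used to define $\hat\y^E_k$.
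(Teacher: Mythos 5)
Your proposal is correct and follows essentially the same argument as the paper's proof: exploit permutation invariance of $q(\X_T)$ together with equivariance of $D_\theta$ to move the permutation onto the label index, average over permutations so that the one-hot targets collapse to the empirical marginals, and then minimize the resulting cross-entropy pointwise. The only cosmetic differences are that you average the identity over all of $S_N$ rather than decomposing $q(\X_T)$ into isomorphism classes (which neatly sidesteps the automorphism-counting constants the paper glosses over), and that you make explicit the Gibbs-inequality step and the observation that the row-constant optimizer is itself equivariant, hence feasible.
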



\begin{proof}
    \textbf{Nodes.} The cross-entropy denoising loss for the nodes can be written as
    \begin{align}
        CE &= -\sum_{(\X_0, \Y)} \mathbb{E}_{q(\X_T\mid\X_0)} \sum_{i,k} \X^\N_{0,i,k} \log D_\theta(\X_T, \Y)_{i,k} \\&= -\sum_{(\X_0, \Y)} \mathbb{E}_{q(\X_T)} \sum_{i,k} \X^\N_{0,i,k} \log D_\theta(\X_T, \Y)_{i,k}\\
        &= -\sum_{\Y} \mathbb{E}_{q(\X_T)} \sum_{i,k} \Y^\N_{i,k} \log D_\theta(\X_T, \Y)_{i,k},
    \label{eq:CE-loss}
    \end{align}
    where the first equality is due to $q(\X_T\mid\X_0)$ containing no information about $\X_0$ at the end of the forward process, and the second equality is due to $\X_0 = \Y$ in the data. Since $q(\X_T)$ is permutation invariant, that is, all permuted versions $\P\X_T$ of $\X_T$ are equally probable, we can split the expectation into two parts $\mathbb{E}_{q(\X_T)} [\cdot] \propto \mathbb{E}_{q(\X_T')} \sum_\P [\cdot]$, where $\X_T'$ contain only graphs in distinct isomorphism classes, and $\sum_\P$ sums over all permutation matrices of size $N\times N$,
    \begin{align}
        CE &\propto -\sum_{\Y} \mathbb{E}_{q(\X_T')} \sum_\P \sum_{i,k} \Y^\N_{i,k} \log D_\theta(\P\X_T', \Y)^\N_{i,k}.\label{eq:perm_equivariance_identity_0}
    \end{align}
    Due to the permutation equivariance, $D_\theta(\P\X_T', \Y)^\N = \P D_\theta(\X_T', \Y)^\N$, and $D_\theta(\P\X_T', \Y)^\N_{i,k} = D_\theta(\X_T', \Y)^\N_{\pi(i),k}$, where $\pi(i)$ denotes the index the index $i$ is mapped to in the permutation $\P$. Thus,
    \begin{align}
        CE &\propto -\sum_{\Y} \mathbb{E}_{q(\X_T')} \sum_\pi \sum_{i,k} \Y^\N_{i,k} \log D_\theta(\X_T', \Y)^\N_{\pi(i),k} \\&= -\sum_{\Y} \mathbb{E}_{q(\X_T')} \sum_{i,k} \sum_\pi \Y^\N_{\pi^{-1}(i),k} \log D_\theta(\X_T', \Y)^\N_{i,k},
        \label{eq:perm2}
    \end{align}
    where the equality is due to all permutations being in a symmetric position: What matters is the relative permutation between $\Y^\N$ and $D_\theta(\X_T', \Y)$. Now, $\sum_\pi \Y^\N_{\pi^{-1}(i),k} = \sum_\pi \Y^\N_{\pi(i),k} = C \sum_i \Y^\N_{i,k}$, because for each node index $i$, all the nodes in $\Y$ are included equally often due to symmetry. This is proportional to the marginal distribution $\hat \y^\N$ up to some constant, and thus we have:
    \begin{align}
        CE &\propto -\sum_{\Y} \mathbb{E}_{q(\X_T')} \sum_{i,k} \hat\y^\N_{k} \log D_\theta(\X_T', \Y)_{i,k}.
    \label{eq:optimal-denoiser}
    \end{align}
    The optimal value for each node output $i$ is the empirical marginal distribution $D_\theta(\X_T', \Y)^N_{i,:} = (\hat \y^\N)^\top$.

    \textbf{Edges} With the exact same steps, we can get the equivalent of \cref{eq:perm_equivariance_identity_0} for the edges:
    \begin{align}
        CE &\propto -\sum_{\Y} \mathbb{E}_{q(\X_T')} \sum_P \sum_{i,j,k} \Y^E_{i,j,k} \log D_\theta(P\X_T', \Y)^E_{i,j,k}.
        \label{eq:optimal-denoiser-edges}
    \end{align}
    The permutation equivariance property for the edges is now written as $D_\theta(\P\X_T', \Y)^E = \P D_\theta(\X_T', \Y)^E \P^\top$, and $D_\theta(\P\X_T', \Y)^\N_{i,jk} = D_\theta(\X_T', \Y)^\N_{\pi(i),\pi(j),k}$. Thus,
    \begin{align}
        CE &\propto -\sum_{\Y} \mathbb{E}_{q(\X_T')} \sum_\pi \sum_{i,j,k} \Y^E_{i,j,k} \log D_\theta(\X_T', \Y)^E_{\pi(i),\pi(j),k} \\
        &= -\sum_{\Y} \mathbb{E}_{q(\X_T')} \sum_{i,j,k} \sum_\pi \Y^E_{\pi^{-1}(i),\pi^{-1}(j),k} \log D_\theta(\X_T', \Y)^E_{i,j,k},
    \end{align}
    with the equality holding again due to symmetry. Now, for any pair of node indices $i$ and $j$, the set of all permutations contains all pairs of node indices $(\pi(i),\pi(j))$ equally often due to symmetry. These pairs correspond to edges in $\Y^E$, and thus $\sum_\pi \Y^E_{\pi^{-1}(i),\pi^{-1}(j),k} = \sum_\pi \Y^E_{\pi^(i),\pi(j),k} = D \sum_{i,j} \Y^E_{i,j,k}$, where $D$ is a constant that counts how many times each edge pair appeared in the set of all permutations. This is again proportional to the marginal distribution over the edges $\hat \y^E$
    \begin{align}
        CE &\propto -\sum_{\Y} \mathbb{E}_{q(\X_T')} \sum_{i,j,k} \hat\y^E_{k} \log D_\theta(\X_T', \Y)_{i,j,k}.
    \end{align}
    Again, the optimal value for each edge output $(i,j)$ is $D_\theta(\X_T', \Y)^E_{i,j,:} = (\hat \y^E)^\top$.
\end{proof}

\subsection{Proof that Aligned Permutation Equivariant Denoisers Recover the Identity Data}
Here, we detail the steps in the proof of Theorem 1, but with aligned permutation eqivariance intead of regular permutation equivariance, and highlight the principal differences in both calculations. We start with versions of \cref{eq:CE-loss} rewritten with atom mapping explicitly, and $P^{Y\to X}$ conditioning in the denoiser:
\begin{align}
    CE &= -\sum_{(X_0, Y, P^{Y\to X})} \mathbb{E}_{q(X_T\mid X_0)} \sum_{i,k} X^N_{0,i,k} \log D_\theta(X_T, Y, P^{Y\to X})_{i,k} \\
    &= -\sum_{(Y,P^{Y\to X})} \mathbb{E}_{q(X_T)} \sum_{i,k} (P^{Y\to X}Y^N)_{i,k} \log D_\theta(X_T, Y, P^{Y\to X})_{i,k}
\end{align}

Versions of \cref{eq:perm_equivariance_identity_0} to \cref{eq:perm2} with atom mapping conditioning and utilizing aligned permutation equivariance (instead of permutation equivariance):
\begin{align}
    CE &\propto -\sum_{Y,P^{Y\to X}} \mathbb{E}_{q(X_T')} \sum_P \sum_{i,k} (P^{Y\to X}Y^N)_{i,k} \log D_\theta(PX_T', Y, P^{Y\to X})^N_{i,k} \\
    &= -\sum_{Y,P^{Y\to X}} \mathbb{E}_{q(X_T')} \sum_P \sum_{i,k} (P^{Y\to X}Y^N)_{i,k} \log D_\theta(X_T', Y, P^{-1}P^{Y\to X})^N_{\pi(i),k} \\
    &= -\sum_{Y,P^{Y\to X}} \mathbb{E}_{q(X_T')} \sum_{i,k} \sum_\pi (P^{Y\to X}Y^N)_{\pi^{-1}(i),k} \log D_\theta(X_T', Y, P^{-1}P^{Y\to X})^N_{i,k} \\
    &= -\sum_{Y,P^{Y\to X}} \mathbb{E}_{q(X_T')} \sum_{i,k} \sum_P (P^{-1}P^{Y\to X}Y^N)_{i,k} \log D_\theta(X_T', Y, P^{-1}P^{Y\to X})^N_{i,k}
\end{align}
We could assume again for notational simplicity (not necessary) that $P^{Y\to X}=I$ in the data. In any case, we have a cross-entropy loss where we try to predict a permutation of $Y$, $P^{-1}P^{Y\to X}Y$. Both $P^{-1}P^{Y\to X}$ and $Y$ are given as explicit information to the neural network, so for each $P$, the optimal output is $P^{-1}P^{Y\to X}Y$, which is a permutation of the original $Y$. 

The primary difference in these two proofs: 
\begin{itemize}
    \item With the version with regular permutation equivariance, only the data term $Y$ is dependent on the permutation $P$ in \cref{eq:perm2}. This makes it possible to factor out the sum $\sum_\pi Y^N_{\pi^{-1}(i),k}$, leading to \cref{eq:optimal-denoiser}. This factorization is also why the optimal neural net output is not dependent on $P$.
    \item With aligned permutation equivariance, the denoiser term retains dependence on $P^{-1}$, and the maximal factorization is $\sum_P (P^{-1}Y^N)_{i,k} \log D_\theta(X_T', Y, P^{-1})^N_{i,k}$. Thus, for each combination of $P$ and $Y$, the neural net will have a different optimal output. 
\end{itemize}

\subsection{Proof of the Generalized Distributional Invariance with Aligned Equivariance}\label{app:dist_invariance_from_alignment}
 We start by proving a useful lemma, and then continue and continue to the proof of the main theorem.

\begin{lemma}\label{lemma:aligned_denoiser_aligned_reverse_step}
    (An aligned denoiser induces aligned distribution equivariance for a single reverse step)

    If the denoiser function $D_\theta$ has the aligned equivariance property $D_\theta(\R\X, \Q\Y, \R\P^{\Y\to\X}\Q^\top) = \R D_\theta(\X,\Y,\P^{\Y\to\X})$, then the conditional reverse distribution $p_\theta(\X_{t-1}\mid\X_t,\Y,\P^{\Y\to\X})$ has the property $p_\theta(\R\X_{t-1}\mid \R\X_t,\Q\Y,\R\P^{\Y\to\X}\Q^\top)=p_\theta(\X_{t-1}\mid\X_t,\Y,\P^{\Y\to\X})$.
\end{lemma}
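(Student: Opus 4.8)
The plan is to start from the parameterization of the reverse step in \cref{eq:conditional-node-backward}, $p_\theta(\X_{t-1}\mid\X_t,\Y,\P^{\Y\to\X}) = \sum_{\X_0} q(\X_{t-1}\mid\X_t,\X_0)\,\tilde p_\theta(\X_0\mid\X_t,\Y,\P^{\Y\to\X})$, and to push the permutations $\R$ and $\Q$ through each factor separately. First I would write the left-hand side $p_\theta(\R\X_{t-1}\mid\R\X_t,\Q\Y,\R\P^{\Y\to\X}\Q^\top)$ as a sum over a dummy target $\X_0'$, and then perform the change of variables $\X_0' = \R\X_0$. Since $\R$ acts as a bijection on the set of all graphs with $N_\X$ nodes, summing over all $\X_0'$ is the same as summing over all $\X_0$, turning the posterior factor into $q(\R\X_{t-1}\mid\R\X_t,\R\X_0)$ and the denoiser factor into $\tilde p_\theta(\R\X_0\mid\R\X_t,\Q\Y,\R\P^{\Y\to\X}\Q^\top)$.

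Second, I would dispose of the posterior factor. Because $q(\X_{t-1}\mid\X_t,\X_0)$ factorizes over individual nodes and edges (the factorization displayed just after \cref{eq:conditional-node-backward}), and each factor depends only on the per-element transition, relabeling all three of its arguments by the \emph{same} permutation leaves it unchanged: $q(\R\X_{t-1}\mid\R\X_t,\R\X_0) = q(\X_{t-1}\mid\X_t,\X_0)$. This is the same factorization-invariance that underlies permutation invariance for fully equivariant models, so it requires only a short justification.

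Third, which is the crux, I would show the denoiser factor is invariant, $\tilde p_\theta(\R\X_0\mid\R\X_t,\Q\Y,\R\P^{\Y\to\X}\Q^\top) = \tilde p_\theta(\X_0\mid\X_t,\Y,\P^{\Y\to\X})$. Here I invoke aligned equivariance, $D_\theta(\R\X_t,\Q\Y,\R\P^{\Y\to\X}\Q^\top) = \R\,D_\theta(\X_t,\Y,\P^{\Y\to\X})$, so that, writing $\sigma$ for the permutation carried by $\R$, the permuted denoiser's row $i$ equals the original denoiser's row $\sigma(i)$. Since $\X_0$ is one-hot, $\tilde p_\theta$ is a product over nodes and edges that selects, at each location, the denoiser entry indexed by the label of the target there; the target $\R\X_0$ carries at node $i$ the label that $\X_0$ has at node $\sigma(i)$, while the denoiser row is shifted by the same $\sigma$. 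Reindexing the product by $\sigma$ makes these two shifts cancel and recovers the original product, with the edge terms handled identically using the $\R(\cdot)\R^\top$ action.

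Finally, substituting both invariances back into the change-of-variable sum collapses the expression to $\sum_{\X_0} q(\X_{t-1}\mid\X_t,\X_0)\,\tilde p_\theta(\X_0\mid\X_t,\Y,\P^{\Y\to\X}) = p_\theta(\X_{t-1}\mid\X_t,\Y,\P^{\Y\to\X})$, completing the proof. I expect the main obstacle to be the index bookkeeping in the third step: verifying that the row-permutation of $D_\theta$ and the relabeling of $\X_0$ compose to the identity after reindexing the factorized product, and confirming that the $\Q$-action on $\Y$ together with the $\R\P^{\Y\to\X}\Q^\top$-action on the mapping is exactly what the aligned equivariance hypothesis absorbs. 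The change of variables and the posterior invariance are routine once the factorizations are written explicitly.
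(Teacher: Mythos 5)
Your proof is correct, but it is organized around a different decomposition than the paper's. You keep the graph-level mixture $p_\theta(\X_{t-1}\mid\X_t,\Y,\P^{\Y\to\X}) = \sum_{\X_0} q(\X_{t-1}\mid\X_t,\X_0)\,\tilde p_\theta(\X_0\mid\X_t,\Y,\P^{\Y\to\X})$, perform the change of variables $\X_0 \mapsto \R\X_0$ over the sum, and then establish two separate invariances: joint-permutation invariance of the factorized posterior $q$ and aligned invariance of the factorized denoiser distribution $\tilde p_\theta$. The paper instead never manipulates a graph-level sum over $\X_0$: it first collapses the parameterization into per-element reverse transition probabilities $F_\theta(\X_t,\Y,\P^{\Y\to\X})$ (each node/edge entry being a sum over that element's possible $\X_0$ labels), observes that $F_\theta$ inherits aligned equivariance from $D_\theta$ because it depends on $D_\theta$ only pointwise, and then writes $p_\theta(\X_{t-1}\mid\cdot)$ as a product of inner products of $\X_{t-1}$ one-hots with $F_\theta$, concluding by reindexing that product under $\pi$. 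The substance — aligned equivariance of $D_\theta$ plus reindexing of factorized products — is the same, but the routes differ in what they buy: the paper's version is more compact and avoids needing the bijection/change-of-variables argument on the space of graphs, while yours isolates explicitly which invariance each factor contributes and foreshadows the technique (relabeling a dummy summation variable by a permutation) that the paper itself uses later in the inductive proof of \cref{theorem:aligned_invariance}, where the sum over $\X_t$ is handled exactly the way you handle the sum over $\X_0$ here. One small point to tighten in a full write-up: when you say the permuted denoiser's row $i$ equals the original's row $\sigma(i)$, fix a single convention for whether $(\R D)_{i,:} = D_{\sigma(i),:}$ or $D_{\sigma^{-1}(i),:}$ and use the same $\sigma$ consistently for $(\R\X_0)$, since the cancellation in your third step relies on both shifts being by the identical permutation; with that convention pinned down, the edge case via $\R(\cdot)\R^\top$ goes through exactly as you describe.
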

\begin{proof}
    First, let us denote the transition probabilities from $t$ to $t-1$ with $F_\theta(\X_t,\Y,\P^{\Y\to\X})$, where formally $F_\theta(\X_t,\Y,\P^{\Y\to\X})^\N_{i,k} = \sum_{k'} q(\X^\N_{t-1,i,k}\mid\X^\N_{t,i,k},\X^\N_{0,i,k})D_\theta(\X_t, \Y, \P^{\Y\to\X})^\N_{i,k'}$ and $F_\theta(\X_t,\Y,\P^{\Y\to\X})^E_{i,j,k} = \sum_{k'} q(\X^E_{t-1,i,j,k}\mid\X^E_{t,i,j,k},\X^E_{0,i,j,k})D_\theta(\X_t, \Y, \P^{\Y\to\X})^E_{i,j,k'}$. Since the values of $F_\theta$ depend only pointwise on the values of $D_\theta$, $F_\theta$ is aligned permutation equivariant as well. 

    We continue by directly deriving the connection:
    \begin{align}
        &p_\theta(\R\X_{t-1}\mid \R\X_t,\Q\Y,\R\P^{\Y\to\X}\Q^\top) \\
        &=\prod_i \sum_k (\R \X_{t-1})^\N_{i,k} F_\theta(\R\X_t,\Q\Y,\R\P^{\Y\to\X}\Q^\top)_{i,k} \nonumber \\ &\qquad\times \prod_{i,j} \sum_k (\R \X_{t-1})^E_{i,j,k} F_\theta(\R\X_t,\Q\Y,\R\P^{\Y\to\X}\Q^\top)^E_{i,j,k}\\
        &=\prod_i \sum_k (\X_{t-1})^\N_{\pi(i),k} F_\theta(\X_t,\Y,\P^{\Y\to\X})_{\pi(i),k} \nonumber \\ &\qquad\times \prod_{i,j} \sum_k (\X_{t-1})^E_{\pi(i),\pi(j),k} F_\theta(\X_t,\Y,\P^{\Y\to\X})^E_{\pi(i),\pi(j),k},
    \end{align}
    where in the last line we used the aligned permutation equivariance definition, and the the effect of the permutation matrix $\R$ on index $i$ was denoted as $\pi(i)$. Now, regardless of the permutation, the products contain all possible values $i$ and pairs $i,j$ exactly once. Thus, the expression remains equal if we replace $\pi(i)$ with just $i$:
    \begin{align}
        &p_\theta(\R\X_{t-1}\mid \R\X_t,\Q\Y,\R\P^{\Y\to\X}\Q^\top)\\
        &=\prod_i \sum_k (\X_{t-1})^\N_{i,k} F_\theta(\X_t,\Y,\P^{\Y\to\X})_{i,k} \prod_{i,j} \sum_k (\X_{t-1})^E_{i,j,k} F_\theta(\X_t,\Y,\P^{\Y\to\X})^E_{i,j,k}\\
        &= p_\theta(\X_{t-1}\mid \X_t,\Y,\P^{\Y\to\X}),
    \end{align}
    which concludes the proof.
\end{proof}

\begin{theorem}{Aligned denoisers induce aligned permutation invariant distributions}
\label{theorem:aligned_invariance}
    If the denoiser function $D_\theta$ has the aligned equivariance property and the prior $p(\X_T)$ is permutation invariant, then the generative distribution $p_\theta(\X_0 \mid \Y, \P^{\Y\to\X})$ has the corresponding property for any permutation matrices $\R$ and $\Q$:
    \begin{equation}
        p_\theta(\R\X_0\mid \Q\Y,\R\P^{\Y\to\X}\Q^\top) = p_\theta(\X_{0}\mid\Y,\P^{\Y\to\X}) 
    \end{equation}
\end{theorem}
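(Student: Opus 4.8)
The plan is to reduce the $T$-step statement to the single-step result already established in \cref{lemma:aligned_denoiser_aligned_reverse_step}, using the Markov factorization of the generative model together with the permutation invariance of the prior. First I would write the marginal generative distribution $p_\theta(\X_0\mid\Y,\P^{\Y\to\X})$ explicitly as a sum over the latent trajectory $\X_{1:T}$, evaluated at the permuted arguments:
\begin{equation}
    p_\theta(\R\X_0 \mid \Q\Y, \R\P^{\Y\to\X}\Q^\top) = \sum_{\X_1,\dots,\X_T} p(\X_T)\, p_\theta(\R\X_0 \mid \X_1, \Q\Y, \R\P^{\Y\to\X}\Q^\top) \prod_{t=2}^T p_\theta(\X_{t-1} \mid \X_t, \Q\Y, \R\P^{\Y\to\X}\Q^\top),
\end{equation}
where the sum ranges over all discrete graph states at each intermediate time and the factorization is the one from the preliminaries, now carrying the extra conditioning on $\P^{\Y\to\X}$.

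The key manipulation is a change of summation variable: since $\R$ acts as a bijection on the finite set of graph states (it merely permutes node and edge indices), replacing each dummy variable $\X_t$ by $\R\X_t$ leaves the total sum unchanged. After this substitution the prior factor becomes $p(\R\X_T)$, and every transition factor takes the aligned form $p_\theta(\R\X_{t-1}\mid\R\X_t,\Q\Y,\R\P^{\Y\to\X}\Q^\top)$ — including the first one, which becomes $p_\theta(\R\X_0\mid\R\X_1,\Q\Y,\R\P^{\Y\to\X}\Q^\top)$.

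Now I would apply the two hypotheses term by term. By \cref{lemma:aligned_denoiser_aligned_reverse_step}, each single-step factor collapses to its unpermuted counterpart $p_\theta(\X_{t-1}\mid\X_t,\Y,\P^{\Y\to\X})$, and by permutation invariance of the prior $p(\R\X_T)=p(\X_T)$. What remains is exactly the sum defining $p_\theta(\X_0\mid\Y,\P^{\Y\to\X})$, which closes the argument.

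The main obstacle to handle carefully is the change-of-variables step: one must justify that summing over all graph configurations is invariant under the relabeling $\X_t \mapsto \R\X_t$, i.e.\ that $\R$ permutes the (finite, discrete) state space without collapsing or omitting states — this is precisely where the discreteness and the fact that $\R$ is a genuine permutation matrix are used. The rest is bookkeeping that chains the single-step lemma across all $T$ steps. An essentially equivalent alternative would be a downward induction on $t$, propagating the aligned-invariance property from the prior through one reverse step at a time via the lemma, but the direct change-of-variables argument is cleaner.
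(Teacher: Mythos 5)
Your proof is correct and relies on exactly the same ingredients as the paper's: the single-step result of \cref{lemma:aligned_denoiser_aligned_reverse_step}, the permutation invariance of $p(\X_T)$, and a change of summation variable under the bijection $\X_t \mapsto \R\X_t$ on the discrete state space. The paper merely packages this as the downward induction on $t$ that you mention as an alternative at the end (marginalizing one reverse step at a time), whereas you unroll the whole trajectory sum and substitute globally; the two arguments are equivalent in substance.
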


\begin{proof}

    Let us assume that the result holds for some noisy data level $t$: $p_\theta(\R\X_t\mid \Q\Y,\R\P^{\Y\to\X}\Q^\top) = p_\theta(\X_{t}\mid\Y,\P^{\Y\to\X})$. We will then show that the same will hold for $\X_{t-1}$, which we can use to inductively show that the property holds for $\X_0$. We begin as follows:
    \begin{align}
        p_\theta(\R\X_{t-1}\mid \Q\Y,\R\P^{\Y\to\X}\Q^\top) &= \sum_{\X_t} p_\theta(\R\X_{t-1}\mid\X_t,\Q\Y,\R\P^{\Y\to\X}\Q^\top) p_\theta(\X_t\mid \Q\Y,\R\P^{\Y\to\X}\Q^\top)\\
        &= \sum_{\X_t} p_\theta(\X_{t-1}\mid \R^{-1}\X_t,\Y,\P^{\Y\to\X}) p_\theta(\R^{-1}\X_t\mid\Y,\P^{\Y\to\X}).
    \end{align}
    where on the second line we used \cref{lemma:aligned_denoiser_aligned_reverse_step} and the assumption that the result holds for noise level $t$.
    The sum over $\X_t$ contains all possible graphs and all of their permutations. Thus, the exact value of $\R^{-1}$ does not affect the value of the final sum, as we simply go through the same permutations in a different order, and aggregate the permutations with the sum. Thus,
    \begin{align}
        p_\theta(\R\X_{t-1}\mid \Q\Y,\R\P^{\Y\to\X}\Q^\top) &= \sum_{\X_t} p_\theta(\X_{t-1}\mid\X_t,\Y,\P^{\Y\to\X}) p_\theta(\X_t\mid\Y,\P^{\Y\to\X})\\
        &= p_\theta(\X_{t-1}\mid\Y,\P^{\Y\to\X})
    \end{align}
    showing that if the result holds for level $t$, then it also holds for level $t-1$. We only need to show that it holds for level $\X_{T-1}$ to start the inductive chain:
    \begin{align}
        p_\theta(\R\X_{T-1}\mid \Q\Y,\R\P^{\Y\to\X}\Q^\top) &= \sum_{\X_T} p_\theta(\R\X_{T-1}\mid\X_T,\Q\Y,\R\P^{\Y\to\X}\Q^\top) p(\X_T)\\
        &= \sum_{\X_T} p_\theta(\X_{T-1}\mid \R^{-1}\X_T,\Y,\P^{\Y\to\X}) p(\R^{-1}\X_T),
    \end{align}
    where on the second line we again used \cref{lemma:aligned_denoiser_aligned_reverse_step} and the permutation invariance of $p(\X_T)$. Again, the exact value of $\R^{-1}$ does not matter for the sum, since the sum goes through all possible permutations in any case. Thus we have
    \begin{align}
        p_\theta(\R\X_{T-1}\mid \Q\Y,\R\P^{\Y\to\X}\Q^\top) &= \sum_{\X_T} p_\theta(\X_{T-1}\mid \X_T,\Y,\P^{\Y\to\X}) p(\X_T)\\
        &= p_\theta(\X_{T-1}\mid \Y,\P^{\Y\to\X}).
    \end{align}
    Thus, since the property holds for $\X_{T-1}$, it also holds for $\X_{T-2},\dots$, until $\X_0$. This concludes the proof.
\end{proof}

\subsection{Proofs that Our Denoisers Are Aligned Permutation Equivariant}
In this section, we show for each of the three alignment methods that the corresponding denoisers do indeed fall within the aligned permutation equivariance function class. \cref{fig:ways_to_add_atom_mapping} summarizes the different alignment methods.

\paragraph{Atom-mapped positional encodings} \label{app:denoisers_aligned_equivariance}
We start by writing out one side of the aligned permutation equivariance condition, $D_\theta(\R\X_t, \Q\Y, \R\P^{\Y\to\X}\Q^\top)$ for this particular function class, and directly show that it equals $\P D_\theta(\X_t, \Y, \P^{\Y\to\X})$. 
\begin{align}
    D_\theta(\R\X_t, \Q\Y, \R\P^{\Y\to\X}\Q^\top) &= f_\theta^\X(\begin{bmatrix}
        \R\X_t^N & \R \P^{\Y\to\X} \Q^\top \Q \varphi\end{bmatrix}, \R\X^E \R^\top, \begin{bmatrix}
            \Q\Y^N & \Q\varphi \end{bmatrix}, \Q\Y^E \Q^\top)\\
    &=f_\theta^\X(\begin{bmatrix}
        \R\X_t^N & \R \P^{\Y\to\X} \varphi\end{bmatrix}, \R\X^E \R^\top, \begin{bmatrix}
            \Q\Y^N & \Q\varphi \end{bmatrix}, \Q\Y^E \Q^\top),
\end{align}
where $f_\theta$ itself is a function that is permutation equivariant for the combined $\X$ and $\Y$ graph as input. This means that the neural net itself gives an output for the entire combined graph, but we only consider the $\X$ subgraph as the denoiser output, denoted here as $f_\theta^\X$. For clarity, we can combine the reactant and product node features and adjacency matrices in the notation, and use the permutation equivariance property of our GNN:
\begin{align}
    f_\theta&(
    \begin{bmatrix}
        \R \X_t^\N & \R \P^{\Y\to\X} \varphi \\
        \Q \Y^\N & \Q\varphi
    \end{bmatrix}, \begin{bmatrix}
        \R \X_t^E \R^\top & 0 \\
        0 & \Q \Y^E \Q^\top
    \end{bmatrix}
    ) \\
    &= \begin{bmatrix}
        \R & 0 \\ 0 & \Q
    \end{bmatrix} f_\theta(\begin{bmatrix}
        \X_t^\N & \P^{\Y\to\X}\varphi\\
        \Y^\N & \varphi
    \end{bmatrix}, \begin{bmatrix}
        \X_t^E & 0 \\ 0 & \Y^E
    \end{bmatrix}) \quad\textrm{(permutation equivariance of base NN)}
\end{align}
Taking only the $\X$ part of the output and reverting to $D_\theta$ notation, we directly arrive at the result that $D_\theta(\R\X_t, \Q\Y, \R\P^{\Y\to\X}\Q^\top)=\R D_\theta(\X_t, \Y, \P^{\Y\to\X})$.

\paragraph{Directly adding $\Y$ to the output} 
We again start by writing out an aligned equivariant input to the denoiser, with $f_\theta$ again denoting a network that is permutation equivariant with respect to the combined $\X$ and $\Y$ graph:
\begin{align}
    D_\theta(\R\X,\Q\Y,&\R\P^{\X\to\Y}\Q^\top) = \text{softmax}(f_\theta^\X(\R\X, \Q\Y) + \R\P^{\Y\to\X}\Q^\top \Q\Y) \\
    &= \text{softmax}(\R f_\theta^\X(\X,\Y) + \R \P^{\Y\to\X}\Y) \quad\text{(permutation equivariance of base denoiser)}\\
    &= \R\ \text{softmax}(f_\theta^\X(\X,\Y) + \P^{\Y\to\X}\Y)\\
    &= \R D_\theta(\X,\Y,\P^{\X\to\Y})
\end{align}
where we were able to move the permutation outside the softmax since the softmax is applied on each node and edge separately. 


\paragraph{Aligning $\Y$ and $\X$ at the input to the model} 
Let's denote by $[\X\ \ \P^{\Y\to\X}\Y]$ concatenation along the feature dimension for both the nodes and edges of the graphs. Recall then that the definition of aligning the graphs in the input is $D_\theta(\X,\Y,\P^{\Y\to\X})=f_\theta([\X\ \ \P^{\Y\to\X}\Y])$, where $f_\theta$ is a permutation equivariant denoiser. Writing out the aligned equivariance condition
\begin{align}
    D_\theta(\R\X, \Q\Y, \R\P^{\Y\to\X}\Q^\top) &= f_\theta([\R\X\ \ \R\P^{\Y\to\X} \Q^\top \Q\Y]) = f_\theta([\R\X\ \ \R\P^{\Y\to\X}\Y])\\
    &= f_\theta(\R[\X\ \ \P^{\Y\to\X}\Y]) = \R f_\theta([\X\ \ \P^{\Y\to\X}\Y])\\
    &= D_\theta(\X, \Y, \P^{\Y\to\X})
\end{align}
which shows that this method results in aligned equivariance as well. 

\subsection{A Single-layer Graph Transformer with Orthogonal Atom-mapped Positional Encodings is Able to Implement the Identity Data Solution for Nodes}\label{app:graph_transformer_identity}



Here, we show that a single-layer Graph Transformer neural net can model the identity data for the nodes given orthogonal atom-mapped positional encodings (e.g., the graph Laplacian eigenvector-based ones). In particular it is possible to find a $\theta$ such that $D_\theta(\X_t, \Y, \P^{\Y\to\X})^\N = \Y^\N$. We hope that this section can serve as an intuitive motivation for why matched positional encodings help in copying over the structure from the product to the reactant side. 


Recall that we have $N$ atoms on both sides of the reaction. Let us map the atom mapping indices to basis vectors in an orthogonal basis $\varphi=[\varphi_1, \varphi_2, \dots, \varphi_N]^\top$. In practice, the node input to the neural net on the reactant side is now $\X_t^*=[\X_t^\N, \varphi]$, and the node input on the product side is $\Y^*=[\Y^\N, \varphi]$. 
	
Now, a single-layer Graph Transformer looks as follows (as defined in the \citet{digress} codebase):
	\begin{align}
		\mathbf{N} &= [(\X_t^{*\N})^\top, (\Y^{*\N})^\top]^\top \quad\text{(Concatenate rows)}\\
		\mathbf{E} &= 
		\begin{bmatrix}
		\X_t^E & 0 \\ 0 & \Y^E
		\end{bmatrix}\quad\text{(Create joined graph)}
		\\
		\mathbf{N}_1 &= MLP_N(\mathbf{N})\quad\text{(Applied with respect to last dimension)}\\
		\mathbf{E}_1 &= \frac{1}{2}(MLP_E(\mathbf{E}) + MLP_E(\mathbf{E})^\top) \quad\text{(Symmetrize the input)}\\
		t_1 &= MLP_t(t)\\
		\mathbf{Q},\mathbf{K},\mathbf{V} &= \mathbf{W}_Q \mathbf{N}_1, \mathbf{W}_K \mathbf{N}_1, \mathbf{W}_V \mathbf{N}_1 \quad\text{(One attention head for simplicity)}\\
		\mathbf{A}_1 &= \mathbf{Q}\mathbf{K}^\top\\
		\mathbf{A}_2 &= \mathbf{A}_1 * (\mathbf{W}_{\mathbf{E}_2} \mathbf{E}_1 + 1) +  \mathbf{W}_{\mathbf{E}_3} \mathbf{E}_1  \\
		\mathbf{A}_3 &= \text{softmax}(\mathbf{A}_3)\\
		\mathbf{N}_2 &= \mathbf{A}\mathbf{V} / \sqrt{d_f}\quad\text{($d_f$ is the embedding dim of V)}\\
		\mathbf{N}_3 &= \mathbf{N}_2*(\mathbf{W}_{t_2} t_1 + 1) + \mathbf{W}_{t_3} t_1\\
		\mathbf{E}_2 &= \mathbf{W}(\mathbf{A}_2*(\mathbf{W}_{t_4} t_1 + 1) + W_{t_5} t_1)\\
		t_4 &= MLP_{t_4}(\mathbf{W}t_2 + \mathbf{W}[\mathrm{min}(\mathbf{N}_3), \mathrm{max}(\mathbf{N}_3), \mathrm{mean}(\mathbf{N}_3), \mathrm{std}(\mathbf{N}_3)]\nonumber\\
		& + \mathbf{W}[\mathrm{min}(\mathbf{N}_3), \mathrm{max}(\mathbf{N}_3), \mathrm{mean}(\mathbf{N}_3), \mathrm{std}(\mathbf{N}_3)]\\
		\mathbf{E}_{3} &= MLP_{\mathbf{E}_2}(\mathbf{E}_2) + \mathbf{E}\\
		\mathbf{E}_{out} &= \frac{1}{2}(\mathbf{E}_3 + \mathbf{E}_3^\top)\quad\text{(Symmetrize the output)}\\
		\mathbf{N}_{out} &= MLP_{N_3}(\mathbf{N}_3) + \mathbf{N} \\
		t_{out} &= t_4 + t
	\end{align}
	
	Now, for purposes of illustration, we can define most linear layers to be zero layers: $MLP_E, MLP_t, \mathbf{W}_{E_2}, \mathbf{W}_{E_3}, \mathbf{W}_{t_2}, \mathbf{W}_{t_3}, \mathbf{W}_{t_4}, \mathbf{W}_{t_5} = 0$. In addition to this, we define $MLP_N$ to be an identity transform. $\mathbf{W}_Q$ and $\mathbf{W}_K$ are both chosen as picking out the $\U$ columns of $\mathbf{N}_1$, with additional overall scaling by some constant $\alpha$. $\mathbf{W}_V$ is chosen to pick out the product node labels: $\mathbf{W}_V \mathbf{N}_1 = \begin{bmatrix} 0\\\Y^\N
	\end{bmatrix}$. Now, we can easily see how the Graph Transformer can obtain the optimal denoising solution for the nodes. Consider an input $\mathbf{N}=[(\P \X_t^*)^\top, (\Y)^\top]^\top$, where the reactant side is permuted. The output of the network should be $\P\Y$. Focusing on the parts of the network that compute the node features:
	\begin{align}
		\mathbf{A}_1 &= \alpha^2\begin{bmatrix}
		\P\varphi\varphi^\top \P^\top & \P\varphi\varphi^\top \\ \varphi\varphi^\top \P^\top & \varphi\varphi^\top
		\end{bmatrix} = 
		\alpha^2\begin{bmatrix}
		\mathbf{I} & \P \\ \P^\top & \mathbf{I}
		\end{bmatrix},\\
		\mathbf{A}_2 &= \mathbf{A}_1,\\
		\mathbf{A}_3 &\approx \text{softmax}(\mathbf{A}_2) = \frac{1}{2}\begin{bmatrix}
		\mathbf{I} & \P \\ \P^\top & \mathbf{I}
		\end{bmatrix}\quad\text{(If $\alpha \gg 1$)},\\
		\mathbf{N}_2 &= \frac{\mathbf{A}_3 \mathbf{V}}{\sqrt{d_F}} = \begin{bmatrix}
		\mathbf{I} & \P \\ \P^\top & \mathbf{I}
		\end{bmatrix} \begin{bmatrix}
		0 \\ Y_N\end{bmatrix} / (2\sqrt{d_f}) = \begin{bmatrix}
		\P \mathbf{Y}_N \\ \mathbf{Y}_N\end{bmatrix} / (2\sqrt{d_f}).
	\end{align}
	Here, we used the fact that we chose the positional embeddings to be an orthogonal basis, and $\varphi\varphi^\top=\mathbf{I}$, as well as the fact that $\P\P^\top=\mathbf{I}$ for any permutation matrices. The term $\frac{1}{2}$ in the third equation came from the fact that each row of $\mathbf{A}_1$ contains two non-zero values that are also equal. The probability gets divided between the two of them in the softmax if the logits are scaled large enough, and the approximation becomes arbitrarily accurate.
	
    From now on, since we are interested in the denoising output only for the reactant side, we drop out the reactant side $\Y^\N$ and only focus on the $\P\Y^\N$ part. We choose the final $MLP_{\mathbf{N}_2}$ to scale the output by some factor $\beta \gg 1$:
	\begin{align}
		\mathbf{N}_{out} &= \beta \mathbf{N}_2 + \mathbf{N}\\ 
		n_{\theta}(\P \X_t^*, \Y^*)^\N &= \text{softmax}(\beta \P \Y^\N + \mathbf{N}) \approx \P \Y^\N.
	\end{align}
	Here, the approximation can be made arbitrarily accurate by scaling $\beta$ to a higher value, since the logits will become more and more peaked towards the values where $P\Y^\N$ equals one instead of zero. This showcases how the attention mechanism in the Graph Transformer pairs with atom mapping-based orthogonal positional encodings to achieve the identity function from products to reactants. 

\subsection{Extending Theorem 1 to Smaller Time Steps}
\label{app:perm_equivariance_identity_t}
We prove the following result characterizing the optimal denoiser output for a more restricted setup of absorbing-state diffusion and on an identity data with no edges (i.e., just a set of nodes):

$D_\theta(X_t, Y)_{i,k} = 
\begin{cases}
Y_{i,k}, & \text{if } X_{t,i} \text{ is not in the absorbing state} \\
\hat{y}^M_k, & \text{if } X_{t,i} \text{ is in the absorbing state}
\end{cases}$
where
$\hat{y}^M_k = \frac{\sum_{j \in M_t} Y_{j,k}}{\sum_{j \in M_t} \sum_k Y_{j,k}}$

where $M_t$ refers to the set of nodes in the mask state. We will include the proof as an additional comment below. In words, the optimal output for the masked nodes is the marginal distribution of the nodes that have not been de-masked yet, that is, the mean of all possible placements of the remaining nodes.

The analysis for graphs with edges becomes more involved, but we postulate that a similar analysis would show that the output would converge to a mean of all possible permutations of $Y$ that are in agreement with the current set of subgraphs $X_t$.

To recap, we have the simplified scenario where we generate a set of nodes $X$ conditioned on another set of nodes $Y$, without edges. The goal is to characterize the optimal permutation equivariant denoiser for this process at any noise level.

\paragraph{Definitions.}
Let $X_0 \in \{0,1\}^{N \times K}$ be the initial set of $N$ nodes, each represented by a one-hot vector of dimension $K$. We condition on $Y \in \{0,1\}^{N \times K}$, where $X_0 = Y$ in our data. The forward process is an absorbing state diffusion, defined as follows for $t \in {0, 1, \ldots, T}$:

\begin{align}
q(X_{t,i}=M|X_{0,i}) &= t/T \\
q(X_{t,i}=X_{0,i}|X_{0,i}) &= 1-t/T
\end{align}

where $M$ represents the additional "mask" state. We denote the denoiser as $D_\theta(X_t, Y)$, which is assumed to be permutation equivariant with respect to $X_t$.

\paragraph{Notation} For a given $X_t$, we define: $M(X_t) \subset {1, \ldots, N}$: Set of indices of masked nodes
$U(X_t) = {1, \ldots, N} \setminus M(X_t)$: Set of indices of non-masked nodes $\hat{y}_k^{M(X_t)} = \frac{1}{|M(X_t)|}\sum_{i\in M(X_t)} Y_{i,k}$: Marginal distribution of node values in $Y$ for the node indices that are masked in $X_t$

\paragraph{Result} 

\begin{theorem}
The optimal permutation equivariant denoiser $D_\theta(X_t, Y)$ for the absorbing state diffusion process at any time $t$ and for any particular $X_t$ is given by:
$D_\theta(X_t, Y)_{i,k} = Y_{i,k}, \text{if } i \in U(X_t)$
$D_\theta(X_t, Y)_{i,k} = \hat{y}_k^{M(X_t)}, \text{if } i \in M(X_t)$
\end{theorem}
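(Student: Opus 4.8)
The plan is to reduce the statement to a node-wise analysis of the cross-entropy loss on the identity data and to confine the permutation-equivariance constraint to the symmetric (masked) part of $X_t$, mirroring the proof of \cref{theorem:1} (the special case $t=T$) but localizing the symmetry argument. Since the data satisfy $X_0=Y$ and the loss factorizes over nodes, the unconstrained per-node optimum is the posterior $q(X_{0,i}\mid X_t,Y)$; the equivariance constraint can only alter this answer on nodes that are indistinguishable in $X_t$. First I would fix an arbitrary $X_t$ on the data manifold (i.e.\ one reachable from $X_0=Y$ under the absorbing forward process), partition its nodes into $U(X_t)$ and $M(X_t)$, and optimize the two blocks separately.

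For the \textbf{unmasked block}, the forward process leaves a non-absorbed node unchanged, so $X_{t,i}=Y_i$ is directly observed for $i\in U(X_t)$. The per-node term $-\sum_k Y_{i,k}\log D_\theta(X_t,Y)_{i,k}$ is uniquely minimized at $D_\theta(X_t,Y)_{i,:}=Y_{i,:}$, and the map ``copy the visible value'' is permutation equivariant, so this optimum is admissible. Hence the optimal output on $U(X_t)$ is $Y_{i,:}$.

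The \textbf{masked block} is the heart of the proof. All nodes $i\in M(X_t)$ sit in the absorbing state, so every permutation that merely rearranges the masked positions (fixing the rest) stabilizes $X_t$, i.e.\ $\P X_t=X_t$. Equivariance then gives $D_\theta(X_t,Y)=\P D_\theta(X_t,Y)$, forcing the outputs to agree across all masked nodes; call this common distribution $\hat d$. The masked-block loss collapses to a single convex objective,
\begin{equation}
-\sum_{i\in M(X_t)}\sum_k Y_{i,k}\log \hat d_k \;=\; -\sum_k\Big({\textstyle\sum_{i\in M(X_t)}} Y_{i,k}\Big)\log \hat d_k ,
\end{equation}
which, minimized over the probability simplex by Gibbs' inequality (or a Lagrange multiplier), yields $\hat d_k \propto \sum_{i\in M(X_t)} Y_{i,k}$; normalizing (each $Y_i$ is one-hot, so the masses sum to $|M(X_t)|$) gives exactly $\hat d=\hat y^{M(X_t)}$. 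At $t=T$ every node is masked and this recovers the full marginal of \cref{theorem:1}.

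The main obstacle is that, unlike the $t=T$ case, $q(X_t\mid Y)$ is \emph{not} permutation invariant at intermediate $t$, so the global ``sum over all permutations'' device of \cref{app:perm_equivariance_identity} is unavailable and the symmetry must instead be applied only through the stabilizer of each fixed $X_t$. The accompanying subtlety is verifying that the per-$X_t$ optima are jointly realizable by one equivariant function: equivariance links an on-manifold $X_t$ to $\P X_t$, and whenever $\P X_t$ is also on-manifold for the same $Y$ the permutation necessarily preserves $Y$ on the unmasked positions, whence it also preserves the masked marginal (a permutation fixes the total node-value counts and fixes the unmasked contribution, so the masked counts $\sum_{i\in M}Y_{i,k}$ are invariant). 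This guarantees consistency, and the values that equivariance forces on off-manifold inputs never enter the loss, so they do not affect the characterization.
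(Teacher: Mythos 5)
Your proof is correct and follows essentially the same route as the paper's: split the loss over $U(X_t)$ and $M(X_t)$, use the stabilizer observation that permutations of the masked positions fix $X_t$ so equivariance forces a common output $\hat d$ on all masked nodes, and then minimize the collapsed cross-entropy to obtain the masked marginal $\hat y^{M(X_t)}$. Your closing paragraph on joint realizability of the per-$X_t$ optima by a single equivariant function is a sound consistency check that the paper leaves implicit, but it does not change the underlying argument.
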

\begin{proof}
The cross-entropy loss for the denoiser at time $t$ is:
\begin{align}
CE_t &= -\mathbb{E}_{q(X_t|X_0)} \sum_{i=1}^N \sum_k X_{0,i,k} \log D_\theta(X_t, Y)_{i,k} \\
&= -\mathbb{E}_{q(X_t|X_0)} \left[ \sum_{i \in U(X_t)} \sum_k X_{0,i,k} \log D_\theta(X_t, Y)_{i,k} + \sum_{i \in M(X_t)} \sum_k X_{0,i,k} \log D_\theta(X_t, Y)_{i,k} \right]
\end{align}

We consider the optimal denoiser output for non-masked and masked nodes separately:

\paragraph{Non-masked nodes ($i \in U(X_t)$)} For these nodes, the optimal output is $D_\theta(X_t, Y)_{i,k} = Y_{i,k}$, since $Y_{i,k}=X_{0,i,k}$. 

\paragraph{Masked nodes ($i \in M(X_t)$)} Let's focus on a particular instantiation of $X_t$. Due to the permutation equivariance of $D_\theta$, for any permutation $P$ that only permutes indices within $M(X_t)$, we have:
$D_\theta(P X_t, Y) = P D_\theta(X_t, Y)$. Moreover, because all nodes in $M(X_t)$ are in the same mask state, all such permutations of $X_t$ are identical: $D_\theta(P X_t, Y) = D_\theta(X_t, Y)$
Combining these facts, we conclude that for any $i, j \in M(X_t)$: $D_\theta(X_t, Y)_i = D_\theta(X_t, Y)_j$
Thus, the denoiser must output the same value for all masked nodes in a given $X_t$.
Now, consider the part of the cross-entropy loss corresponding to the masked nodes for this particular $X_t$:
\begin{align}
CE_t^M(X_t) &= -\sum_{i \in M(X_t)} \sum_k X_{0,i,k} \log D_\theta(X_t, Y)_{i,k} \\
&= -\sum_k \left(\sum_{i \in M(X_t)} X_{0,i,k}\right) \log D_\theta(X_t, Y)_{m,k}
\end{align}
where $m$ is any index in $M(X_t)$. Observe that $\sum_{i \in M(X_t)} X_{0,i,k}$ is proportional to the marginal distribution of the masked nodes in $X_0$ (which is equal to $Y$). Using our defined notation, we can rewrite the cross-entropy as:
$CE_t^M(X_t) \propto -|M(X_t)| \sum_k \hat{y}_k^{M(X_t)} \log D_\theta(X_t, Y)_{m,k}$
The optimal $D_\theta(X_t, Y)_{m,k}$ that minimizes this expression is $\hat{y}_k^{M(X_t)}$. Combining the results for masked and non-masked nodes yields the stated optimal denoiser.
\end{proof}

\subsection{Elaboration on Why We Can Choose Any Node Mapping Matrix During Inference}

We note that all node mapping matrices can be characterized by a base permutation matrix $\P^{\Y\to\X}$ left multiplied by different permutations $\R$. Applying \cref{theorem:2}: $p_\theta(\X_0 \mid\Y, \R\P^{\Y\to\X}) = p_\theta(\R^{-1}\X_0 \mid\Y, \P^{\Y\to\X})$, where $\R^{-1}$ is just another permutation matrix, shows that the distribution equals $p_\theta(\X_0 \mid\Y, \P^{\Y\to\X})$, up to some permutation. In fact, by sampling the node mapping matrix randomly, the distribution $p_\theta(\X_0\mid\Y)=\sum_{\P^{\Y\to\X}}p_\theta(\X_0 \mid\Y, \P^{\Y\to\X}) p(\P^{\Y\to\X})$ becomes invariant to any permutation of $\X_0$ and $\Y$, although this is not strictly necessary in our context.

\section{Details on Conditional Graph Diffusion}
\label{app:dgd}

\paragraph{Our transition matrices} To define $\Q_t^\N$ and \ $\Q_t^E$, we adopt the absorbing-state formulation from \citet{d3pm}, where nodes and edges gradually transfer to the \emph{absorbing state} $\perp$. Formally, we give the generic form of the transition matrix $\Q_t$ for node input $\X^\N \in \mathbb{R}^{N_X \times N_X}$ \footnote{The only difference between $\Q_t^\N$ and $\Q_t^E$ for the absorbing-state and uniform transitions is the dimensions of $I$, $e$, $\mathbbm{1}$ and the value of $K$. We therefore give a generic form for both and imply choosing the right dimensions for each case.}
\begin{align}
    \Q_t &= (1 - \beta_t)\mathbf{I} + \beta_t \mathbbm{1} e_{\perp}^\top ,
\end{align}
where $\beta_t$ defines the \emph{diffusion schedule} and $e_{\perp}$ is one-hot on the absorbing state $\perp$. For completeness, we list the other two common transitions relevant to our application. The first is the uniform transition as proposed by \citet{hoogeboom2021argmax}
\begin{align}
    \Q_t = (1 - \beta_t)\mathbf{I} + \beta_t \frac{\mathbbm{1}\mathbbm{1}^\top}{K}
\end{align}
where $\beta_t$, $I$ are as before and $K$ is the number of element (edge or node) types, i.e., the number of input features for both nodes and edges. \citet{digress} also proposed a marginal transition matrix
\begin{align}
\Q_t^\N = (1 - \beta^t) \mathbf{I} + \beta^t \mathbbm{1} \big(m^{\N}\big)^\top \quad \text{and} \quad \Q^E_t = (1 - \beta^t) \mathbf{I} + \beta^t \mathbbm{1} \big(m^E\big)^\top
\end{align}
which they argued leads to faster convergence. In this case, $m^\N \in \mathbb{R}^{K_a}$ and $m^E \in \mathbb{R}^{K_b}$ are row vectors representing the marginal distributions for node and edge types respectively. We tested all three types of transition matrices in early experiments and noted the absorbing state model to be slightly better than the others. The marginal $q(\X_{t}\mid\X_{0})$ and conditional posterior $q(\X_{t-1} \mid \X_t,\X_0)$ also have a closed form for all of these transition matrices.

\paragraph{Noise schedule} We use the mutual information noise schedule proposed by \citet{d3pm}, which leads to
\begin{equation}
\label{eq:app-MI-schedule}
    \frac{t}{T} = 1 - \frac{I(\X_t; \X_0)}{H(\X_0)} = \frac{H(\X_0, \X_t) - H(\X_t)}{H(\X_0)} = \frac{\sum_{\X_0, \X_t} p(\X_0)q(\X_t \mid \X_0) \log \frac{q(\X_t \mid \X_0)}{\sum_{\X'_0} p(\X'_0)q(\X_t \mid \X'_0)}}{\sum_{\X_0} p(\X_0) \log p(\X_0)}
\end{equation}
For absorbing state diffusion, these equations lead to $\beta_t = \frac{1}{T - t + 1}$. Similarly, the total transition probability to the absorbing state at time $t$ has a simple form: $q(\X_t=\perp\mid\X_0)=\frac{t}{T}$. 


\paragraph{Forward process posterior} For transition matrices that factorize over dimensions, we have
\begin{align}
    q(\X_{t-1,i,:} \mid \X_{t,i,:},\X_{0,i,:}) \sim \frac{\X_t \Q_t^\top \cdot \X_{0,i,:} \bar \Q_{t-1}}{\X_{0,i,:} \bar \Q_{t} \X^\top}
\end{align}
where $\X_{i,:}$ is the one-hot encoding of i\textsuperscript{th} node/edge of the graph, in row vector format. 

\paragraph{Variational lower-bound loss} Diffusion models are commonly trained by minimizing the negative variational lower-bound on the model's likelihood \citep{denoising-diffusion}. \citet{d3pm} discuss the difference between optimizing the ELBO and cross-entropy losses and show that the two losses are equivalent for the absorbing-state transition. We choose to use cross-entropy, similar to \citet{digress}, due to faster convergence during training. We include the formula for the ELBO in \cref{eq:app-elbo} for completeness.
\begin{align}
\label{eq:app-elbo}
\mathcal{L}_\mathrm{vb} &= \mathbb{E}_{\X_0 \sim q(\X_0)} \bigg[ \underbrace{ \mathrm{KL} \big[ q(\X_T\mid\X_0) \,\|\, p(\X_T)\big]}_{\mathcal{L}_T} \nonumber\\&\qquad+ \underbrace{\sum_{t=2}^T  \mathbb{E}_{\X_t \sim q(\X_t\mid\X_0)} \mathrm{KL}\big[ q(\X_{t-1}\mid\X_t, \X_0)\,\|\, p_{\theta}(\X_{t-1}\mid\X_t) \big]}_{\mathcal{L}_{t-1}} \nonumber \\
&\qquad - \underbrace{\mathbb{E}_{\X_1 \sim q(\X_1\mid\X_0)} \log p_{\theta}(\X_0\mid\X_1)}_{\mathcal{L}_0} \bigg]
\end{align}
We also note that we use this quantity as part of the scoring function mentioned in \cref{sec:experiments}.


\paragraph{Data encoding and atom-mapping} We illustrate our graph encoding using atom-mapping and permutation matrices in \cref{fig:atom-mapping}.
\begin{figure}[t!]
    \caption{Illustrating graph encoding using atom-mapping.}
    \begin{tikzpicture}

    \tikzstyle{label}=[font=\normalsize]  
    \tikzstyle{arr}=[->,-latex,line width=1pt]  
    
    \node[anchor=south west,inner sep=0] (image) at (0,0)
      {\includegraphics[width=\textwidth]{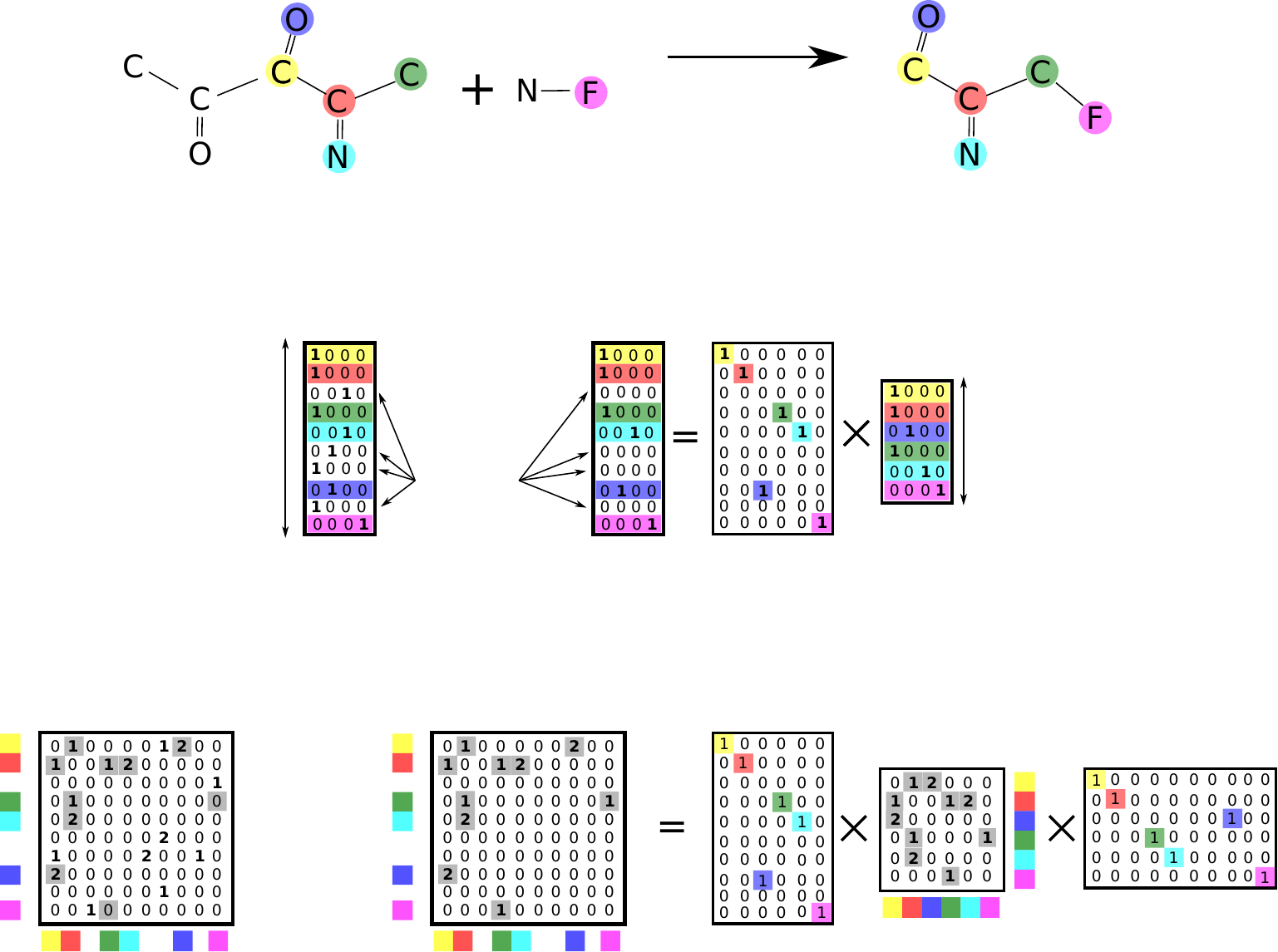}};
    
    \begin{scope}[x={(image.south east)},y={(image.north west)}]
    
      
      \node[label] at (.3,1.04) {\large Reactants};
      \node[label] at (.8,1.04) {\large Product};
      \node[label] at (.5,0.73) {\large Atom mapping from the nodes of $\Y$ to the nodes of $\X$};
      \node[label] at (.5,0.3) {\large Atom mapping from the edges of $\Y$ to the edges of $\X$};

      \node (c0) at (0.265,.66) {};
      \node[label] at ($(c0) + (-0.018,.0)$) {\scalebox{0.7}{C}};
      \node[label] at ($(c0) + (-0.005,.0)$) {\scalebox{0.7}{O}};
      \node[label] at ($(c0) + (0.009,.0)$) {\scalebox{0.7}{N}};
      \node[label] at ($(c0) + (0.021,.0)$) {\scalebox{0.7}{F}};

      \node (c0) at (0.49,.66) {};
      \node[label] at ($(c0) + (-0.018,.0)$) {\scalebox{0.7}{C}};
      \node[label] at ($(c0) + (-0.005,.0)$) {\scalebox{0.7}{O}};
      \node[label] at ($(c0) + (0.009,.0)$) {\scalebox{0.7}{N}};
      \node[label] at ($(c0) + (0.021,.0)$) {\scalebox{0.7}{F}};

      \node (c0) at (0.715,.62) {};
      \node[label] at ($(c0) + (-0.018,.0)$) {\scalebox{0.7}{C}};
      \node[label] at ($(c0) + (-0.005,.0)$) {\scalebox{0.7}{O}};
      \node[label] at ($(c0) + (0.009,.0)$) {\scalebox{0.7}{N}};
      \node[label] at ($(c0) + (0.021,.0)$) {\scalebox{0.7}{F}};

    \node[label] at (.27,0.4) {$\X^\mathcal{N}$};
    \node[label] at (.5,0.4) {$\P^{\Y\to\X}\Y^\mathcal{N}$};
    \node[label] at (.61,0.4) {$\P^{\Y\to\X}$};
    \node[label] at (.725,0.44) {$\Y^\mathcal{N}$};
    \node[label] at (.369,0.49) {\tiny \makecell{Non-atom- \\ mapped atoms}  };
    \node[label] at (.2,0.535) {$N_\X$};
    \node[label] at (.778,0.53) {$N_\Y$};

    \node[label] at (.11,-0.03) {$\X^E$};
    \node[label] at (.42,-0.03) {$\P^{\Y\to\X}\Y^E(\P^{\Y\to\X})^\top$};
    \node[label] at (.605,-0.01) {$\P^{\Y\to\X}$};
    \node[label] at (.738,0.01) {$\Y^E$};
    \node[label] at (.93, 0.02) {$(\P^{\Y\to\X})^\top$};

    \end{scope}
    \end{tikzpicture}
    \centering\small
    \label{fig:atom-mapping}
\end{figure}

\paragraph{Connection between the denoising parameterization $\tilde p(\X_0\mid \X_t, \Y)$ and $p(\X_{t-1}\mid \X_t, \Y)$.} Here, we elaborate on how does $\tilde p(\X_0\mid \X_t, \Y)$ connect with $p(\X_{t-1}\mid \X_t, \Y)$. 

Recall that the reverse transition is factorized with respect to nodes and edges:
\begin{equation}\textstyle
    p_{\theta}(\X_{t-1} \mid \X_t, \Y) = \prod_{i=1}^{N_X} p_{\theta}(\X^{\N,i}_{t-1} \mid \X_t, \Y) \prod_{i,j}^{N_X} p_{\theta}(\X^{E,i,j}_{t-1} \mid \X_t, \Y).
\end{equation}
We parameterize the transition as follows:
\begin{align}\textstyle
    p_{\theta}(\X_{t-1} \mid \X_t, \Y) 
    = \sum_{\X_0} q\big(\X_{t-1} \mid \X_t, \X_0 \big) \tilde p_{\theta}\big(\X_0  \mid \X_t, \Y\big).\label{eq:appendix_reverse_parameterization}
\end{align}
The connection between this parameterization and the probabilities $p_{\theta}(\X^{\N,i}_{t-1} \mid \X_t, \Y)$ is obtained by noting that both $q\big(\X_{t-1} \mid \X_t, \X_0 \big)$ and $\tilde p_{\theta}\big(\X_0  \mid \X_t, \Y\big)$ factorize over dimensions:
\begin{align}
    q\big(\X_{t-1} \mid \X_t, \X_0 \big) &= \prod_i^{N_\X} q\big(\X_{t-1}^{\N,i} \mid \X_t^{\N,i}, \X_0^{\N,i} \big) \prod_{i,j}^{N_\X} q\big(\X_{t-1}^{E,i,j} \mid \X_t^{E,i,j}, \X_0^{E,i,j} \big) \\
    \tilde p_{\theta}\big(\X_0  \mid \X_t, \Y\big) &= \prod_i^{N_\X} \tilde p_{\theta}\big(\X_0^{\N,i}  \mid \X_t, \Y\big) \prod_{i,j}^{N_\X} \tilde p_{\theta}\big(\X_0^{E,i,j}  \mid \X_t, \Y\big)
\end{align}
Plugging these in to \cref{eq:appendix_reverse_parameterization} and expanding the sum, we get
\begin{align}
    p_{\theta}(\X_{t-1} \mid \X_t, \Y) =  \sum_{\X_0^{\N,0}} \sum_{\X_0^{\N,1}} \dots \sum_{\X_0^{E,N_\X,N_\X}} &\biggl(\prod_i^{N_\X} q\big(\X_{t-1}^{\N,i} \mid \X_t^{\N,i}, \X_0^{\N,i} \big) \tilde p_{\theta}\big(\X_0^{\N,i}  \mid \X_t, \Y\big)\nonumber\\
    &\prod_{i,j}^{N_\X} q\big(\X_{t-1}^{E,i,j} \mid \X_t^{E,i,j}, \X_0^{E,i,j} \big) \tilde p_{\theta}\big(\X_0^{E,i,j} \mid \X_t, \Y\big)\biggl)
\end{align}
Gathering terms together, we get
\begin{align}
    =& \underbrace{\biggl(\sum_{\X_0^{\N,0}} q\big(\X_{t-1}^{\N,0} \mid \X_t^{\N,0}, \X_0^{\N,0} \big)  \tilde p_{\theta}\big(\X_0^{\N,0}  \mid \X_t, \Y\big)\biggl)}_{=p_{\theta}(\X^{\N,0}_{t-1} \mid \X_t, \Y)}\cdot\\
    &\underbrace{\biggl(\sum_{\X_0^{\N,1}} q\big(\X_{t-1}^{\N,1} \mid \X_t^{\N,1}, \X_0^{\N,1} \big) \tilde p_{\theta}\big(\X_0^{\N,1}  \mid \X_t, \Y\big)\biggl)}_{=p_{\theta}(\X^{\N,1}_{t-1} \mid \X_t, \Y)}\cdot\\
    &\dots\\
    &\underbrace{\biggl(\sum_{\X_0^{E,N_\X,N_\X}} q\big(\X_{t-1}^{E,N_\X,N_\X} \mid \X_t^{E,N_\X,N_\X}, \X_0^{E,N_\X,N_\X} \big) \tilde p_{\theta}\big(\X_0^{E,N_\X,N_\X}  \mid \X_t, \Y\big)\biggl)}_{=p_{\theta}(\X^{E,N_\X,N_\X}_{t-1} \mid \X_t, \Y)}\\
    &= \prod_{i=1}^{N_X} p_{\theta}(\X^{\N,i}_{t-1} \mid \X_t, \Y) \prod_{i,j}^{N_X} p_{\theta}(\X^{E,ij}_{t-1} \mid \X_t, \Y).
\end{align}

\section{Experimental Setup}
\label{app:experimental-setup}
\subsection{Copying Task on the Grid Dataset}
\paragraph{Data generation} We generate $5\times5$ fully connected grids then noise them by flipping a fixed portion of edges chosen at random. We set the portion of edges to be flipped to $5\%$ of the total number of edges (i.e., for $5\times5$ grids, $31$ edges get flipped).

\paragraph{Neural network} We use the same neural network architecture for both the aligned and unaligned denoisers. We use a graph transformer \cite{graph-transformer} with $2$ layers, $1$ attention head, and dimension 16 for the hidden layer. The activation function used is ReLu, and dropout rate of $0.1$.

\paragraph{Training hyperparameters} We train both models for $10$ epochs, using Adam optimizer with a learning rate of $0.01$. We set the batch size to $32$.

\subsection{Data: USPTO Data Sets}
\label{app:uspto-data}
All open-source data sets available for reaction modelling are derived in some form from the patent mining work of \citet{lowe-uspto}. We distinguish 5 subsets used in previous work: 15k, 50k, MIT, Stereo, and full (original data set). \cref{uspto-info} provides key information about the subsets. 
\begin{table}[ht]
\begin{center}
\caption{\label{uspto-info} UPSTO-50K subsets used in retrosynthesis}
\begin{tabular}{l c c c} 
\toprule
Subset & Introduced by & \# of reactions & Preprocessing \& Data split (script) \\
\midrule
Full & \citet{lowe-uspto} & 1\,808\,938 & \citet{gln} \\ 
Stereo & \citet{schwaller-found-translation} & 1\,002\,970 &  \citet{schwaller2019molecular} \\
MIT & \citet{jin-mit} & 479 035 & - \\ 
50k & \citet{schneider-uspto} & 50\,016 & \citet{gln} \\
\bottomrule
\end{tabular}
\end{center}
\end{table}

\paragraph{15k} is proposed by \citet{coley-uspto15k}. The subset includes reactions covered by the 1.7k most common templates. All molecules appearing in the reaction are included to model the involvement of reagents and solvents despite not contributing with atoms to the product. 

\paragraph{50k} is preprocessed by \citet{schneider-uspto}. The goal of the analysis is to assign roles (reactant, reagent, solvent) to different participants in a reaction through atom mapping. This effort led to the creation of an atom-mapped and classified subset of around 50k reactions, which is used nowadays as a benchmark for retrosynthesis tasks. It is not clear how said subset was selected.

\paragraph{MIT} is used by \citet{jin-mit}. The preprocessing is described as `removing duplicate and erroneous reactions' with no further explanation of what qualifies as an erroneous reaction. The output of this filtering is a data set of 49k reactions (from an original set of 1.8M reactions). 

\paragraph{Stereo} is proposed by \citet{schwaller-found-translation}. The authors apply a more flexible filtering strategy compared to USPTO-MIT. Their data set only discards ~ 800k reactions from the original data set because they are duplicates or they could not be canonicalized by RDKit. In addition, the data set only considers single-product reactions (92\% of the full data set), as opposed to splitting multi-product reactions. The preprocessing steps include removing reagents (molecules with no atoms appearing in the product), removing hydrogen atoms from molecules, discarding atom-mapping information and canonicalizing molecules. In addition, since the original method applied to this subset is a language model, tokenization is performed on the atoms.

\paragraph{Full} is preprocessed by \citet{gln}. The processing includes removing duplicate reactions, splitting reactions with multiple products into multiple reactions with one product, removing reactant molecules appearing unchanged on the product side, removing all reactions with bad atom-mapping (i.e., when the sorted mapping between products and reactants is not one-to-one), and removing bad products (missing mapping, or not parsed by Rdkit).

\paragraph{Our choice} Similar to many other works on retrosynthesis, we use 50k as the main data set to evaluate our method.

\subsection{Notes on Our Sampling and Ranking Procedures}
\label{app:sampling-ranking}

\paragraph{Duplicate removal} Removing duplicates from the set of generated precursors is a common methodology in retrosynthesis, albeit often not discussed explicitly in papers. The benefit of duplicate removal is to ensure that an incorrect molecule that is nevertheless judged as the best one according to the ranking scheme does not fill up all of the top-$k$ positions after ranking. While this does not affect top-1 scores, not removing the duplicates would degrade the other top-$k$ scores significantly. 

\paragraph{Choice of scoring function} 
Specifically, we use the following formula for approximating the likelihood of the sample under the model.
\begin{equation}
    s(\X) = (1-\lambda)\frac{\mathrm{count}(\X)}{\sum_{\X'} \mathrm{count}(\X')} + \lambda \cdot \frac{e^{\mathrm{elbo}(\X)}}{\sum_{\X'} e^{\mathrm{elbo}(\X')}},
\end{equation}
where count(.) returns the number of occurrences of sample X in the set of generated samples (by default, 100), elbo(.) computes the variational lower bound of the specific sample under the model, and $\lambda$ is a weighting hyperparameter. The sums are taken over the set of generated samples. Intuitively, the idea is to provide an estimate of the likelihood from two routes. The ELBO is an estimate (a lower bound) of $\log p_\theta(\X)$, and exponentiating and normalizing gives an estimate of the probability distribution. Since the same reactants are often repeated in a set of 100 samples, the counts can be used as a more direct proxy, although they inherently require a relatively large amount of samples to limit the variance of this estimate. 

We set the value of $\lambda$ to be $0.9$, although we find that the top-$k$ scores are not at all sensitive to variation in the exact value, as long as it is below 1, and the count information is used. Thus, the counts seem more important than the ELBO, which may be due to the lower bound nature of the ELBO or stochasticity in estimating its value. More accurate likelihood estimation schemes for diffusion models, such as exact likelihood values using the probability flow ODE \citep{song-ddpm-sde}, could be a valuable direction for future research in the context of retrosynthetic diffusion models. 


\subsection{Details on Stereochemistry}\label{app:stereochemistry}

Our model does not explicitly consider changes in stereochemistry in the reaction, but instead, we use the atom mappings implicitly assigned to the samples by the model to transfer the chiral tags from the products to the reactants. The initial choice of $\P^{\Y\to\X}$ at the start of sampling can be considered to be the atom mapping of the generated reactants, given that the model has been trained on correct atom mappings. 

For the chiral tags, we take the ground-truth SMILES for the product molecules from the dataset and assign the corresponding chiral tag to the corresponding atom mapping on the generated reactants. For cis/trans isomerism, we use the \verb|Chem.rdchem.BondDir| bond field in rdkit molecules and transfer them to the reactant side based on the atom mapping of the pair of atoms at the start and end of the bond. 

Note that when using rdkit, transferring chirality requires some special care: The chiral tags \verb|Chem.ChiralType.CHI_TETRAHEDRAL_CCW| and \verb|Chem.ChiralType.CHI_TETRAHEDRAL_CW| are defined in the context of the order in which the bonds are attached to the chiral atom in the molecule data structure. Thus, the chiral tag sometimes has to be flipped to retain the correct stereochemistry, based on whether the order of the bonds is different on the reactant molecule data structure and the product molecule data structure. 


\subsection{Details of the Evaluation Procedure}\label{app:evaluation}

\paragraph{Top-$k$ scores} We evaluate the top-$k$ scores by ranking the list of generated and deduplicated samples and calculating the percentage of products for which the ground-truth reactants are in the first k elements in the list. 


\paragraph{Mean reciprocal rank} We formally define the MRR as $\textrm{MRR} = \mathbb{E}_{p(r)}[r^{-1}]$. Verbally, it is the expected value of the inverse of the amount of reactant suggestions that the model makes before encountering the ground truth, and as such measures how early on is the correct reactant encountered in the ranked samples. It also incorporates the intuition that the difference between obtaining the correct reactants in, say, the 9th or the 10th position, is not as significant as the difference between the 1st and 2nd positions. 

While we do not have direct access to the entire $p(r)$ just based on the top-$k$ scores, we can estimate it with a uniform distribution assumption on $r$ within the different top-$k$ ranges. Formally, we define four sets $S_1,S_2,S_3,S_4=\{1\},\{2,3\},\{4,5\},\{6,7,8,9,10\}$ in which $p(r)$ is assumed to be uniform, and $s(r)\in\{1,2,3,4\}$ denotes the group that rank $r$ belongs to. Top-$k$ is denoted as $\text{top}_k$, where $k\in\{1,3,5,10\}$ in our case. Note that they are also equal to the cumulative distribution of $p(r)$ until $k$. We thus define $\hat p(r) = \frac{\text{top}_{\max (G_{s(r)})} - \text{top}_{\max (G_{s(r)-1})}}{|G_{s(r)}|}$ and $\hat p(1) = \text{top}_1$. For the case where the ground truth was not in the top-10, we assume it is not recovered and place the rest of the probability mass on $p(\infty)$. Our MRR estimate is then defined as
\begin{equation}
    \widehat{\textrm{MRR}} = \sum_{r=1}^{10}  \hat p(r) \frac{1}{r}.
\end{equation}
In cases where we do not have top-$k$ values for all \{1,3,5,10\} (such as the Augmented Transformer in \cref{tab:50k} for top-3), we assume that $\hat p(r)$ is constant in the wider interval between the preceding and following top-$k$s (2--5 in the case of the top-3 missing). 

\subsection{Neural Network Architecture, Hyperparameters, and Compute Resources}\label{app:nn_hyperparams_compute}
We discuss here the neural network architecture and hyper-parameters we choose. Our denoiser is implemented as a Graph Transformer \citep{graph-transformer}, based on the implementation of \citet{digress} with additional graph-level level features added to the input of the model. See \citet{digress} for an in-depth discussion of the neural network. 

In all of our models, we use 9 Graph Transformer layers. When using Laplacian positional encodings, we get the 20 eigenvectors of the Graph Laplacian matrix with the largest eigenvalues and assign to each node a 20-dimensional feature vector. 

We use a maximum of 15 'blank' nodes, in practice meaning that the models have the capacity to add 15 additional atoms on the reactant side. In another detail, following \citet{digress}, we weigh the edge components in the cross-entropy loss by a factor of 5 compared to the node components. 

We used a batch size of 16 for the models where the expanded graph containing X and Y as subgraphs is given as input. These models were trained for approximately 600 epochs with a single A100/V100/AMD MI250x GPU. For the model where alignment is done by concatenating $\Y$ along the feature dimension in the input, the attention map sizes were smaller and we could fit a larger batch of 32 with a single V100 GPU. This model was trained for 600 epochs. The training time for all of our models was approximately three days. In early experiments and developing the model, we trained or partially trained multiple models that did not make it to the main paper. Sampling 100 samples for one product with $T=100$ from the model takes roughly 60 seconds with the current version of our code with an AMD MI250x GPU, and 100 samples with $T=10$ takes correspondingly about 6 seconds. It is likely that the inference could be optimized, increasing the sample throughput. 

The reported models were chosen based on evaluating different checkpoints with 10 diffusion steps on the validation set for different checkpoints and chose the best checkpoint based on the MRR score. 

\section{Comparision to Retrosynthetic Baselines}
\label{app:other-baselines}

\paragraph{Overview of retrosynthetic baselines} There are three main types of retrosynthesis models \citep{fusionretro}. Template-based models depend on the availability and quality of hard-coded chemical rules \citep{neural-symbolic,retroknn}. Synthon-based models are limited by their definition of a reaction center, which does not necessarily hold for complex reactions \citep{retroexpert,g2g,retrodiff}. See \cref{fig:synthon_fail_example} for an example of a reaction impossible for synthon-based models but which our model gets correctly. Template-free methods are the most scalable since they do not use any chemical assumptions in their design but perform suboptimally compared to template-based methods on benchmarks like UPSTO-50k \citep{retroformer,gta}. Efforts to bridge the gap between the template-based and template-free paradigms include methods investigating pretraining \citep{rsmiles,pmsr}. Note that our method is in between template-free and template-based methods, since it uses atom-mapping information but not an explicit (and thus limited) set of templates. We put our models in their own category in the results table.
\begin{figure}
    \centering
    \includegraphics[width=0.8\linewidth]{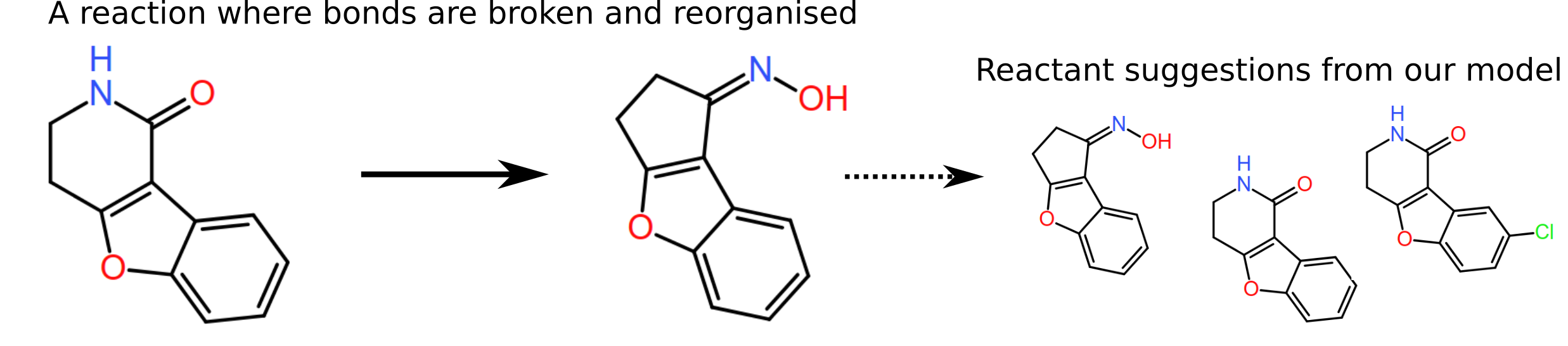}
    \caption{Left: An example reaction from USPTO-50k where the reactant is not possible to predict with a synthon-based model. Right: example reactants generated by our model (includes the ground-truth one). The SMILES-string for the reaction is \texttt{ON=C1CCc2oc3ccccc3c21>>O=C1NCCc2oc3ccccc3c21}.}
    \label{fig:synthon_fail_example}
\end{figure}

\paragraph{Methods with a different evaluation procedure}Despite \citet{retrobridge}'s RetroBridge being a closely related to a diffusion model, we cannot include it in a straightforward comparison because it discards atom charges from the ground truth smiles during evaluation. Specifically, the model uses only atom types as node features and compares the generated samples to the smiles reconstructed from the ground truth data through the same encoding (i.e., without charges too) \footnote{This can be seen in the code shared by \citet{retrobridge}: \url{https://github.com/igashov/RetroBridge}}. To compare our model to RetroBridge fairly, we trained our DiffAlign-PE+Skip on uncharged atom-labels, and evaluated it without considering atom charge nor stereochemistry. The results are shown in \cref{tab:results-vs-retrobridge}. Many recently published methods for retrosynthesis have capped out their top-1 scores in the 49\%-53\% range (see our \cref{tab:50k-extended}), indicating that the increase we achieve over RetroBridge's top-1 score (of over 5\%) is quite significant. Furthermore, the results we achieve are with 5 times less denoising steps (100 instead of 500). For a closer look at the performance at different step counts, we invite the reader to compare Figure 5 in \citep{retrobridge} to our Figure \cref{fig:sampling-steps}. The comparison shows that Retrobridge's top-1 score decreases to almost 0 at a step count more than 10. In contrast, our top-1 is close to 50\% even at 5 steps, and higher than Retrobridge at 100 steps. We attribute this improvement to our more careful analysis of alignment, and utilising multiple alignment techniques.

\paragraph{Methods with pretraining} \citet{rsmiles} and \citet{pmsr} pre-train their models on the USPTO-Full and Pistachio data sets, respectively, and as such the results are not directly comparable to models trained on the standard USPTO-50k benchmark. Pretraining with diffusion models is an interesting direction for future research, but we consider it outside the scope of our work. Furthermore, comparison between models with different pretraining datasets and pretraining strategies has the danger of complicating comparisons, given that relative increases in performance could be explained by the model, the pretraining strategy, or the pretraining dataset. As such, we believe that standardized benchmarks like USPTO-50k are necessary when researching modelling strategies.
\begin{table}[t!]
  \label{tab:results-vs-retrobridge}
  \centering\scriptsize
  \setlength{\tabcolsep}{8pt}
  \renewcommand{\arraystretch}{0.8}
  \caption{Comparison of top-$k$ accuracy and MRR on the USPTO-50k test set without charges and without stereochemical information, following the evaluation setup of \citet{retrobridge}}
\begin{tabular}{lcccccc}
    \toprule
    \textbf{Model} & $\mathbf{k=1} \uparrow$ & $\mathbf{k=3} \uparrow$ & $\mathbf{k=5} \uparrow$ & $\mathbf{k=10} \uparrow$ & $\widehat{\mathrm{\textbf{MRR}}} \uparrow$ \\
    \midrule
    RetroBridge (T=500) & 50.8 & 74.1 & \textbf{80.6} & \textbf{85.6} & 0.622 \\
    DiffAlign-PE+Skip (T=100) & \textbf{56.3} & \textbf{75.2} & 80.4 & 84.0 & \textbf{0.658} \\
    \bottomrule
\end{tabular}
\label{tab:results-topk-uncharged}
\end{table}

Another commonly used metric used to evaluate retrosynthesis model is round-trip accuracy, in which a forward prediction model is used to evaluate whether our samples can indeed produce the input product. \emph{Coverage} considers a sample correct if it matches exactly the ground truth or is deemed suitable by the forward prediction oracle. \emph{Accuracy} counts the percentage of valid precursors over all generated samples. We use the same prediction model as previous work, i.e., Molecular Transformer \citep{molecular-transformer}. As can be seen from \cref{tab:round-trip} our model outperforms all non-template-based baselines on all thresholds. We also highlight a known tradeoff between diversity and accuracy of generation in \cref{fig:diversity}, which explains partly why our accuracy is lower than other baselines (we compare to samples from RetroBridge \citep{retrobridge} in particular as an illustration).
\begin{table}
\caption{Round-trip coverage and accuracy for different baselines. The methods are categorized as: template-based (TB), non-template-based (NTB), and models performing their evaluation without taking into account formal charges (NC) nor stereochemistry (NS). We achieve the highest coverage and top-1 accuracy among non-template-based methods. Our lower top-k$>$1 accuracy may be due to our model generating a higher number of unique predictions compared to competitors, visualized in Figure \cref{fig:diversity}.}
\centering
\setlength{\tabcolsep}{4pt}
\begin{tabular}{llcccccc}
\hline
& & \multicolumn{3}{c}{Coverage ($\uparrow$)} & \multicolumn{3}{c}{Accuracy ($\uparrow$)} \\
\cline{3-8}
& Model & $k = 1$ & $k = 3$ & $k = 5$ & $k = 1$ & $k = 3$ & $k = 5$ \\
\hline
\multirow{1}{*}{TB} 
& GLN \citep{gln} & \textbf{82.5} & 92.0 & 94.0 & \textbf{82.5} & \textbf{71.0} & 66.2 \\
\hline
\multirow{4}{*}{NTB} 
& MEGAN \citep{megan} & 78.1 & 88.6 & 91.3 & 78.1 & 67.3 & 61.7 \\
& Graph2SMILES \citep{graph2smiles} & --- & --- & --- & 76.7 & 56.0 & 46.4 \\
& Retroformer$_\text{aug}$ \citep{retroformer} & --- & --- & --- & 78.6 & \textbf{71.8} & \textbf{67.1} \\
& DiffAlign-PE+skip (ours) & \textbf{81.6} & \textbf{90.0} & \textbf{91.8} & \textbf{81.6} & 62.8 & 53.3 \\
\hline
\multirow{1}{*}{NS} 
& LocalRetro \citep{localretro} & 82.1 & 92.3 & 
94.7 & 82.1 & 71.0 & 66.7 \\
\hline
\multirow{2}{*}{NS-NC} 
& RetroBridge \citep{retrobridge} & 85.1 & 95.7 & 97.1 & 85.1 & \textbf{73.6} & \textbf{67.8} \\
& DiffAlign-PE+skip-NSNC (ours) & \textbf{87.6} & \textbf{96.4} & \textbf{97.6} & \textbf{87.6} & 69.3 & 59.1 \\
\hline
\end{tabular}
\label{tab:round-trip}
\vspace{-0.4cm}
\end{table} 
\begin{figure}[t!]
    \centering

    \begin{tikzpicture}
        \node[inner sep=0pt, anchor=south west] (image) at (0,0) {\includegraphics[width=0.8\textwidth]{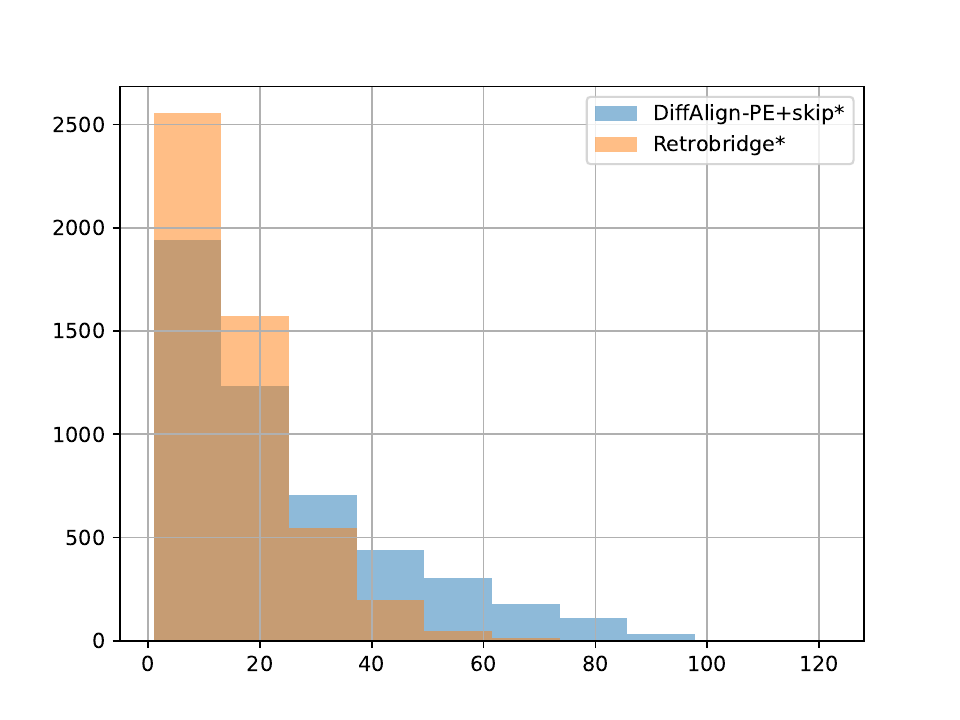}};

        \node[rotate=90] at (0.35,4.5) {\# of products}; 

        \node at (6,0.2) {\# of unique predictions per product}; 

    \end{tikzpicture}

    \caption{We compare the diversity (number of unique predictions per product) of our samples to RetroBridge \citep{retrobridge}. It is important for a retrosynthetic model to generate diverse precursors, thus offering the practitioner with multiple synthesis strategies, especially given the limited accuracy of forward prediction models.}
    \label{fig:diversity}
\end{figure}

\FloatBarrier


\section{Additional ablations}\label{app:additional_ablations}

\paragraph{Disentangling the effect of the inductive biases due to graph positional encodings vs. the inductive bias of alignment.} To disentangle the effect of the graph inductive bias brought by the graph Laplacian positional encodings and the alignment inductive bias brought by matching the positional encodings for the input and output graph, we trained two models:
 \begin{enumerate}
     \item A model where the only alignment method is the matched PE where the PE is generated from the graph Laplacian eigenvectors of the conditioning graph $Y$ (our default PE method).  
     \item A model where the only alignment method is matched PEs where the PE is generated by sampling random Gaussian noise vectors on the conditioning graph nodes $Y$ and placing these same vectors on the $X$ side. 
\end{enumerate}
Both contain alignment information, but the latter does not contain the additional inductive bias brought by the graph positional encoding. We show the results at roughly 10\% of the full training budget for the results in \cref{tab:50k-extended}. The results are listed in \cref{tab:app_gaussian_laplacian_ablation}. The results are not significantly different, indicating that the inductive bias brought by the graph positional encodings is not as significant as the inductive bias of alignment. 

\begin{table}
\centering
\begin{tabular}{l|cccccc}
Model & Top-1 & Top-3 & Top-5 & Top-10 & Top-50 & MRR \\
\hline
DiffAlign-PE-Gaussian & 45.51 & 68.55 & 73.24 & 75.78 & 83.20 & 0.5649 \\
DiffAlign-PE-Laplacian & 46.09 & 68.75 & 72.85 & 75.98 & 83.98 & 0.5686 \\
\end{tabular}
\caption{Performance comparison of the model with the only alignment method being the matched Gaussian noise encoding versus a model with the only alignment method being the matched Laplacian eigenvector-based positional encoding. Both contain alignment information, but the first does not contain additional inductive biases about the structure of the conditioning graph $\Y$. Results are obtained with 512 samples form the USPTO-50k test set and 10\% of the full training budget used in \cref{tab:50k-extended}.}
\label{tab:app_gaussian_laplacian_ablation}
\end{table}

\paragraph{Different transitions.} \citep{digress} did not originally consider the absorbing state transitions we use. Instead, they used the uniform and marginal transitions. With the uniform transitions, each step provides a non-zero chance of transitioning from one state to any of the others. This converges to an uniform distribution over the node and edge states at $\X_T$. With the marginal transitions, these transitions are biased such that the distribution of nodes and edges in $p(\X_T)$ is the marginal distribution of the node and edge types in the original data. Since there are much more no-edges than edges in the data, this encodes a sparsity to the graphs in $p(\X_T)$. 

To provide a fair comparison, we compare our absorbing state model and these models on roughly 10\% of the full training budget (80 epochs), and list the results for 512 samples from the test set in \cref{tab:app_transition_ablation}. The absorbing-state model performs the best, whereas the marginal and uniform models perform slightly worse. The marginal transitions somewhat outperform the uniform transitions. 

\begin{table}
\centering
\begin{tabular}{l|cccccc}
Model & Top-1 & Top-3 & Top-5 & Top-10 & Top-50 & MRR \\
\hline
Uniform-PE+Skip & 43.75 & 58.98 & 62.89 & 67.38 & 72.66 & 0.5156 \\
Marginal-PE+Skip & 44.73 & 60.94 & 63.28 & 66.60 & 69.92 & 0.5244 \\
Absorbing-PE+Skip & \textbf{49.22} & \textbf{70.90} & \textbf{74.61} & \textbf{77.93} & \textbf{84.96} & \textbf{0.5952} \\
\end{tabular}
\caption{Performance comparison of the PE+Skip models with different transition matrices, evaluated with 512 samples form the USPTO-50k test set and 10\% of the full training budget.}
\label{tab:app_transition_ablation}
\end{table}

\paragraph{Comparison to only using the skip connections.} Here, we provide a comparison to a model that only uses the skip connections, and no other alignment method. The results are again shown for roughly 10\% of the training budget (epoch (80) for the PE+Skip model and for the Skip-only model in \cref{tab:app_skip_ablation}. It is evident that only having the skip connection does not work well, although it is better than an entirely unaligned model (see \cref{tab:50k-extended}). This is likely due to the skip-only model having limited expressivity: The neural net does not get the alignment information in the input at all, and thus is unable to use it apart from the skip connection, which enables literally copying the $Y$ graph from input to output.

\begin{table}
\centering
\begin{tabular}{l|cccccc}
Model & Top-1 & Top-3 & Top-5 & Top-10 & Top-50 & MRR \\
\hline
DiffAlign-Skip & 9.77 & 17.38 & 24.61 & 29.30 & 32.23 & 0.1517 \\
DiffAlign-PE+Skip (Laplacian) & \textbf{49.22} & \textbf{70.90} & \textbf{74.61} & \textbf{77.93} & \textbf{84.96} & \textbf{0.5952} \\
\end{tabular}
\caption{Performance comparison for a model with the only alignment method being skip connections and a model where we also use the Laplacian positional encodings, evaluated with 512 samples form the USPTO-50k test set and 10\% of the full training budget}
\label{tab:app_skip_ablation}
\end{table}

\section{Adding Post-Training Conditioning to Discrete Diffusion Models}
\label{app:post-training-conditioning}
In this section, we show a method to add additional controls and conditions on the used discrete diffusion model post-training. Note that this approach differs from \citep{digress}'s in that our method is an adaptation of reconstruction guidance \citep{ho2022video, chung2022diffusion, song2023pseudoinverse}, while they adapt classifier-guidance \citep{nonequilibrium-thermodynamics, dhariwal2021diffusion, song-ddpm-sde} for their conditional model. While the notation is from the point of view of our graph-to-graph translation, the method here applies in general to any discrete diffusion model. We start out by  following \citep{digress} and \citep{nonequilibrium-thermodynamics, dhariwal2021diffusion}, and then make the novel connection to reconstruction guidance. We write Bayes' rule for an additional condition $y$ (e.g., a specified level of drug-likeness or synthesizability, or an inpainting mask)
\begin{align}
    p_\theta(\X_{t-1}\mid\X_t, \Y, y) &\propto p(y \mid \X_{t-1}, \X_t, \Y) p_\theta(\X_{t-1}\mid\X_t, \Y)\\
    &= p(y \mid \X_{t-1}, \Y) p_\theta(\X_{t-1}\mid\X_t, \Y)
\end{align}
where the second equation was due to the Markovian structure of the generative process ($\X_{t-1}$ d-separates $y$ and $\X_t$). Now, we can take the log and interpret the probabilities as tensors $\P_\theta(y \mid\X_{t-1}, \Y)$ and $\P_\theta(\X_{t-1}\mid\X_t, \Y)$ defined in the same space as the one-hot valued tensors $\X_{t-1}$ and $\X_t$. We get:
\begin{align}
    \log \P_\theta(\X_{t-1}\mid\X_t, \Y, y) \propto \log \P(y \mid\X_{t-1}, \Y) + \log\P_\theta(\X_{t-1}\mid\X_t, \Y)\label{eq:app_additional_conditioning_log_basic}
\end{align}
Similarly to \citep{digress}, we can now Taylor expand $\log \P_\theta(y \mid\X_{t-1}, \Y)$ around $\X_t$ with 
\begin{align}
    \log \P(y \mid \X_{t-1}, \Y) \approx \log \P(y \mid\X_{t}, \Y) + \nabla_{\X_t'} \log \P(y \mid\X_{t}', \Y)|_{\X_t'=\X_t}(\X_{t-1} - \X_t)
\end{align}
Given that we are interested in the distribution w.r.t. $\X_{t-1}$, the $\X_t$ terms are constant when we plug them in to \cref{eq:app_additional_conditioning_log_basic}, resulting in
\begin{align}
    \log \P_\theta(\X_{t-1}\mid\X_t, \Y, y) \propto \nabla_{\X_t'} \log \P(y \mid\X_{t}', \Y)|_{\X_t'=\X_t}\X_{t-1} + \log\P_\theta(\X_{t-1}|\X_t, \Y)
\end{align}
Simplifying the notation to assume taking the gradient at $\X_t$, we can also write 
\begin{align}
    \log \P_\theta(\X_{t-1}\mid\X_t, \Y, y) \propto \nabla_{\X_t} \log \P_\theta(y \mid\X_{t}, \Y)\X_{t-1} + \log\P_\theta(\X_{t-1}\mid\X_t, \Y)
\end{align}
In practice, the equation means that given $\log p_\theta(y \mid\X_{t}, \Y)$, we get $p_\theta(\X_{t-1}\mid\X_t, \Y, y)$ by adding the input gradient of $\log p_\theta(y \mid\X_{t}, \Y)$ to the logits given by the regular reverse transition and re-normalizing. 

It would be possible to approximate $\log p_\theta(y \mid\X_{t}, \Y)$ by training an additional classifier, leading to classifier guidance \citep{nonequilibrium-thermodynamics,dhariwal2021diffusion, song-ddpm-sde} and the exact method presented in \citep{digress}. We go further by adapting these formulas to reconstruction guidance \citep{ho2022video, chung2022diffusion, song2023pseudoinverse}. These methods and more advanced versions \citep{dou2024diffusion, finzi2023user, boys2023tweedie, wu2024practical, peng2024improving} provide different levels of approximations of the true conditional distribution. Here, we show an approach particularly similar to \citep{chung2022diffusion}, by approximating the true denoising distribution $p(\X_0|\X_t,\Y)$ directly with the denoiser output
\begin{align}
    p(y\mid\X_t, \Y) &= \sum_{\X_0} p(\X_0\mid\X_t, \Y) p(y\mid\X_0)\\
    &\approx \sum_{\X_0} \tilde p_\theta(\X_0\mid\X_t, \Y) p(y\mid\X_0).
\end{align}
Here $p(\X_0\mid\X_t, \Y)$ is the true, intractable distribution given by going through the entire sampling process and $\tilde p(\X_0\mid\X_t, \Y)$ is the factorized distribution that is given by the denoiser output, and 'jumps to' $X_0$ directly. This results in the following update step:
\begin{align}
    \log \P_\theta(\X_{t-1}\mid\X_t, \Y, y) \propto \nabla_{\X_t} \log \left(\mathbb{E}_{\tilde p_\theta(\X_0\mid\X_t, \Y)} p(y\mid\X_0)\right)\X_{t-1} + \log\P_\theta(\X_{t-1}\mid\X_t, \Y).
\end{align}
Summing over all possible graphs $\X_0$ is prohibitive, however. Instead, we could sample $\X_0$ from $\tilde p_\theta(\X_0\mid\X_t, \Y)$ with the Gumbel-Softmax trick \citep{jang2016categorical} and evaluate $\log p(y\mid\X_0)$. As long as $\log p(y\mid\X_0)$ is differentiable, we can then just use automatic differentiation to get our estimate of $\nabla_{\X_t} \log \P_\theta(y \mid\X_{t}, \Y)$. Another, more simplified approach that avoids sampling from $\tilde p_\theta(\X_0\mid\X_t,\Y)$ is to relax the definition of the likelihood function to directly condition on the continuous-valued probability vector $\tilde \P_\theta(\X_0\mid\X_t,\Y)$ instead of the discrete-valued $\X_0$. For simplicity, we adopted this approach, but the full method with Gumbel-Softmax is not significantly more difficult to implement. The algorithm for sampling is shown in \cref{algo:sampling_with_atom_guidance}.

\paragraph{Toy Synthesisability Model: Controlling Atom Economy}






The model of synthesizability that we use is
\begin{align}
    \tilde\X_0 &= p_\theta(\X_0\mid\X_t, \Y),\\
    p(y=\textrm{synthesizable}\mid\tilde\X_0) &= \sigma (\frac{\sum_{i\in S} \tilde\X_{0,i,d} - a}{b})^\gamma,
\end{align}
where $S$ is the set of non-atom-mapped nodes, $a$, $b$ and $\gamma$ are constants that define the synthesizability model and $d$ refers to the dummy node index. The intuition is that the more nodes are classified as dummy nodes (non-atoms), the fewer atoms we have in total, leaving the atom economy higher. Note that $\sum_{i\in S} \tilde\X_{0,i,d}$ is the expected amount of dummy nodes from $p_\theta(\X_0\mid\X_t, \Y)$. We set $a$ to half the amount of dummy nodes and $b$ to one-quarter of the amount of dummy nodes. It turns out that this leaves $\gamma$ as a useful parameter to tune the sharpness of the conditioning. The gradient estimate is then given by
\begin{align}
    \nabla_{\X_t} \log \P_\theta(y \mid\X_{t}, \Y) = \gamma \nabla_{\X_t} \log \sigma (\frac{\sum_{i\in S} \tilde\X_{0,i,d} - a}{b}),
\end{align}
which can be directly calculated with automatic differentiation.




\begin{figure}[t!]
\centering
\begin{minipage}{1.0\textwidth}
    \resizebox{\textwidth}{2cm}{ 
        \begin{minipage}{\textwidth}
            \begin{algorithm}[H]
                \caption{Sampling with atom-count guidance}
                \label{algo:sampling_with_atom_guidance}
                \begin{algorithmic}
                   \State {\bfseries Input:} Product $\Y$
                   \State {\bfseries Choose:} $P^{\Y\to\X}\in\mathbb{R}^{N_\X\times N_\Y}$ 
                   \State $\X_T \propto p(\X_T)$  
                   \For{$t = T$ {\bfseries to} $1$}
                        \State $\tilde \X_0 = D_\theta(\X_t, \Y, P^{\Y\to\X})$ \Comment{Denoising output}
                        \State $\tilde \X_{t-1}^{i} = \sum_{k} q(\X_{t-1}^i\mid\X_t^i,\X_0^i) \tilde \X_0^i$ \Comment{Regular reverse transition probabilities}
                        \State $\X_{t-1}^{i} \sim \textrm{softmax}(\log \tilde \X_{t-1} + \gamma \nabla_{\X_t} \log \sigma (\frac{\sum_{i\in S} \tilde\X_{0,i,d} - a}{b}))$ \Comment{Renormalize}
                   \EndFor
                    \State \bfseries Return $\X_0$ 
                \end{algorithmic} 
            \end{algorithm}
        \end{minipage}
    }
\end{minipage}
\end{figure}

\section{Handling noisy node mappings for the toy data}

\cref{sec:copying_graphs_exp} introduced the graph copying as a toy data set. Here, we extend it to analyse the effect of imperfect node mapping during training. In particular, we consider a scheme for adding 'noise' to the node mapping by swapping them with each other on either the conditioning, or equivalently, the other side. We do it as follows:
\begin{enumerate}
    \item Sample the number of pair swaps: $f \sim \text{Poisson}(\lambda)$
    \item For each of the $f$ swaps, randomly select two nodes $i,j$ in the graph $A$ and swap their mappings
\end{enumerate}
The Poisson parameter $\lambda$ controls the expected number of swaps, allowing us to tune the level of noise in the node mapping. We trained small models for a 1000 steps of training, and show the results with different noise levels in \cref{fig:noisy_am_grid_comparison}. Small errors in the node mapping are evidently not a significant issue, but larger $\lambda$ does start affecting performance, at least in this early training phase. We also include a measure of similarity to the target graph: The precision is the fraction of edges that were correctly inferred, compared to the ground-truth target.

\begin{figure}
    \centering
    \includegraphics[width=0.9\linewidth]{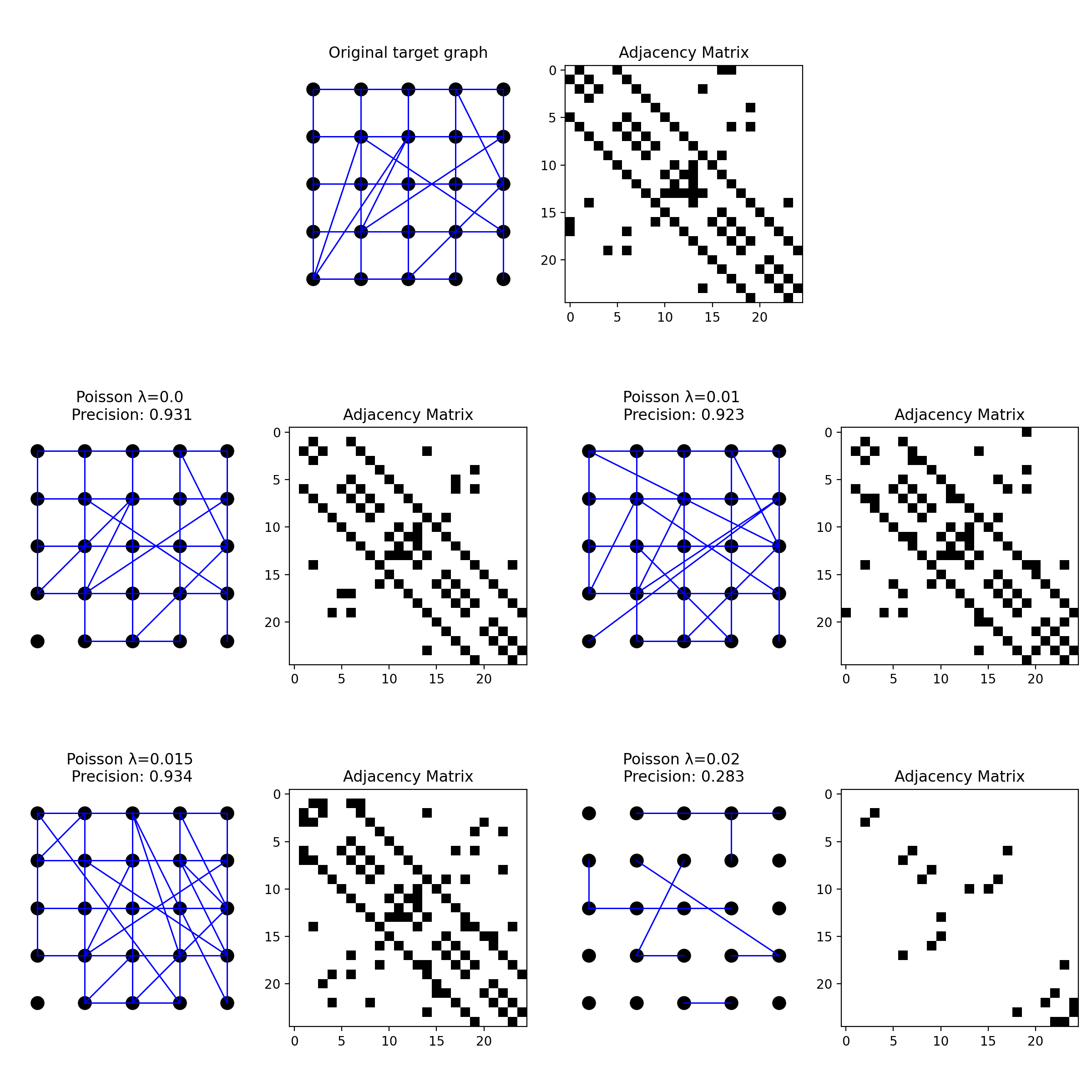}
    \caption{Results from training an aligned model for a 1000 steps in the graph copying task presented in \cref{sec:copying_graphs_exp}, with different levels of noise in the atom mappings. $\lambda$ parameterizes a Poisson distribution that defines the level of mistakes in the node mapping. On lower noise levels, the results do not significantly change. However, there seems to be a limit in which the neural net training dynamics changes to produce suboptimal results.}
    \label{fig:noisy_am_grid_comparison}
\end{figure}

\begin{figure}
    \centering
    \includegraphics[width=0.6\linewidth]{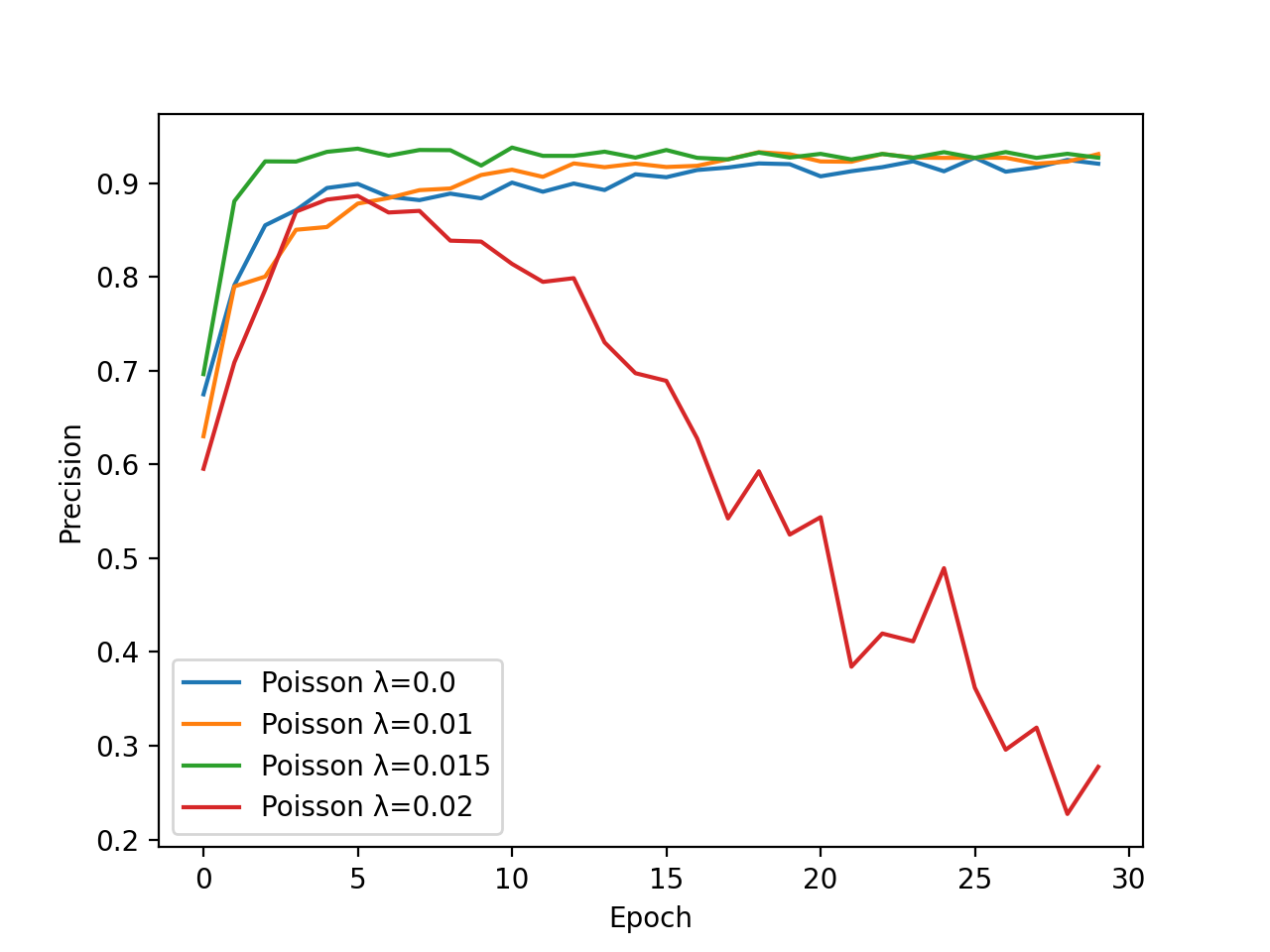}
    \caption{The precision of the adjacency matrix estimates from an aligned model across training, with different levels of node mapping noise $\lambda$. Going above $\lambda>0.02$ nudges the training dynamics to perform poorly on this task.}
    \label{fig:noisy_am_precision}
\end{figure}

\section{Details for the ibuprofen synthesis experiment}\label{app:ibuprofen-synthesis}

Below we explain the synthesis steps visualized in \cref{fig:ibuprofen_synthesis} in detail:
\begin{enumerate}
    \item The retrosynthesis begins with carbonylation, adding a 'CO' structure. Initial basic generation yields unpromising results, so the practitioner suggests a partial reactant structure, leading to a more viable path.
    \item The model then proposes hydrogenation, a logical next step. While H2 molecules aren't explicitly represented in the data, they're inferred from the context of the reaction.
    \item For the third step, the practitioner identifies an opportunity for acylation (involving the \texttt{C(=O)C} group), potentially leading to readily available reactants. Given this \texttt{C(=O)C} structure, the model successfully completes the reaction. These steps align with the synthesis plan proposed by BHC.
    \item The retrosynthesis path starts with carbonylation, a well-known synthetic reaction which adds a ‘CO’ structure to a compound. The practitioner tries basic generation but then notices that the suggested reactant is not promising. They then suggest a partial structure of the reactants which leads to a more sensible path. 
    \item Next, our model proposes hydrogenation, which is a sensible suggestion in this case. The data does not handle explicitly individual \texttt{H2} molecules, but they are inferred from the context.
    \item In the third step, the practitioner notices that an acylation reaction (a reaction with the \texttt{C(=O)C} group) might lead to reactants that are readily available. The model is able to complete the rest of the reaction after knowing that C(=O)C is present. These steps match the synthesis plan proposed by BHC.
\end{enumerate}

\end{document}